\definecolor{delim}{RGB}{20,105,176}
\definecolor{numb}{RGB}{106, 109, 32}
\definecolor{string}{rgb}{0.64,0.08,0.08}
\newtheorem{theorem}{Theorem}
\newtheorem{lemma}{Lemma}
\newtheorem{corollary}[theorem]{Corollary}
\newtheorem{assumption}{Assumption}
\newtheorem{remark}{Remark}
\DeclareMathOperator*{\argmax}{arg\,max}
\title{AdaDetectGPT: Adaptive Detection of LLM-Generated Text with Statistical Guarantees}
\author{%
  Hongyi~Zhou$^*$ \\
  Department of Mathematics\\
  Tsinghua University\\
  Beijing, China \\
  \And
  Jin~Zhu\thanks{Hongyi~Zhou and Jin~Zhu contributed equally to this paper and are listed in alphabetical order.} \\
  School of Mathematics \\
  University of Birmingham \\
  Birmingham, UK \\
  \And
  Pingfan Su \\
  Department of Statistics \\
  LSE  \\
  London, UK \\
  \And
  Kai Ye \\
  Department of Statistics \\
  LSE  \\
  London, UK \\
  \And
  Ying Yang \\
  Department of Statistics and Data Science\\
  Tsinghua University\\
  Beijing, China \\
  \And
  Shakeel A O B Gavioli-Akilagun$^{\dagger}$ \\
  Department of Decision Analytics and Operations \\
  City University Hong Kong \\
  Hongkong, China 
  \And
  Chengchun Shi\thanks{Corresponding authors: \texttt{sgavioli@cityu.edu.hk}, \texttt{c.shi7@lse.ac.uk}} \\
  Department of Statistics \\
  LSE  \\
  London, UK
}
\begin{document}

\maketitle


\begin{abstract}
We study the problem of determining whether a piece of text has been authored by a human or by a large language model (LLM). Existing state of the art logits-based detectors make use of statistics derived from the log-probability of the observed text evaluated using the distribution function of a given source LLM. However, relying solely on log probabilities can be sub-optimal. In response, we introduce AdaDetectGPT -- a novel classifier that adaptively learns a witness function from training data to enhance the performance of logits-based detectors. We provide statistical guarantees on its true positive rate, false positive rate, true negative rate and false negative rate. Extensive numerical studies show AdaDetectGPT nearly uniformly improves the state-of-the-art method in various combination of datasets and LLMs, and the improvement can reach up to 37\%. A python implementation of our method is available at \url{https://github.com/Mamba413/AdaDetectGPT}.
\end{abstract}

\section{Introduction}

Large language models (LLMs) such as ChatGPT \citep{openai2022chatgpt}, PaLM \citep{chowdhery2023palm}, Llama \citep{grattafiori2024llama} and DeepSeek \citep{bi2024deepseek} have revolutionized the field of generative artificial intelligence by enabling large-scale content generation across various fields including journalism, education, and creative writing \citep{demszky2023using, milano2023large, anil2024generativeAI}. However, their ability to produce highly human-like text poses serious risks, such as the spread of misinformation, academic dishonesty, and the erosion of trust in written communication \citep{ahmed2021detecting, lee2023plagiarize, christian2023cnet}. Consequently, accurately distinguishing between human- and LLM-generated text has emerged as a critical area of research. 

There is a growing literature on the detection of machine-generated text; refer to Section \ref{sec:relatedworks} for a review. One popular line of research focuses on statistics-based detectors, typically rely on log-probability outputs (i.e., logits) from a source LLM to construct the statistics for classification \citep[see e.g.,][]{gehrmann2019gltr,mitchell2023detectgpt}. These works are motivated by the empirical observation that LLM-generated text tends to exhibit higher log-probabilities or larger differences between the logits of original and perturbed tokens. However, as we demonstrated in Section \ref{sec:method}, relying solely on the logits can be sub-optimal for detecting LLM-generated text.

\textbf{Our contribution}. In this paper, we propose AdaDetectGPT (see Figure \ref{fig:detectgpt} for a visualization), an adaptive LLM detector that leverages external training data to enhance the effectiveness of existing logits-based detectors. Our approach derives a lower bound on the true negative rate (TNR) of logits-based detectors and adaptively learns a witness function by optimizing this bound, resulting in a more powerful detection statistic. The optimization is straightfoward and requires only solving a system of linear equations. Based on this statistic, we further introduce an approach to select the classification threshold that controls AdaDetectGPT's false negative rate (FNR).

Empirically, we conduct extensive evaluations across multiple datasets and a variety of target language models to demonstrate that AdaDetectGPT consistently outperforms existing logits-based detectors. In white-box settings -- where the target LLM to be detected is the same as the source LLM used to compute the logits -- AdaDetectGPT achieves improvements over the best alternative in area under the curve (AUC) ranging from 12.5\% to 37\%. In black-box settings, where the source and target LLMs differ, it similarly offer gains of up to 20\%.

Theoretically, we provide statistical performance guarantees for AdaDetectGPT, deriving finite-sample error bounds for its TNR, FNR, true positive rate (TPR) and false positive rate (FPR). Existing literature on logits-based detectors generally lacks systematic statistical analysis.  Our work aims to fill in this gap and contribute toward a deeper understanding of these methods in this emerging field, by offering a comprehensive analysis based on the aforementioned standard classification metrics. 

\begin{figure}
    \vspace*{-17pt}
    \centering
    \includegraphics[width=\linewidth]{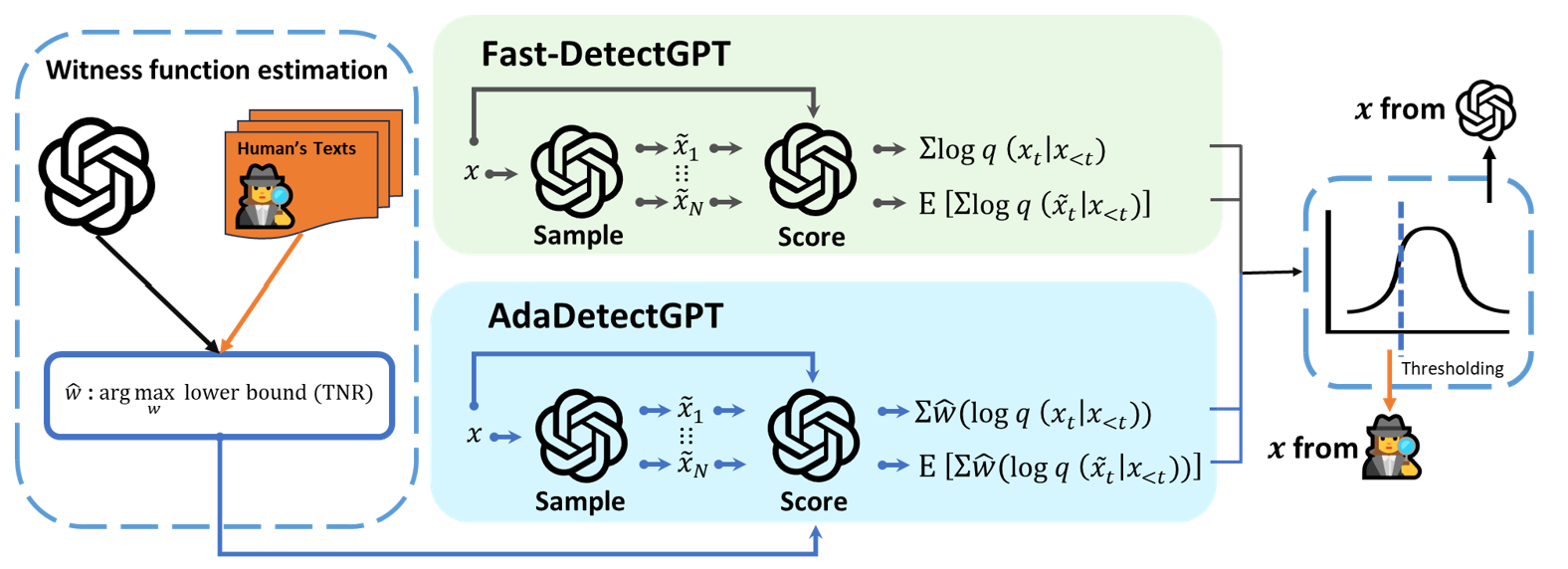}
    \vspace*{-12pt}
    \caption{Workflow of AdaDetectGPT. Built upon Fast-DetectGPT \citep{bao2024fastdetectgpt}, our method adaptively learn a witness function $\widehat{w}$ from training data by maximizing a lower bound on the TNR, while using normal approximation for FNR control. }
    \label{fig:detectgpt}
\end{figure}

\subsection{Related works}\label{sec:relatedworks}
Existing methods for detecting machine-generated text generally fall into three categories: machine learning (ML)-based, statistics-based, and watermarking-based; see \citet{yang2024survey,wu2025survey} for recent comprehensive surveys. Our method is most closely related to the first two categories and, unlike the third, does not rely on knowledge of the specific hash function or random number generator used during token generation, which are often model-specific and not publicly available. In what follows, we review the first two categories and defer the discussion of watermarking-based approaches to Appendix~\ref{sec:related-work-watermarking}. 

\textbf{ML-based detection}. ML-based methods train classification models on external human- and machine-authored text for detection. Many methods can be further categorized into two types. The first type extracts certain features from text and apply classical ML models to train classifiers based on these features. Various features have been proposed in the literature, ranging from classical term frequency-inverse document frequency (TF-IDF), unigram, and bigram features \citep{solaiman2019release}, to more complex features engineered specifically for this task, such as the cross-entropy loss computed between the source text and a surrogate LLM \citep{guo2024biscope} and the rewriting-based measure that quantifies the difference between original texts and their LLM-rewritten versions \citep{mao2024raidar}. 

The second type of methods fine-tune LLMs directly for classification. This approach is intuitive, as LLMs are inherently designed for processing text data; we only need to modify the model’s output to predict a binary label rather than token probabilities. Various LLMs have been employed for fine-tuning, including RoBERTa  \citep{solaiman2019release,guo2023close}, BERT \citep{ippolito2020bert} and DistilBERT \citep{mitrovic2023chatgpt}. 

In addition to these two types of methods, \citet{abburi2023generative} propose a hybrid approach that uses the outputs of fine-tuned LLMs as input features for classical ML-based classification. Further efforts have focused on handling adversarial attacks \citep{crothers2022adversarial, krishna2023paraphrasing, koike2024outfox, sadasivan2025can}, short texts \citep{tian2024multiscale}, out-of-distribution texts \citep{guo2024detective}, unobserved prompts \citep{zhou2026learn}, biases against non-native English writers \citep{liang2023gpt}, accommodating statistical inference \citep{zhou2026detecting}, as well as the downstream applications of these methods in domains such as education, social media and medicine \citep{herbold2023large,kumarage2023stylometric,liao2023differentiating}.

\textbf{Statistics-based detection}. Statistics-based methods leverage differences in token-level metrics such as log-probabilities to distinguish between human- and machine-authored text. Unlike ML-based approaches, many of these methods do not rely on external training data; instead, they directly use predefined statistical measures as classifiers.  
In particular, a seminal work by \citet{gehrmann2019gltr} propose several such measures, including the average log-probability and the distribution of absolute ranks of probabilities of tokens across a text. 
These measures exhibit substantial differences between human- and machine-authored text, and they have been widely employed and extended in the literature \citep[see e.g.,][]{mitchell2023detectgpt, su2023detectllm, bao2024fastdetectgpt, hans2024spotting}. 

Other statistical measures employed are calculated based on the N-gram distributions \citep{solaiman2019release,yang2024dnagpt}, the intrinsic dimensionality of text \citep{tulchinskii2023intrinsic}, the reward model used in LLMs \citep{lee2024remodetect} and the maximum mean discrepancy  \citep{zhang2024detecting, song2025deep}. Recent works have extended these methods to more challenging scenarios, such as to handle adversarial attacks \citep{hu2023radar}, machine-revised text \citep{chen2025imitate} and black-box settings \citep{yu2024dpic,zeng2024dlad}. Theoretically, \citet{chakraborty2024position} establish a sample complexity bound for detecting machine-generated text. 

To conclude this section, we remark that our proposal lies at the intersection between statistics- and ML-based methods. Similar to many statistics-based approaches, our classifier is constructed based on the log-probabilities. However, we adaptively learn a witness function via ML to improve its effectiveness. In this way, our method leverages the strengths of both approaches, leading to superior detection performance. 

\section{Preliminaries}\label{sec:preliminary}
We first define the white-box and black-box settings as well as our objective. We next review two baseline methods, DetectGPT \citep{mitchell2023detectgpt} and Fast-DetectGPT \citep{bao2024fastdetectgpt}, as they are closely related to our proposal. Finally, we introduce the martingale central limit theorem \citep[see e.g.,][]{hall2014martingale}, which serves as the theoretical basis for our threshold selection. 

\textbf{Task and settings}. We study the problem of determining whether a given passage $\bm{X}$, represented as a sequence of tokens $(X_1, X_2, \ldots, X_L)$, was authored by a human or generated by a target LLM. Specifically, let $p$ and $q$ denote the distributions over human-written and LLM-generated tokens, respectively. Each distribution can be represented as a product of conditional probability functions, $\prod_t p_t$ for humans and $\prod_t q_t$ for the target LLM, where each \( p_t(x_{t}| x_{< t}) \) (and similarly \( q_t \)) denotes the conditional probability mass function of the next token \( x_{t} \) given the preceding tokens, where \( x_{< t}\coloneqq(x_1, x_2, \ldots, x_{t-1}) \) when $t > 1$ and $ x_{< t} \coloneqq \emptyset$ otherwise.  Our goal is to develop a classifier to discriminate between $\bm{X} \sim p$ (human) and $\bm{X} \sim q$ (LLM). 

We assume access to a source LLM's probability distribution function $q'=\prod_t q_t'$. When $q'=q$, it corresponds to the \textit{white-box setting} where the source model we have is the same as the target model we wish to detect. This is the primary setting considered in this paper. For closed-source LLMs such as GPT-3.5 and GPT-4, their probability functions are not publicly available. In such cases, we utilize an open-source model with distribution $q'$ as an approximation of $q$, resulting in the \textit{black-box setting}, which our method is also extended to handle.

We also assume access to a corpus of $n$ human-authored passages $\mathcal{H}=\{\bm{X}^{(i)}\}_{i=1}^n\sim p$. 
This assumption is reasonable, as large corpora of human-written text are readily available online (e.g., Wikipedia). 
Without loss of generality, we assume that all passages have the same number of tokens $L$, achieved by zero-padding shorter sequences to match the maximum token length. Throughout this paper, we use boldface letters (e.g., $\bm{X}$) to denote passages, and non-boldface letters (e.g., $X$) to denote individual tokens. 

\textbf{Baseline methods}. Both DetectGPT and Fast-DetectGPT are statistics-based and rely on the log-probability of a passage, $\log q'(\bm{X})$, as the basis for classification. Specifically, DetectGPT considers the following statistic:
\begin{equation}\label{eqn:detectstatistics}
    \frac{\log q'(\bm{X})-\mathbb{E}_{\widetilde{\bm{X}}\sim p'(\bullet|\bm{X})}[\log q'(\widetilde{\bm{X}})] }{\sqrt{\textrm{Var}_{\widetilde{\bm{X}}\sim p'(\bullet|\bm{X})}(\log q'(\widetilde{\bm{X}}))}},
\end{equation}
where both the expectation in the numerator and the variance in the denominator are evaluated under a perturbation function $p'$, which produces $\widetilde{\bm{X}}$ that is a slightly modified version of $\bm{X}$ with similar meaning. The rationale behind this statistic is that, empirically, machine-generated text tends to yield higher values than human-written text when evaluated using \eqref{eqn:detectstatistics} \citep[][Figure 2]{mitchell2023detectgpt}. As a result, a passage is classified as machine-generated if this statistic is larger than a certain threshold. 

A potential limitation of DetectGPT is that sampling from the perturbation distribution $p'$ requires multiple calls to the source LLM to generate rewritten versions of the input passage, making the calculation of \eqref{eqn:detectstatistics} computationally expensive. Fast-DetectGPT addresses this issue by proposing a modified version of \eqref{eqn:detectstatistics}, given by 
\begin{eqnarray}\label{eqn:fastdetectstatistics}
    \frac{\sum_t \log  q_t'(X_t| X_{<t})-\sum_t \mathbb{E}_{\widetilde{X}_t \sim s_t(\bullet | X_{<t})} \log q_t'(\widetilde{X}_t | X_{<t})}{\sqrt{\sum_t \textrm{Var}_{\widetilde{X}_t \sim s_t(\bullet | X_{<t})} (\log q_t'(\widetilde{X}_t | X_{<t}))}}.
\end{eqnarray}
Specifically, notice that \( \log q'(\bm{X}) \) can be decomposed as a token-wise sum \( \sum_t \log q_s(X_t | X_{<t}) \). Thus, the first term in the numerator of \eqref{eqn:detectstatistics} is the same as that in \eqref{eqn:fastdetectstatistics}. However, Fast-DetectGPT replaces the centering term in \eqref{eqn:detectstatistics} with $\sum_t \mathbb{E}_{\widetilde{X}_t \sim s_t(\bullet | X_{<t})} \log q_t'(\widetilde{X}_t | X_{<t})$. Here, $s=\prod_t s_t$ denotes a sampling distribution function which may equal $q$ or be derived from another LLM. By replacing the perturbation function with $s$, the centering term can be efficiently computed directly from the LLM's conditional probabilities. Additionally, due to the conditioning on $X_{<t}$ in the centering term, the variance term is equal to the sum of the conditional variances of \( \log q_t'(\widetilde{X}_t| X_{<t}) \). Finally, it classifies a passage as machine-generated if this modified statistic is larger than a certain threshold.  

\textbf{Martingale central limit theorem}. The martingale central limit theorem (MCLT) is a fundamental result in probability theory that enables rigorous statistical inference for time-dependent data. It is well-suited for analyzing text data where tokens are generated sequentially given their predecessors. Consider a time series $\{Z_t\}_{t}$ where each $Z_t$ represents a real-valued random variable. Suppose there exists a sequence of monotonically increasing sets of random variables $\mathcal{F}_1\subseteq \mathcal{F}_2 \subseteq \cdots$ so that $Z_t\in \mathcal{F}_t$ for any $t$. Under certain regularity conditions, MCLT states that the normalized partial sum 
\begin{equation}\label{eqn:partialsum}
    \frac{\sum_{t=1}^L [Z_t-\mathbb{E}(Z_t|\mathcal{F}_{t-1})]}{\sqrt{\sum_{t=1}^L\textrm{Var}(Z_t|\mathcal{F}_{t-1})}}
\end{equation}
converges in distribution to a standard normal random variable as $L$ approaches infinity \citep{brown1971martingale}. Notice that the statistic in \eqref{eqn:partialsum} shares similar structures with that employed in Fast-DetectGPT (see \eqref{eqn:fastdetectstatistics}). In the next section, we will leverage this connection for FNR control through normal approximation.

\section{AdaDetectGPT}\label{sec:method}
We present AdaDetectGPT in this section. We begin by discussing the white-box setting in Parts~\hyperlink{method-a}{(a)}--\hyperlink{method-c}{(c)}: Part~\hyperlink{method-a}{(a)} introduces the proposed statistical measure; Part \hyperlink{method-b}{(b)} discusses our choice of the classification threshold for FNR control; Part \hyperlink{method-c}{(c)} derives a lower bound on the TNR to learn the witness function. Next, in Part \hyperlink{method-d}{(d)}, we extend our proposal to the black-box setting. Finally, in Part \hyperlink{method-e}{(e)}, we establish the statistical properties of AdaDetectGPT. 

\hypertarget{method-a}{\textbf{(a) Statistical measure}}. Notice that $q'=q$ under the white-box setting. Given a passage $\bm{X}$, the proposed classifier is built upon the following statistic:
\begin{equation}\label{eqn:Tw}
    T_w(\bm{X}) \coloneqq\frac{\sum_t [w(\log  q_t(X_t| X_{<t}))-\mathbb{E}_{\widetilde{X}_t \sim q_t(\bullet | X_{<t})} w(\log q_t(\widetilde{X}_t | X_{<t}))]}{\sqrt{\sum_t \textrm{Var}_{\widetilde{X}_t \sim q_t(\bullet | X_{<t})} (w(\log q_t(\widetilde{X}_t | X_{<t})))}},
\end{equation}
where $w:\mathbb{R}\to \mathbb{R}$ denotes a one-dimensional witness function defined over the space of log-probabilities.

By definition, $T_w(\bm{X})$ is very similar to Fast-DetectGPT's statistic in \eqref{eqn:fastdetectstatistics}, with two modifications: (i) First, rather than using the raw log conditional probability $q_t$, we apply a witness function $w$ to these $\log q_t$ to enhance the detection power of the resulting classifier. Our numerical experiments demonstrate that this transformation better distinguishes between human- and machine-authored text (see Figure~\ref{fig:separate}). Below, we further provide a simple analytical example to illustrate this power enhancement. (ii) Second, we set the the sampling function $s$ in \eqref{eqn:fastdetectstatistics} to the source LLM's $q$. This allows the numerator to match the form of the partial sum in \eqref{eqn:partialsum}, which enables the application of MCLT for FNR control. 

\begin{figure}[t]
    \centering
    \includegraphics[width=0.49\linewidth]{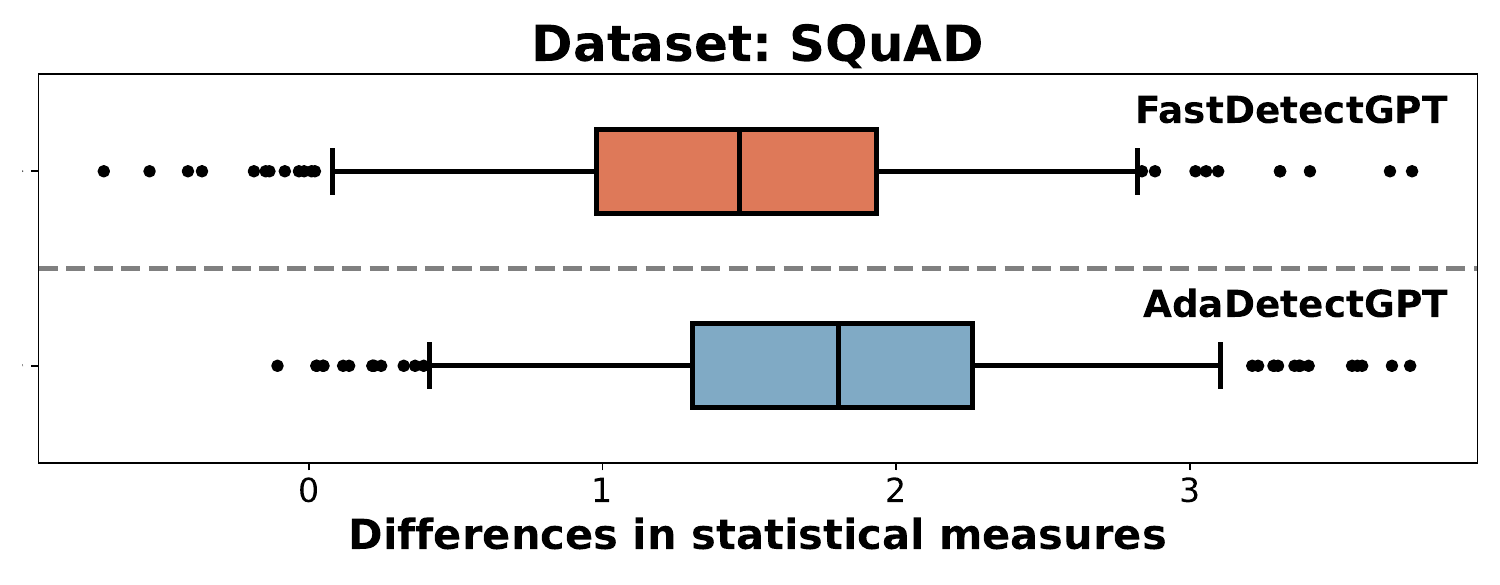}
    \includegraphics[width=0.5\linewidth]{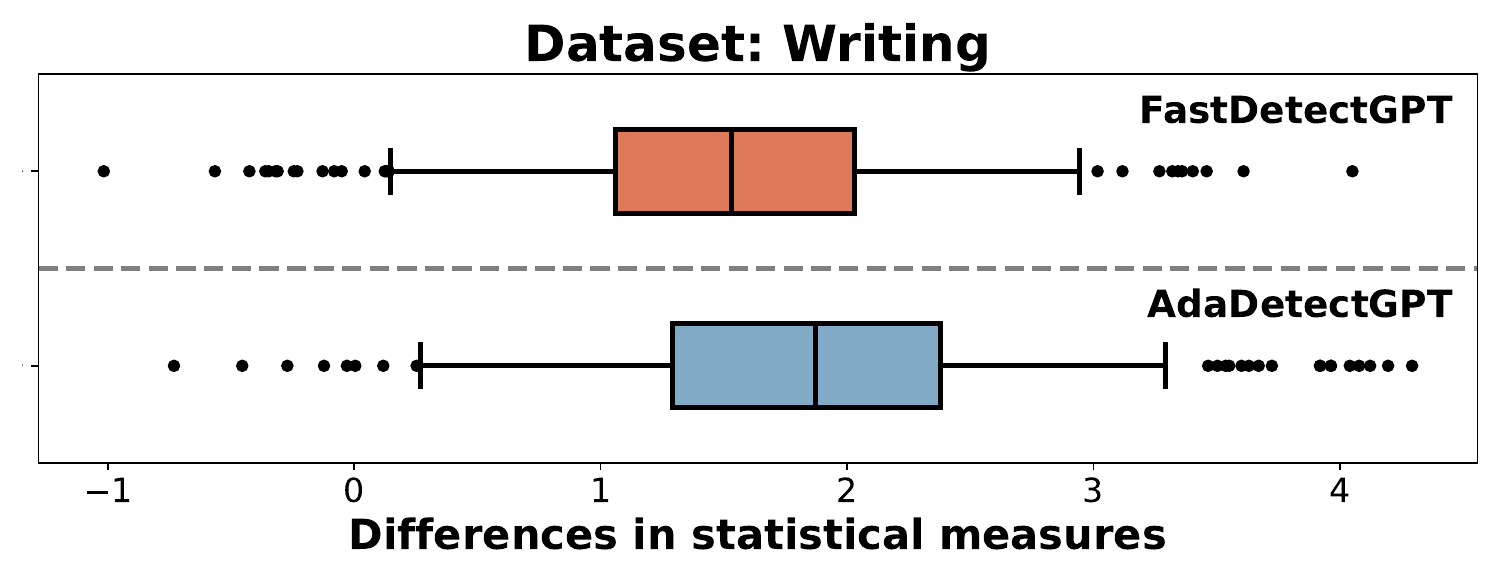}
    \vspace*{-10pt}
    \caption{Boxplots visualizing the differences in the statistical measures between human- and LLM-authored passages, comparing AdaDetectGPT (with a learned witness function) and Fast-DetectGPT (without it). A larger positive difference from zero indicates better detection power. As observed, the difference computed by AdaDetectGPT is consistently larger than that of Fast-DetectGPT across the first quartile, median, and third quartile. The left panel shows statistics evaluated on the SQuAD dataset, while the right panel displays results for the WritingPrompts dataset.}
    \label{fig:separate}
\end{figure}

\ul{\textit{An analytical example}}. Consider a hypothetical ``Kingdom of Bit'' where all communication uses just two tokens: 1 (yes) and 0 (no). Let $\{p_t\}_t$ denote the true token distribution of this language. Suppose a malicious wizard creates synthetic citizens who appear similar to ordinary people, but their language follows a simpler distribution $q_t(x_t|x_{<t})=q(x_t)$ for some fixed function $q$, being independent of the prior context $x_{<t}$. As we will show later, the detection power of our statistic crucially depends on the following quantity:
\begin{equation}\label{eqn:examplestatistics}
    \frac{1}{L}\sum_{t=1}^L \big[ \mathbb{E}_{\widetilde{X}_t \sim q} w(\log q(\widetilde{X}_t)) -   \mathbb{E}_{X_t \sim p}w(\log q(X_t)) \big].
\end{equation}
The greater the deviation of the expression~\eqref{eqn:examplestatistics} from zero, the higher the power to detect synthetic humans. When setting $w$ to the identity function, this expression reduces to
\begin{align*}
    \log\left( \frac{q(1)}{q(0)} \right) \times \left[ q(1) - \frac{1}{L}\sum_{t=1}^L  p_t(1) \right],
\end{align*} 
where $p_t(1)=\mathbb{E}_{X_{<t} \sim p} p(1|X_{<t})$ is determined by the true language distribution $p$. In this case, \eqref{eqn:examplestatistics} converges to zero as $q(1)\to 1/2$, regardless of the difference between $q(1)$ and the average of $p_t(1)$. As such, there are simple settings in which existing logits-based detectors with an identity witness function will struggle to distinguish between human and machine authored text, independent of the actual distance between $p$ and $q$. However, for any $q(1) \neq 1/2$, there exists a function $w$ that makes \eqref{eqn:examplestatistics} equal to $q(1) - L^{-1}\sum_{t=1}^L [p_t(1)]$, independent of the log ratio (see Appendix \ref{section: discussion about witness function} for a formal proof). Thus, whenever $q(1)$ differs from the average $p_t(1)$, an appropriate transformation $w$ can reliably detect synthetic humans. 

We will discuss how to properly learn the witness function below. Since this is inherently a classification problem, it is natural to learn a witness function that maximizes the AUC of the resulting detector. Equivalently, this amounts to finding a witness function that maximizes the TNR for each fixed FNR. In \hyperlink{method-b}{(b)}, we discuss how to select the classification threshold to control the FNR at a specified level. Given this threshold, we then derive a lower bound on the TNR in \hyperlink{method-c}{(c)} to be maximized for learning the witness function.


\hypertarget{method-b}{\textbf{(b) Classification threshold}}. Given a witness function $w$, we aim to determine a threshold $c$ so that the FNR of the classifier $T_w(\bm{X})>c$ is below a specified level $\alpha>0$. A key observation is that, when the passage $\bm{X}$ is generated from the source LLM, $T_w(\bm{X})$ in \eqref{eqn:Tw} can be represented by the partial sum in \eqref{eqn:partialsum} with $Z_t=w(\log  q_t(X_t| X_{<t}))$ and $\mathcal{F}_t=X_{<t}$. It follows from the MCLT that 
\begin{eqnarray}\label{eqn:FNR}
    \textrm{FNR}_w\coloneqq\mathbb{P}_{\bm{X}\sim q}(T_w(\bm{X})\le c)\to \Phi(c),\quad \textrm{as}~L\to \infty, 
\end{eqnarray}
where $\Phi(\bullet)$ denotes the cumulative distribution function of a standard normal random variable. Thus, to ensure the desired FNR control, we set the threshold $c$ to the $\alpha$th quantile of $\Phi$, denoted by $z_{\alpha}$. 

We validate this threshold selection both theoretically and empirically. Specifically, Theorem~\ref{thm:FNR} in \hyperlink{method-e}{(e)} establishes a finite-sample error bound for our classifier's FNR. Figure~\ref{fig:tpr-fpr-normal} illustrates the effectiveness of FNR control and normal approximation across three benchmark datasets and two language models.

\begin{figure}[t]
    \centering
    \includegraphics[width=1.0\linewidth]{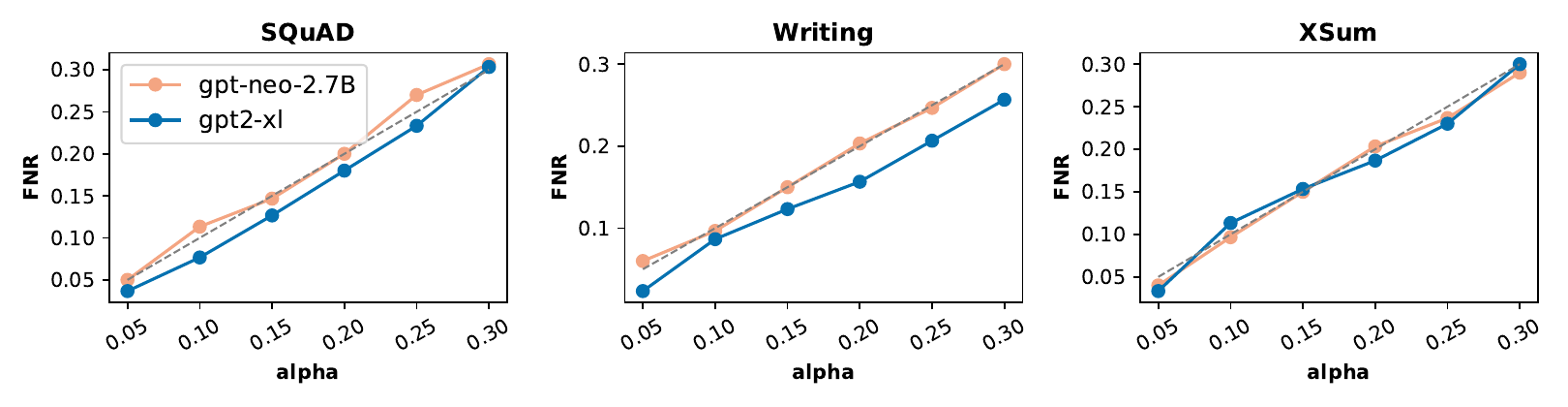}
    \vspace{-0.15in}
    \includegraphics[width=1.0\linewidth]{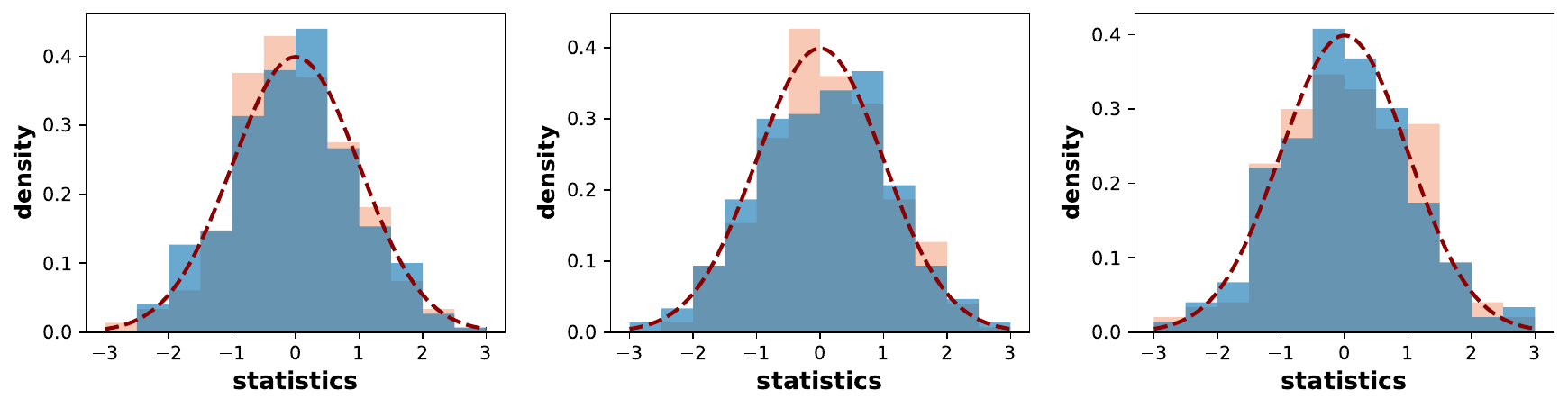}
    \caption{Top panel: FNR of the classifier plotted against the significance level $\alpha$. Bottom panel: Distribution of statistics evaluated on LLM-generated text. The dashed red line is the density function of standard normal random variable. Results are shown across three datasets (from left to right) and two language models (indicated by different colors). }
    \label{fig:tpr-fpr-normal}
\end{figure}

\hypertarget{method-c}{\textbf{(c) Learning the witness function}}. With the classifier's FNR fixed asymptotically at level $\alpha$, one can identify the optimal witness function $w$ by maximizing its TNR, defined by 
\begin{equation}\label{eqn:TNR}
    \textrm{TNR}_w\coloneqq\mathbb{P}_{\bm{X}\sim p}(T_w(\bm{X})\le z_{\alpha}),
\end{equation}
where the probability is evaluated under the human-generated text distribution $p$. Toward that end, we employ MCLT again to derive a closed-form expression of \eqref{eqn:TNR}.  

By definition, we can represent $T_w(\bm{X})$ by the difference $T_w^{(1)}(\bm{X})-T_w^{(2)}(\bm{X})$ where 
\begin{eqnarray*}
   T_w^{(1)}(\bm{X}) &=&\frac{\sum_{t=1}^T [\log w(X_t|X_{<t})-\mathbb{E}_{\widetilde{X}_t \sim p_t} \log w(\widetilde{X}_t|X_{<t})]}{\sqrt{\sum_t \textrm{Var}_{\widetilde{X}_t \sim q_t} (\log w(\widetilde{X}_t|X_{<t})))}}, \nonumber\\
    T_w^{(2)}(\bm{X})&=&\frac{\sum_t [\mathbb{E}_{\widetilde{X}_t \sim q_t} w(\log q_t(\widetilde{X}_t|X_{<t}))-\mathbb{E}_{\widetilde{X}_t \sim p_t} w(\log q_t(\widetilde{X}_t|X_{<t}))]}{\sqrt{\sum_t \textrm{Var}_{\widetilde{X}_t \sim q_t} (w(\log q_t(\widetilde{X}_t|X_{<t})))}}.
\end{eqnarray*}
Here, to ease notations, $p_t$ and $q_t$ in the expectation and variance are implicitly taken with respect to their conditional distributions $p_t(\bullet | X_{<t})$ and $q_t(\bullet | X_{<t})$ given $X_{<t}$.

Notice that $T_w^{(1)}$ is very similar to $T_w$ --- the only difference lies in the centering term: the conditional expectation is taken with respect to $p_t$ instead of $q_t$. Under an equal variance condition where the variance terms in the denominator evaluated at $q_t$ and $p_t$ converge to the same quantity asymptotically, we can invoke the MCLT to prove the asymptotic normality of $T_w^{(1)}(\bm{X})$ under the human distribution $p$. As a result, $\textrm{TNR}_w$ can be approximated by
\begin{eqnarray}\label{eqn:TNRapp}
    \mathbb{P}_{\bm{X}\sim p}(T_w^{(1)}(\bm{X})\le z_{\alpha}+T_w^{(2)}(\bm{X}))\approx \mathbb{E}_{\bm{X}\sim p} \Phi(z_{\alpha}+T_w^{(2)}(\bm{X})).
\end{eqnarray}
Given the external human-written text dataset $\mathcal{H}$, one can estimate the right-hand-side of \eqref{eqn:TNRapp} by $n^{-1} \sum_{i=1}^n \Phi(z_{\alpha}+T_w^{(2)}(\bm{X}^{(i)}))$ and optimize this estimator to compute the witness function $\widehat{w}$. However, the resulting $\widehat{w}$ depends on the choice of $\alpha$, which limits its flexibility. To eliminate this dependence, we derive a lower bound on the TNR in the following theorem. 
\begin{theorem}[TNR lower bound]\label{thm:TNRlowerbound}
    Under the equal variance condition specified in Appendix \ref{sec:assumptions}, $\textrm{TNR}_w$ is asymptotically lower bounded by $\min\{\alpha+\Phi'(z_{\alpha})T_w^{(2*)}, 1-\alpha \}$ where $\Phi'$ is the derivative of $\Phi$ and $T_w^{(2*)}$ denotes a population version of $T_w^{(2)}(\bm{X})$, given by
    \begin{eqnarray}
        T_w^{(2*)}=\frac{\sum_t [\mathbb{E}_{X_{<t}\sim p, \widetilde{X}_t \sim q_t} w(\log q_t(\widetilde{X}_t|X_{<t}))-\mathbb{E}_{X_{<t}\sim p, \widetilde{X}_t \sim p_t} w(\log q_t(\widetilde{X}_t|X_{<t}))] }{\sqrt{\sum_t \mathbb{E}_{X_{<t}\sim p} \textrm{Var}_{\widetilde{X}_t \sim q_t} (w(\log q_t(\widetilde{X}_t|X_{<t})))}}.
        \label{equation: T_w^*}
    \end{eqnarray}
\end{theorem}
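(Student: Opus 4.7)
The plan is to reduce $\textrm{TNR}_w$ asymptotically to the deterministic scalar $\Phi(z_\alpha + T_w^{(2*)})$ and then bound this scalar below by $\min\{\alpha + \Phi'(z_\alpha)T_w^{(2*)},\,1-\alpha\}$ via a tangent-line argument at $z_\alpha$.

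First I would use the decomposition $T_w(\bm{X}) = T_w^{(1)}(\bm{X}) - T_w^{(2)}(\bm{X})$ displayed in the excerpt, which rewrites the target probability as $\textrm{TNR}_w = \mathbb{P}_{\bm{X}\sim p}\bigl(T_w^{(1)}(\bm{X}) \le z_\alpha + T_w^{(2)}(\bm{X})\bigr)$. Under $\bm{X}\sim p$, the summands in the numerator of $T_w^{(1)}(\bm{X})$ form a martingale-difference array with respect to $\mathcal{F}_t = \sigma(X_1,\ldots,X_t)$: the centering term is by construction the conditional expectation $\mathbb{E}_p[w(\log q_t(X_t|X_{<t}))\mid X_{<t}]$, with conditional variance taken under $p_t$. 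The equal-variance condition from Appendix~\ref{sec:assumptions} aligns this conditional variance asymptotically with the denominator (which is taken under $q_t$), so the MCLT recalled in Section~\ref{sec:preliminary} gives $T_w^{(1)}(\bm{X}) \Rightarrow N(0,1)$. In parallel, $T_w^{(2)}(\bm{X})$ is a ratio of sums whose summands depend only on $X_{<t}$, so a weak LLN (or a Chebyshev bound for weakly-dependent sequences) together with the continuous mapping theorem produces $T_w^{(2)}(\bm{X}) \xrightarrow{p} T_w^{(2*)}$. Slutsky's theorem then delivers $\textrm{TNR}_w \to \Phi(z_\alpha + T_w^{(2*)})$.

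The remaining task is the purely analytic claim that $\Phi(x) \ge \min\{\alpha + \Phi'(z_\alpha)(x-z_\alpha),\,1-\alpha\}$ for every real $x$ and every $\alpha \in (0,1/2]$; specializing $x = z_\alpha + T_w^{(2*)}$ then gives the theorem. I would partition $\mathbb{R}$ using the inflection-related points $0$ and $-z_\alpha = z_{1-\alpha}$. On $(-\infty,0]$, convexity of $\Phi$ places the graph above the tangent line at $z_\alpha$, which is precisely $\alpha + \Phi'(z_\alpha)(x - z_\alpha)$. On $[-z_\alpha,\infty)$, monotonicity gives $\Phi(x) \ge \Phi(-z_\alpha) = 1-\alpha$. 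In the intermediate window $[0,-z_\alpha]$, $\Phi$ is concave and lies above the chord joining $(0,1/2)$ and $(-z_\alpha,1-\alpha)$; it then suffices to verify that this chord dominates the tangent line at both endpoints, since comparing two affine functions at the endpoints of an interval determines the comparison throughout. The two endpoint checks reduce to the monotonicity inequality $-z_\alpha\,\Phi'(z_\alpha) \le \int_{z_\alpha}^{0}\Phi'(t)\,dt$ and its mirror on $[0,-z_\alpha]$, both immediate from monotonicity of $\Phi'$ on the respective half-lines.

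I expect the intermediate region to be the main obstacle in the analytic step: neither convexity of $\Phi$ nor the trivial bound $\Phi(x) \ge 1-\alpha$ is available there, and the chord-dominates-tangent detour is the non-routine move. A secondary technical point is making the LLN for $T_w^{(2)}(\bm{X})$ and the equal-variance condition sharp enough that the Slutsky combination yields the asymptotic equality $\textrm{TNR}_w = \Phi(z_\alpha + T_w^{(2*)}) + o(1)$ rather than merely a one-sided comparison, so that the analytic lower bound translates cleanly into the claimed asymptotic lower bound on $\textrm{TNR}_w$.
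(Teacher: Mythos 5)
Your outline agrees with the paper on the decomposition $T_w = T_w^{(1)} - T_w^{(2)}$, the use of the MCLT for $T_w^{(1)}$, and the pointwise analytic inequality $\Phi(z_\alpha + u) \ge \min\{\alpha + \Phi'(z_\alpha)u,\, 1-\alpha\}$ for all real $u$ (the paper's Lemma S4 proves this via the mean-value theorem plus unimodality of $\Phi'$; your chord-dominates-tangent argument for the middle interval is a more roundabout but valid route to the same fact).

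The genuine gap is the step $T_w^{(2)}(\bm{X}) \xrightarrow{p} T_w^{(2*)}$, which you invoke via ``a weak LLN for weakly-dependent sequences.'' No such law of large numbers is available under the paper's hypotheses, and it is not generically true. The numerator of $T_w^{(2)}(\bm{X})$ is $\sum_{t=1}^L g_t(X_{<t})$ where each $g_t$ depends on the entire growing prefix $X_{<t}$; these summands form a highly dependent sequence, and neither the equal-variance condition (Assumption~\ref{ass:equal-variance}) nor the other assumptions supply mixing or ergodicity. If, say, the $g_t(X_{<t})$ were asymptotically functions of $X_1$ alone, then $L^{-1}\sum_t g_t(X_{<t})$ would remain random for all $L$ and would not concentrate. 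The paper is careful not to claim such concentration: it instead lower bounds $\textrm{TNR}_w$ by $\mathbb{E}_{\bm{X}\sim p}\,\Phi\bigl(z_\alpha + T_w^{(2)}(\bm{X})\bigr)$ up to the MCLT error (Lemma S3 with Assumptions~\ref{ass:equal-variance}--\ref{ass:variance-ratio}), applies the pointwise inequality $\Phi(z_\alpha + u) \ge \min\{\alpha+\Phi'(z_\alpha)u,\, 1-\alpha\}$ \emph{inside} the expectation, pushes the expectation through the concave map $u \mapsto \min\{\alpha+\Phi'(z_\alpha)u,\, 1-\alpha\}$ by Jensen's inequality, and identifies $\mathbb{E}\bigl[T_w^{(2)}(\bm{X})\,\sigma_{q,L}/\widetilde{\sigma}_{q,L}\bigr] = T_w^{(2*)}$ with $\sigma_{q,L}/\widetilde{\sigma}_{q,L} \to 1$. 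This expectation-and-Jensen maneuver is precisely what replaces the concentration statement you assumed, and it is the idea your sketch is missing. Absent that move, your Slutsky step collapses because the random threshold $z_\alpha + T_w^{(2)}(\bm{X})$ need not settle down to a deterministic limit.
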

Compared to $T_w^{(2)}(\bm{X})$, both the numerator and denominator of $T_w^{(2*)}$ are defined by taking expectations with respect to $X_{<t}\sim p$ for each $t$. In the analytical example, the numerator simplifies to \eqref{eqn:examplestatistics} after appropriate scaling. According to Theorem \ref{thm:TNRlowerbound}, optimizing the lower bound is equivalent to maximizing $T_w^{(2*)}$, whose solution is independent of $\alpha$. We also remark that the maximal value $\max_w T_w^{(2*)}$ is similar to certain integral probability metrics \citep{muller1997integral} such as the maximum mean discrepancy measure widely studied in machine learning \citep[see e.g.,][]{gretton2012kernel}. 

Motivated by Theorem \ref{thm:TNRlowerbound}, we replace the expectations $\mathbb{E}_{X_{<t}\sim p}$ in both the numerator and denominator of $T_w^{(2*)}$ with their empirical average over the dataset $\mathcal{H}$, denote the resulting estimator by $\widehat{T}_w^{(2)}$ and compute $\widehat{w}=\arg\max_{w\in \mathcal{W}} \widehat{T}_w^{(2)}$ over a function class $\mathcal{W}$. Since $w$ is a one-dimensional function over the space of real-valued logits, the optimization is relatively simple. 

Specifically, we adopt a linear function class $\mathcal{W}=\{w(z)=\phi(z)^\top\beta: \| \beta \|_2 = 1 \}$ for some bounded $d$-dimensional feature mapping $\phi$. In this case, $\widehat{T}_w^{(2)}$ simplifies to $\beta^\top \psi/\sqrt{\beta^\top \Sigma \beta}$ for some $d$-dimensional vector $\psi$ and $d\times d$ semi-definite positive matrix $\Sigma$ (see Appendix \ref{sec:w-estimation} for the detailed derivation). With some calculations, our estimated regression coefficients $\widehat{\beta}$ can be efficiently obtained by solving the linear system $\Sigma \beta=\psi$, leading to $\widehat{w}= \widehat{\beta}^\top \phi$. We set $\phi$ to the B-spline basis function \citep{de1978practical} in our implementation and relegate additional details to Appendix \ref{sec:w-estimation}.

\hypertarget{method-d}{\textbf{(d) Extension to black-box settings}}. When the target LLM’s logits are unavailable, we employ an open-source LLM with a distribution similar to that of the target model to construct the statistical measure in \eqref{eqn:Tw}. The witness function is then learned in the same manner as described in \hyperlink{method-c}{(c)}. We empirically evaluate this approach in Section~\ref{sec:experiment}.

\begin{wrapfigure}[10]{r}{0.28\textwidth}
    \vspace{-0.39in}
    \includegraphics[width=\linewidth]{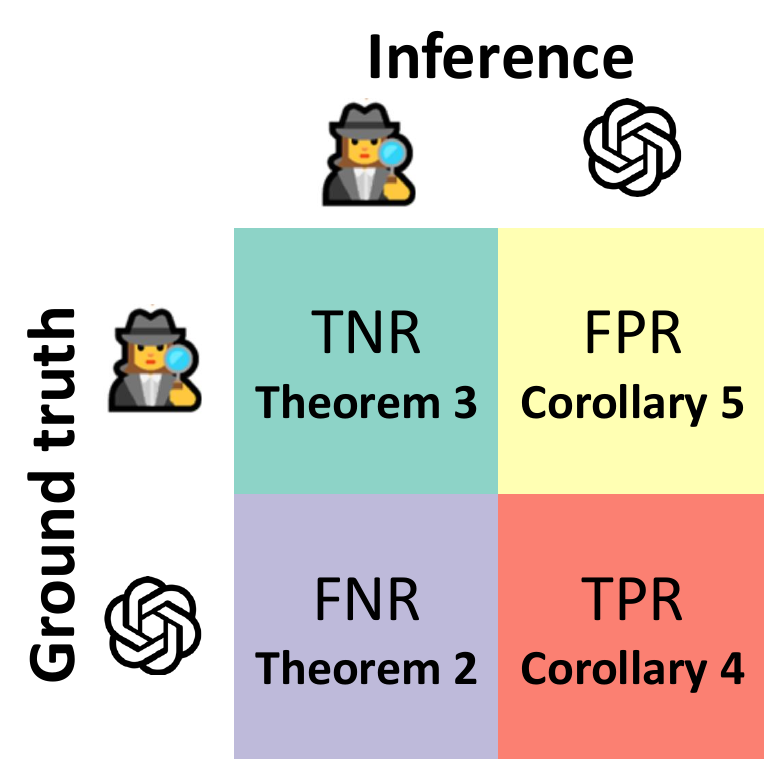}
    \vspace{-0.15in}
    \caption{A summary of our theories.}\label{fig:theory}
    \vspace{-0.13in}
\end{wrapfigure}
\hypertarget{method-e}{\textbf{(e) Statistical guarantees}}. In the following, we establish finite-sample bounds for the FNR, TNR, FPR and TPR of the proposed classifier in the white-box setting; see Figure \ref{fig:theory} for a summary of our theories. Recall that $L$ denotes the number of tokens in a passage, $n$ denotes the number of human-authored passages in the training data and $\alpha$ is the target FNR level we wish to control. We also define $w^*=\arg\max_{w\in \mathcal{W}} T_w^{(2*)}$ as the population limit of our estimated witness function $\widehat{w}$. Finally, let $V_L$ denote the ratio $$\frac{L^{-1}\sum_{t=1}^L \textrm{Var}_{\widetilde{X}_t \sim q_t}(\widehat{w}(\log q_t(\widetilde{X}_t|X_{<t})))}{L^{-1}\sum_{t=1}^L \textrm{Var}_{\bm{X}\sim q}(\widehat{w}(\log q_t(X_t|X_{<t})))}.$$

\begin{theorem}[FNR]\label{thm:FNR}
    Assume the denominator of $V_L$ is bounded away from zero. Then the expected FNR of our classifier FNR$_{\widehat{w}}$ is upper bounded by $ \mathbb{E}(\textrm{FNR}_{\widehat{w}})\le \alpha + O\big(L^{-1/2}\log L\big)+O\big( \mathbb{E} |V_L-1|^{1/3} \big)$, where the expectation on the left-hand-side is taken with respect to $\widehat{w}$. 
\end{theorem}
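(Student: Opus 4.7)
The plan is to reduce the theorem to a Berry--Esseen bound for a self-normalized martingale and then integrate the resulting error over the training randomness in $\widehat{w}$. First, since $\widehat{w}$ depends only on the training corpus $\mathcal{H}$ and is therefore independent of the test passage $\bm{X}\sim q$, I would work conditionally on $\widehat{w}$; boundedness of $\widehat{w}$ follows from the unit-norm constraint on $\beta$ combined with the bounded B-spline basis $\phi$, so $\|\widehat{w}\|_{\infty}$ is controlled uniformly over realizations of $\mathcal{H}$. Under $\bm{X}\sim q$, set $Z_{t}:=\widehat{w}(\log q_{t}(X_{t}|X_{<t}))$, $\mathcal{F}_{t-1}:=\sigma(X_{<t})$, and $M_{t}:=Z_{t}-\mathbb{E}(Z_{t}|\mathcal{F}_{t-1})$; then $(M_{t})$ is a martingale difference array with conditional variances summing to $U_{L}=\sum_{t}\mathbb{E}(M_{t}^{2}|\mathcal{F}_{t-1})$, and by construction $T_{\widehat{w}}(\bm{X})=\bigl(\sum_{t}M_{t}\bigr)/\sqrt{U_{L}}$. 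This exhibits the classifier statistic as exactly the self-normalized object to which the martingale CLT applies.

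Next, I would invoke a Berry--Esseen theorem for martingales in the Heyde--Brown / Haeusler tradition, which for bounded increments delivers, for every integer $p\geq 1$,
$$\sup_{x}\bigl|\mathbb{P}(T_{\widehat{w}}(\bm{X})\leq x)-\Phi(x)\bigr|\;\leq\;C\Bigl(\sigma_{L}^{-2p}\sum_{t}\mathbb{E}|M_{t}|^{2p}\Bigr)^{1/(2p+1)}+C\bigl(\mathbb{E}|V_{L}-1|^{p}\bigr)^{1/(2p+1)},$$
where $\sigma_{L}^{2}:=\sum_{t}\mathrm{Var}(Z_{t})$ equals $L$ times the denominator of $V_{L}$ and is bounded below by the theorem's assumption. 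Taking $p=1$ in the second term produces the $\mathbb{E}|V_{L}-1|^{1/3}$ factor stated in the theorem, while boundedness of $\widehat{w}$ makes every moment $\mathbb{E}|M_{t}|^{2p}$ finite; choosing $p$ of order $\log L$ in the first term, together with $\sigma_{L}^{2}\gtrsim L$, drives it to $O(L^{-1/2}\log L)$. Specializing $x=z_{\alpha}$ gives $\Phi(z_{\alpha})=\alpha$, and integrating over $\widehat{w}$ via the tower property $\mathbb{E}(\mathrm{FNR}_{\widehat{w}})=\mathbb{E}_{\mathcal{H}}\mathbb{P}_{\bm{X}\sim q}(T_{\widehat{w}}(\bm{X})\leq z_{\alpha})$---permissible because the Berry--Esseen constants are uniform in $\widehat{w}$---yields the claimed inequality.

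The main obstacle I anticipate is obtaining the $1/3$ exponent on $\mathbb{E}|V_{L}-1|^{1/3}$: a naive argument combining Markov's inequality on $\{|V_L-1|>\delta\}$ with a deterministic Berry--Esseen bound on its complement caps the exponent at $1/2$ after optimizing $\delta$, so one really must follow Haeusler's machinery, which couples a joint truncation of the martingale differences with a characteristic-function expansion around the random normalizer to balance three error terms simultaneously. A secondary technical step is to verify that the non-degeneracy of $\sigma_{L}^{2}$ and the boundedness of $\widehat{w}$ hold uniformly over draws of $\mathcal{H}$, which follows from the structural form $\widehat{w}=\widehat{\beta}^{\top}\phi$ with $\|\widehat{\beta}\|_{2}=1$ and bounded $\phi$, so that a single universal constant governs the Berry--Esseen bound across all realizations of the training data and the final expectation over $\widehat{w}$ can be taken painlessly.
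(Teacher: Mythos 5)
Your high-level plan -- condition on $\widehat{w}$ (which is independent of the test passage), express $T_{\widehat{w}}(\bm{X})$ as a self-normalized martingale sum, invoke a Berry--Esseen-type bound, and integrate via the tower property -- is exactly the structure the paper uses. However, the middle of your argument, where you propose to obtain the two stated error terms by "choosing $p = 1$ in the second term" and "$p$ of order $\log L$ in the first term," does not work: the Heyde--Brown/Haeusler family of bounds you quote is parameterized by a \emph{single} $p$ shared by both error terms (both sit inside a common $1/(2p+1)$-th root), so you cannot mix exponents. With $p=1$ the first term $\sigma_L^{-2}\sum_t \mathbb{E}|M_t|^2$ is $O(1)$ and does not vanish; with $p$ large, the second term becomes $(\mathbb{E}|V_L-1|^p)^{1/(2p+1)}$ which in general bears no relation to $(\mathbb{E}|V_L-1|)^{1/3}$. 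So your optimization sketch leaves a genuine gap.

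The paper avoids this entirely by citing Corollary 1 of \citet{Bolthausen_1982_exact}, which already delivers the specific two-term bound
$O(n\log n / s_n^3) + O\big((\mathbb{E}|V_n^2 - 1|)^{1/3}\big)$
for the $s_n$-normalized martingale in a single package -- no further optimization over $p$ is required. Under bounded increments and $s_n^2 \gtrsim L$ (your unit-norm-$\beta$ plus bounded-$\phi$ argument correctly establishes both), this gives the $O(L^{-1/2}\log L)$ term directly. The paper then separately proves its Lemma~\ref{thm: convergence-rate-MCLT}, which converts Bolthausen's bound for the deterministically normalized statistic $\sum X_t / s_n$ into a bound for the self-normalized statistic $\sum X_t / \sqrt{\sum \sigma_t^2}$ by rewriting the event $\{\sum X_t / \sqrt{\sum\sigma_t^2} \le z\}$ as $\{\sum X_t / s_n \le z + (\sum X_t / \sqrt{\sum\sigma_t^2})(V_n - 1)\}$ and controlling the shift via H\"older's inequality and $\sup_z \Phi'(z) \le (2\pi)^{-1/2}$. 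Your proposal states the Berry--Esseen bound directly for the self-normalized statistic without this translation step. You do gesture at the issue in your final paragraph ("characteristic-function expansion around the random normalizer"), and you correctly diagnose that a naive split-on-$\{|V_L-1|>\delta\}$ argument only yields exponent $1/2$; but the viable fix here is not a from-scratch Haeusler-style derivation -- it is to cite Bolthausen's corollary and then do the modest algebra of Lemma~\ref{thm: convergence-rate-MCLT}. With those two corrections your argument would be complete and would coincide with the paper's.
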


\begin{theorem}[TNR]\label{thm:TNR}
    Under a minimal eigenvalue assumption in Appendix \ref{sec:assumptions}, with some absolute constant $\gamma > 0$ depending only on the eigenvalue decay, the expected TNR of our classifier TNR$_{\widehat{w}}$ is lower bounded by $ \mathbb{E} (\textrm{TNR}_{\widehat{w}})\ge \textrm{TNR}_{w^*}- O \big ( d^\gamma / \sqrt{n} \big ). $
\end{theorem}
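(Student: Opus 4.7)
The plan is to translate the excess gap between $\widehat{w}$ and $w^*$ into an $\ell_2$ perturbation of the coefficient vector that parameterizes the linear witness class, and then push this perturbation through the asymptotic representation of the TNR derived in Part~(c). Using the normal approximation in \eqref{eqn:TNRapp}, up to a uniform MCLT remainder that contributes only an $L^{-1/2}\log L$-type term to be absorbed into the stated rate, one has $\textrm{TNR}_w \approx \mathbb{E}_{\bm{X}\sim p}\Phi(z_\alpha + T_w^{(2)}(\bm{X}))$. A first-order Taylor expansion of $\Phi$ together with the boundedness of $\Phi'$ gives
\begin{equation*}
\textrm{TNR}_{\widehat{w}} - \textrm{TNR}_{w^*} \;\ge\; -\|\Phi'\|_\infty \, \mathbb{E}_{\bm{X}\sim p}\bigl| T_{\widehat{w}}^{(2)}(\bm{X}) - T_{w^*}^{(2)}(\bm{X}) \bigr|,
\end{equation*}
so it suffices to bound this expected functional discrepancy and then take expectation over $\widehat{w}$.

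Next I would replace $T_w^{(2)}(\bm{X})$ by its population counterpart $T_w^{(2*)}$ in \eqref{equation: T_w^*}; by a second MCLT-type concentration step this costs a term of smaller order in $L$. Under the linear parameterization $w = \beta^\top\phi$ with $\|\beta\|_2 = 1$, both the numerator $\beta^\top\psi$ and the denominator $\sqrt{\beta^\top\Sigma\beta}$ of $T_w^{(2*)}$ are smooth functions of $\beta$, and the empirical criterion that defines $\widehat{\beta}$ replaces $(\psi,\Sigma)$ with sample averages $(\widehat{\psi},\widehat{\Sigma})$ over the $n$ iid human passages in $\mathcal{H}$. Boundedness of the B-spline basis $\phi$, together with Hoeffding and matrix-Hoeffding inequalities, yields the baseline concentration rates $\|\widehat{\psi}-\psi\|_2 = O(\sqrt{d/n})$ and $\|\widehat{\Sigma}-\Sigma\|_{\textrm{op}} = O(\sqrt{d/n})$ in expectation.

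The core step is a perturbation bound for the solution of the linear system $\Sigma\beta = \psi$ after unit-norm normalization. Standard matrix-perturbation theory converts the $\sqrt{d/n}$ perturbation of $(\Sigma,\psi)$ into a bound on $\|\widehat{\beta}-\beta^*\|_2$ that scales with $\lambda_{\min}(\Sigma)^{-1}$; invoking the minimum-eigenvalue assumption in Appendix~\ref{sec:assumptions}, which controls $1/\lambda_{\min}(\Sigma)$ by a polynomial in $d$, then yields $\mathbb{E}\|\widehat{\beta}-\beta^*\|_2 = O(d^\gamma/\sqrt{n})$ for an exponent $\gamma$ determined by the eigenvalue decay. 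Lipschitz continuity of $\beta\mapsto \beta^\top\psi/\sqrt{\beta^\top\Sigma\beta}$ on the unit sphere transfers this bound to $|T_{\widehat{w}}^{(2*)} - T_{w^*}^{(2*)}|$, and combining with the Taylor inequality above completes the argument.

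The hardest part, in my view, will be the normalization step: the target ratio functional is scale-invariant, so a direct application of standard matrix-perturbation theorems does not bound $\|\widehat{\beta}-\beta^*\|_2$ on the unit sphere without a gap condition separating the optimal direction from competing ones. I would address this either via a sine-theta-style argument applied to $\Sigma^{-1/2}\psi\psi^\top\Sigma^{-1/2}$, or through the first-order optimality conditions of the empirical and population problems followed by a Taylor expansion of the ratio functional, with the eigenvalue assumption guaranteeing a nondegenerate Hessian. A secondary obstacle is ensuring that the two MCLT remainders above are uniform over $w\in\mathcal{W}$, which I would handle by a covering-number argument on the unit ball in $\mathbb{R}^d$ combined with the Lipschitz structure of $T_w^{(2)}$ in $\beta$.
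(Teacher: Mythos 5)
Your approach diverges from the paper's in a way that introduces a genuine gap. The very first step---invoking the normal approximation $\textrm{TNR}_w \approx \mathbb{E}_{\bm{X}\sim p}\Phi(z_\alpha + T_w^{(2)}(\bm{X}))$ and Taylor-expanding $\Phi$---incurs an MCLT remainder of order $L^{-1/2}\log L$. You say this can be ``absorbed into the stated rate,'' but the stated rate in Theorem~\ref{thm:TNR} is purely $O(d^\gamma/\sqrt{n})$, with no dependence on the passage length $L$. These remainders do not cancel between $\widehat{w}$ and $w^*$ (the Berry--Esseen-type bound in Lemma~\ref{thm: convergence-rate-MCLT} depends on constants, such as the bound on $|w|$, that vary with the witness function), so your argument would actually prove a weaker bound containing an extra $O(L^{-1/2}\log L)$ term. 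The same objection applies to your second MCLT-type step replacing $T_w^{(2)}(\bm{X})$ by $T_w^{(2*)}$. The paper sidesteps this entirely: it never uses the normal approximation in the proof of Theorem~\ref{thm:TNR}. Instead it compares the two finite-$L$ TNRs directly, observing that
\begin{equation*}
\left|\textrm{TNR}_{\widehat{w}} - \textrm{TNR}_{w^*}\right| \le 2\,\mathbb{P}_{\bm{X}\sim p}\bigl(|T_{w^*}(\bm{X}) - z_\alpha| \le \widehat{\Delta}_n\bigr), \qquad \widehat{\Delta}_n := \sup_{\bm{x}} |T_{\widehat{w}}(\bm{x}) - T_{w^*}(\bm{x})|,
\end{equation*}
and then invokes the margin condition (Assumption~\ref{assumption: margin condition}) to convert this probability into a bound linear in $\widehat{\Delta}_n$. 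Since that condition is stated for the exact statistic $T_{w^*}$ at finite $L$, no MCLT error ever enters.

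On the coefficient estimation step, you correctly diagnose the scale-invariance obstacle and your fallback via first-order optimality plus a nondegenerate-Hessian Taylor expansion is in the right spirit, but the paper's Lemma~\ref{lem:bound for beta} uses a simpler and more robust device: a direct argmax-comparison argument. It observes that $\|\widehat{\beta}-\beta^*\|_2 > z$ forces $\sup_{\beta:\|\beta-\beta^*\|_2>z} Q_n(\beta) - Q_n(\beta^*) > 0$, then exploits a gap condition $Q_*(\beta^*) - Q_*(\beta) \ge \kappa_1 d^{-\gamma}\|\beta-\beta^*\|_2$ together with Hoeffding and bounded-difference concentration for $Q_n - Q_*$. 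This yields sub-Gaussian tails for $\|\widehat{\beta}-\beta^*\|_2$ directly, without passing through matrix-perturbation bounds on $(\widehat{\psi},\widehat{\Sigma})$ or a sine-theta decomposition. If you wanted to proceed along your route, you would need to (i) drop the normal approximation and instead import the paper's margin condition to relate TNR differences to $\widehat{\Delta}_n$, and (ii) replace the matrix-perturbation step with either the argmax-comparison argument or carefully justify a curvature lower bound for the ratio functional on the unit sphere.
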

\begin{corollary}[TPR]\label{thm:TPR}
    Under the condition in Theorem \ref{thm:FNR}, the expected TPR of our classifier 
    is lower bounded by $\mathbb{E} (\textrm{TPR}_{\widehat{w}})=1-\mathbb{E} (\textrm{FNR}_{\widehat{w}})\ge 1- \alpha - O\big(L^{-1/2}\log L \big)-O\big(\mathbb{E} |V_L-1|^{1/3}\big)$.
\end{corollary}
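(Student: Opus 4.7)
The proof is essentially immediate from Theorem~\ref{thm:FNR}, since the claim is just the complementary-event identity combined with the FNR upper bound. The first step is to note that the classifier declares $\bm{X}$ to be machine-generated precisely when $T_{\widehat{w}}(\bm{X}) > z_{\alpha}$, and declares it human when $T_{\widehat{w}}(\bm{X}) \le z_{\alpha}$. Under $\bm{X} \sim q$ (machine-generated), these two events form a partition, so that conditional on any realization of the learned witness function $\widehat{w}$, one has the pointwise identity $\textrm{TPR}_{\widehat{w}} + \textrm{FNR}_{\widehat{w}} = 1$.

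The second step is to take expectations with respect to the randomness of $\widehat{w}$ (which depends on the training corpus $\mathcal{H}$). By linearity of expectation, $\mathbb{E}(\textrm{TPR}_{\widehat{w}}) = 1 - \mathbb{E}(\textrm{FNR}_{\widehat{w}})$, which gives the equality in the statement. The third step is to plug in the finite-sample FNR bound from Theorem~\ref{thm:FNR}, namely $\mathbb{E}(\textrm{FNR}_{\widehat{w}}) \le \alpha + O(L^{-1/2}\log L) + O(\mathbb{E}|V_L-1|^{1/3})$, which, after negation and combination with $1$, yields the desired lower bound
\[
\mathbb{E}(\textrm{TPR}_{\widehat{w}}) \ge 1 - \alpha - O(L^{-1/2}\log L) - O(\mathbb{E}|V_L-1|^{1/3}).
\]

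There is no real obstacle: the entire content of the corollary lies in Theorem~\ref{thm:FNR}, and the only thing to verify carefully is that the two events \{classify as human\} and \{classify as machine\} are defined by a common threshold rule (so that their probabilities under $q$ sum to one for every fixed $\widehat{w}$), which is true by construction since the same $z_{\alpha}$ is used in both definitions. No additional assumption beyond those in Theorem~\ref{thm:FNR} (in particular, the bounded-denominator condition on $V_L$) is needed.
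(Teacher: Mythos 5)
Your proof is correct and matches the paper's reasoning: the corollary is stated in the paper as following directly from Theorem~\ref{thm:FNR} via the identity $\textrm{TPR}_{\widehat{w}} = 1 - \textrm{FNR}_{\widehat{w}}$ (both evaluated under $\bm{X}\sim q$ with the same threshold $z_\alpha$), followed by taking expectations over $\widehat{w}$ and substituting the FNR bound. Nothing more is needed.
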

\begin{corollary}[FPR]\label{thm:FPR}
    Under the condition in Theorem \ref{thm:TNR}, the expected FPR of our classifier is upper bounded by $\mathbb{E}(\textrm{FPR}_{\widehat{w}})= 1-\mathbb{E}(\textrm{TNR}_{\widehat{w}})\le 1 - \text{TNR}_{w^*} + O \big ( d^\gamma / \sqrt{n} \big )$. 
\end{corollary}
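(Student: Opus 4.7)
The plan is to derive the bound directly from Theorem \ref{thm:TNR} by exploiting the fact that, under the human distribution $p$, the classifier's two possible decisions partition the sample space. First, I would record the definitional identity $\mathrm{FPR}_w = 1 - \mathrm{TNR}_w$: both quantities are probabilities under $\bm{X}\sim p$, where $\mathrm{TNR}_w = \mathbb{P}_{\bm{X}\sim p}(T_w(\bm{X}) \le z_\alpha)$ (defined in \eqref{eqn:TNR}) and $\mathrm{FPR}_w = \mathbb{P}_{\bm{X}\sim p}(T_w(\bm{X}) > z_\alpha)$. Since the two events are complementary and the probability measure is the same, they sum to one pointwise in $w$.

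Next, I would apply this identity to the estimated witness function $\widehat{w}$, which is a random function of the training corpus $\mathcal{H}\sim p^{\otimes n}$. This yields $\mathrm{FPR}_{\widehat{w}} = 1 - \mathrm{TNR}_{\widehat{w}}$ as an almost-sure equality of random variables depending on $\mathcal{H}$. Taking expectations over the training randomness and using linearity then gives $\mathbb{E}(\mathrm{FPR}_{\widehat{w}}) = 1 - \mathbb{E}(\mathrm{TNR}_{\widehat{w}})$.

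Finally, I would invoke the TNR lower bound from Theorem \ref{thm:TNR}, namely $\mathbb{E}(\mathrm{TNR}_{\widehat{w}}) \ge \mathrm{TNR}_{w^*} - O(d^\gamma/\sqrt{n})$, under the minimum eigenvalue assumption in Appendix \ref{sec:assumptions}. Substituting this into the complementarity identity and reversing the inequality produces
\begin{equation*}
\mathbb{E}(\mathrm{FPR}_{\widehat{w}}) \;\le\; 1 - \mathrm{TNR}_{w^*} + O\bigl(d^\gamma/\sqrt{n}\bigr),
\end{equation*}
which is the desired conclusion. There is no real obstacle here: the corollary is a purely algebraic reformulation of Theorem \ref{thm:TNR} via the FPR/TNR complementarity, and the only thing to be careful about is that the regularity conditions (the eigenvalue assumption) are inherited verbatim, which is why the statement explicitly cites the hypotheses of Theorem \ref{thm:TNR}.
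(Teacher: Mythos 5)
Your argument is exactly the paper's: the corollary is obtained from the pointwise complementarity $\mathrm{FPR}_w = 1 - \mathrm{TNR}_w$ under $\bm{X}\sim p$, applied to the random $\widehat{w}$, followed by taking expectations and substituting the bound from Theorem~\ref{thm:TNR}. The paper states this one-line reduction in the remark after the corollaries and gives no separate proof, so your proposal matches it precisely.
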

We make a few remarks: 
\begin{itemize}[leftmargin=*]
    \item Theorem \ref{thm:FNR} provides an upper bound on the difference between our classifier's expected FNR and the target level $\alpha$. This excess FNR depends on two factors: (i) the number of tokens $L$, and (ii) the variance ratio $V_L$. As $L$ diverges to infinity and $V_L$ stabilizes to $1$, the FNR of our classifier approaches the target level. We remark that assumptions similar to the bounded denominator condition in Theorem \ref{thm:FNR} are commonly imposed \citep[see e.g.,][]{Bolthausen_1982_exact,hall2014martingale}. 
    \item Theorem \ref{thm:TNR} establishes an upper bound on the difference between our classifier's TNR and that of the oracle classifier, which has access to the population-level optimal witness function $w^*$ . This difference depends on: (i) the training sample size $n$, and (ii) the dimensionality $d$ of the feature mapping $\phi$. Since $\phi$ is defined over the space of one-dimensional log probabilities of tokens, rather than the token space itself, $d$ is independent of the vocabulary size, and can be treated as fixed. Consequently, as the training data size $n$ grows to infinity, our classifier's TNR converges to that of the oracle classifier. According to Theorem \ref{thm:TNRlowerbound}, with a sufficiently large $\max\limits_{w\in \mathcal{W}} T_w^{(2*)}$, the expected TNR can reach up to $1-\alpha$. 
    \item Corollaries \ref{thm:TPR} and \ref{thm:FPR} follow directly from Theorems \ref{thm:FNR} and \ref{thm:TNR}, due to the relationships between TPR and FNR, and between FPR and TNR.
\end{itemize}
Together,  these results suggest that our classifier is expected to achieve a high AUC as the significance level $\alpha$ varies, which we will verify empirically in the next section.

\section{Experiments}\label{sec:experiment}
We conduct numerical experiments in both white-box and black-box settings to illustrate the usefulness of AdaDetectGPT. 
To save space, some implementation details are provided in Appendix~\ref{sec:experiment-detail}. 

\textbf{Datasets}. We consider five widely-used datasets for comparing different detectors, including \ul{\textit{SQuAD}} for Wikipedia-style question answering \citep{rajpurkar2016squad}, \ul{\textit{WritingPrompts}} for story generation \citep{fan2018hierarchical}, \ul{\textit{XSum}} for news summarization \citep{Narayan2018DontGM}, \ul{\textit{Yelp}} for crowd-sourced product reviews \citep{zhangCharacterlevelConvolutionalNetworks2015}, and \ul{\textit{Essay}} for high school and university-level essays \citep{verma2024ghostbuster}. Following \citet{bao2024fastdetectgpt}, we randomly sample 500 human-written paragraphs from each dataset and generate an equal number of machine-authored paragraphs by prompting an LLM with the first 120 tokens of the human-written text and requiring it to complete the text with up to 200 tokens. This is a challenging setting where LLM-generated text is mixed with human writing. To evaluate AdaDetectGPT, we compute the AUC on each of the five datasets, with its witness function $\widehat{w}$  trained on two randomly selected datasets that differ from the test dataset.

\textbf{Benchmark methods.} In white-box settings, we compare the proposed AdaDetectGPT against \textbf{eight} state-of-the-art detectors: \ul{\textit{Likelihood}}, \ul{\textit{Entropy}}, \ul{\textit{LogRank}} \citep{gehrmann2019gltr}, LogRank Ratio \citep[\ul{\textit{LRR}},][]{su2023detectllm}, \ul{\textit{DetectGPT}} \citep{mitchell2023detectgpt} and its variants Normalized Perturbed log Rank  \citep[\ul{\textit{NPR}},][]{su2023detectllm},  \ul{\textit{Fast-DetectGPT}} \citep{bao2024fastdetectgpt}, \ul{\textit{DNAGPT}} \citep{yang2024dnagpt}. In black-box settings, we further compare against \ul{\textit{RoBERTaBase}} and \ul{\textit{RoBERTaLarge}} \citep{solaiman2019release}, \ul{\textit{Binoculars}} \citep{hans2024spotting}, \ul{\textit{RADAR}} \citep{hu2023radar},  
and \ul{\textit{BiScope}} \citep{guo2024biscope}, but omit \textit{DetectGPT}, \textit{NPR} and \textit{DNAGPT} due to their high computational cost. This yields \textbf{ten} baseline algorithms. We measure the detection power of each detector using AUC. 
 
\textbf{White-box results.} We first consider the \ul{\textit{white-box setting}} where we employ various source models summarized in Table~\ref{tab:llms} for text generation, with parameter sizes ranging from 1 to 20 billion. Table~\ref{tab:main_results_details} reports the AUC scores of various detectors across all combinations of datasets and five source models. It can be seen that AdaDetectGPT achieves the highest AUC across all combinations of datasets and source models, outperforming Fast-DetectGPT -- the best baseline method -- by 12.5\%-37\%. We also evaluate AdaDetectGPT on three more advanced open-source LLMs: Qwen2.5 \citep{bai2025qwen2}, Mistral \citep{albert2024mistral}, and LLaMA3 \citep{grattafiori2024llama}. As shown in Table~\ref{tab:white-box-advanced}, AdaDetectGPT delivers consistent improvements over Fast-DetectGPT and maintains competitive performance across five datasets, achieving the best results in most cases. These findings highlight the advantage of using an adaptively learned witness function for classification. 

In Appendix~\ref{sec:computational-analysis}, we analyze the \ul{\textit{computational cost}} of AdaDetectGPT. Training the witness function typically requires less than one minute across different sample sizes and dimensions, with memory usage below 0.5 GB. In Appendix~\ref{sec:tuning}, we further conduct a \ul{\textit{sensitivity analysis}} to investigate the sensitivity of AdaDetectGPT's AUC score to various factors that may affect the estimation of the witness function. Our results show that AdaDetectGPT is generally robust to the size of training data, the number of B-spline features, and the distributional shift between training and test data, consistently maintaining superior performance over baselines.

\begin{table}[t]
    \caption{AUC scores of various detectors in the white-box setting. The ``Relative'' rows report the percentage improvement of AdaDetectGPT over Fast-DetectGPT.}
    \label{tab:main_results_details}
    \centering\small
    \begin{tabular}{llcccccc}
        \toprule
        \multirow{2}{*}{\bf Dataset} & \multirow{2}{*}{\bf Method} & \multicolumn{6}{c}{\bf Source Model} \\
         &  & GPT-2 & OPT-2.7 & GPT-Neo & GPT-J & GPT-NeoX & Avg. \\    
        \midrule
        \multirow{10}{*}{SQuAD} 
        & Likelihood & 0.7043 & 0.6872 & 0.6722 & 0.6414 & 0.5948 & 0.6600 \\
        & Entropy & 0.5513 & 0.5289 & 0.5392 & 0.5385 & 0.5402 & 0.5396 \\
        & LogRank & 0.7328 & 0.7169 & 0.7047 & 0.6738 & 0.6179 & 0.6892 \\
        & LRR & 0.7703 & 0.7623 & 0.7553 & 0.7286 & 0.6632 & 0.5849 \\
        & NPR & 0.8783 & 0.8342 & 0.8311 & 0.7330 & 0.6454 & 0.6182 \\
        & DNAGPT & 0.8640 & 0.8413 & 0.8106 & 0.7671 & 0.6933 & 0.7953 \\
        & DetectGPT & 0.8565 & 0.8167 & 0.8047 & 0.7124 & 0.6380 & 0.6047 \\
        \cdashline{2-8}
        & Fast-DetectGPT & 0.9042 & 0.8801 & 0.8731 & 0.8417 & 0.7866 & 0.8571 \\
        & AdaDetectGPT & \textbf{0.9265} & \textbf{0.9058} & \textbf{0.9088} & \textbf{0.8618} & \textbf{0.8292} & \textbf{0.8864} \\
        \cdashline{2-8}
        & Relative (\parbox[c]{1em}{\tikz{\hspace*{3.9pt}\draw[blue, -latex] (0,0) -- (0,0.3);}}) & 23.2696 & 21.4914 & 28.1306 & 12.6659 & 19.9396 & 20.4909 \\
        \midrule
        \multirow{10}{*}{Writing} 
        & Likelihood & 0.7297 & 0.7119 & 0.7032 & 0.6937 & 0.6811 & 0.7039 \\
        & Entropy & 0.5154 & 0.5132 & 0.5098 & 0.5126 & 0.5275 & 0.5157 \\
        & LogRank & 0.7510 & 0.7332 & 0.7253 & 0.7141 & 0.7029 & 0.7253 \\
        & LRR & 0.7801 & 0.7599 & 0.7624 & 0.7489 & 0.7362 & 0.6015 \\
        & NPR & 0.8760 & 0.8329 & 0.8373 & 0.8008 & 0.7854 & 0.4942 \\
        & DNAGPT & 0.8962 & 0.8503 & 0.8590 & 0.8450 & 0.8199 & 0.8541 \\
        & DetectGPT & 0.8540 & 0.8060 & 0.8173 & 0.7681 & 0.7514 & 0.6286 \\
        \cdashline{2-8}
        & Fast-DetectGPT & 0.8972 & 0.8891 & 0.8904 & 0.8879 & 0.8782 & 0.8886 \\
        & AdaDetectGPT & \textbf{0.9352} & \textbf{0.9175} & \textbf{0.9290} & \textbf{0.9190} & \textbf{0.9158} & \textbf{0.9233} \\
        \cdashline{2-8}
        & Relative (\parbox[c]{1em}{\tikz{\hspace*{3.9pt}\draw[blue, -latex] (0,0) -- (0,0.3);}}) & 36.9089 & 25.5699 & 35.2136 & 27.7142 & 30.8963 & 31.1543 \\
        \midrule
        \multirow{10}{*}{XSum} 
        & Likelihood & 0.6410 & 0.6264 & 0.6197 & 0.6086 & 0.5906 & 0.6173 \\
        & Entropy & 0.5637 & 0.5571 & 0.5864 & 0.5606 & 0.5679 & 0.5671 \\
        & LogRank & 0.6623 & 0.6493 & 0.6493 & 0.6310 & 0.6109 & 0.6406 \\
        & LRR & 0.6952 & 0.6795 & 0.6967 & 0.6612 & 0.6447 & 0.5361 \\
        & NPR & 0.8211 & 0.7713 & 0.8273 & 0.7676 & 0.7290 & 0.6178 \\
        & DNAGPT & 0.7531 & 0.7283 & 0.7160 & 0.6939 & 0.6634 & 0.7109 \\
        & DetectGPT & 0.8073 & 0.7639 & 0.8244 & 0.7599 & 0.7239 & 0.6110 \\
        \cdashline{2-8}
        & Fast-DetectGPT & 0.8293 & 0.8067 & 0.8137 & 0.7926 & 0.7622 & 0.8009 \\
        & AdaDetectGPT & \textbf{0.8534} & \textbf{0.8420} & \textbf{0.8532} & \textbf{0.8347} & \textbf{0.8061} & \textbf{0.8379} \\
        \cdashline{2-8}
        & Relative (\parbox[c]{1em}{\tikz{\hspace*{3.9pt}\draw[blue, -latex] (0,0) -- (0,0.3);}}) & 14.1250 & 18.2659 & 21.1936 & 20.3301 & 18.4492 & 18.5779 \\
        \bottomrule
    \end{tabular}
\end{table}

\textbf{Black-box results}. We next consider the \ul{\textit{black-box setting}}, where the task is to detect text generated by three widely used advanced LLMs: GPT-4o \citep{hurst2024gpt}, Claude-3.5-Haiku \citep{anthropic2024claude}, and Gemini-2.5-Flash \citep{comanici2025gemini}. In this setting, token-level log-probabilities are not publicly accessible. To implement Fast-DetectGPT and AdaDetectGPT, we use \texttt{google/gemma-2-9b} and \texttt{google/gemma-2-9b-it} \citep{team2024gemma} as the sampling and scoring models, respectively, to construct the classification statistics described in Section~\ref{sec:preliminary}. 
The results are reported in Tables~\ref{tab:advanced-gpt4o-claude} and~\ref{tab:advanced-gemini}. Overall, Fast-DetectGPT remains the strongest baseline, although it occasionally underperforms Binoculars or RADAR. Nonetheless, AdaDetectGPT consistently improves upon Fast-DetectGPT across various datasets, with gains on Essay reaching up to 37.8\%.  

Additionally, we evaluate AdaDetectGPT's robustness to two adversarial attacks, paraphrasing and decoherence, in the black-box setting. As shown in Table~\ref{tab:adversarial-attack}, AdaDetectGPT demonstrates greater resilience than Fast-DetectGPT to adversarially perturbed texts. The improvement reaches up to 10\% for paraphrasing and up to 85\% for decoherence. Finally, we employ the same five LLMs from Table \ref{tab:main_results_details} to compare Fast-DetectGPT and AdaDetectGPT in black-box settings. Following \cite{bao2024fastdetectgpt}, we use GPT-J as the source LLM for detecting each of the remaining four target LLMs. 
Due to space constraints, the results are presented in Table~\ref{tab:main_results_blackbox_details} of Appendix~\ref{sec:addblackbox}, where AdaDetectGPT uniformly outperforms Fast-DetectGPT in all cases, with improvements of up to 29\%.

\begin{table}[t]
\vspace*{-24pt}
\centering
\caption{AUC scores of various detectors to detect text generated by GPT-4o and Claude-3.5 across datasets.}\label{tab:advanced-gpt4o-claude}
\begin{adjustbox}{width=1.0\textwidth}
\setlength{\tabcolsep}{3pt}
\begin{tabular}{l|ccccc|ccccc}
\toprule
\multirow{2}{*}{\textbf{Method}} & \multicolumn{5}{|c|}{\textbf{GPT-4o}} & \multicolumn{5}{c}{\textbf{Claude-3.5}} \\
\cline{2-11}
& XSum & Writing & Yelp & Essay & Avg. & XSum & Writing & Yelp & Essay & Avg. \\
\midrule
RoBERTaBase   & 0.5141 & 0.5352 & 0.6029 & 0.5739 & 0.5655 & 0.5206 & 0.5386 & 0.5630 & 0.5593 & 0.5566 \\
RoBERTaLarge  & 0.5074 & 0.5827 & 0.5027 & 0.6575 & 0.5626 & 0.5462 & 0.6149 & 0.5105 & 0.6063 & 0.5695 \\
Likelihood    & 0.5194 & 0.7661 & 0.8425 & 0.7849 & 0.7282 & 0.6780 & 0.7502 & 0.7134 & 0.6160 & 0.6894 \\
Entropy       & 0.5397 & 0.7021 & 0.7291 & 0.6951 & 0.6665 & 0.5935 & 0.6594 & 0.6625 & 0.5742 & 0.6142 \\
LogRank       & 0.5123 & 0.7478 & 0.8259 & 0.7786 & 0.7161 & 0.5109 & 0.6653 & 0.7244 & 0.7111 & 0.6529 \\
LRR           & 0.5116 & 0.6099 & 0.6828 & 0.6930 & 0.6243 & 0.5268 & 0.5560 & 0.5527 & 0.6535 & 0.5723 \\
Binoculars    & 0.9022 & 0.9572	& 0.9840 & 0.9777 & 0.9552 & 0.9012 & 0.9393 & 0.9752 & 0.9603 & 0.9440 \\
RADAR         & \textbf{0.9580} & 0.8046 & 0.8558 & 0.8394 & 0.8644 & \textbf{0.9187} & 0.7264 & 0.8424 & 0.9152 & 0.8507 \\
BiScope       & 0.8333 & 0.8733 & 0.9700 & 0.9600 & 0.9092 & 0.8533 & 0.8800 & 0.8800 & 0.9567 & 0.8925 \\
\cdashline{1-11}
Fast-DetectGPT & 0.9048 & 0.9588 & \textbf{0.9847} & 0.9800 & 0.9571 & 0.9019 & 0.9361 & \textbf{0.9768} & 0.9608 & 0.9439 \\
AdaDetectGPT  & 0.9072 & \textbf{0.9611} & 0.9832 & \textbf{0.9841} & \textbf{0.9589} & 0.9176 & \textbf{0.9400} & 0.9728 & \textbf{0.9610} & \textbf{0.9478} \\
\cdashline{1-11}
Relative (\parbox[c]{1em}{\tikz{\hspace*{3.9pt}\draw[blue, -latex] (0,0) -- (0,0.3);}}) & 2.4288 & 5.6095 & --- & 20.4444 & 4.2454 & 16.0326 & 6.0543 & --- & 0.3405 & 6.9929 \\
\hline
\end{tabular}
\end{adjustbox}
\end{table}

\begin{table}[t]
\centering
\caption{Detection of LLM-generated text under two adversarial attacks in black-box settings.}\label{tab:adversarial-attack}
\setlength{\tabcolsep}{3pt}
\begin{tabular}{lcccc|cccc}
\toprule
 & \multicolumn{4}{c}{\textbf{Paraphrasing}} & \multicolumn{4}{c}{\textbf{Decoherence}} \\
\cline{2-9}
DetectGPT & Xsum & Writing & PubMed & Avg. & Xsum & Writing & PubMed & Avg. \\
\midrule
Fast (GPT-J/GPT-2)   & 0.9178 & \textbf{0.9137} & 0.7944 & 0.8753 & 0.7884 & 0.9595 & 0.7870 & 0.8449 \\
Ada (GPT-J/GPT-2)    & \textbf{0.9225} & 0.9121 & \textbf{0.8029} & \textbf{0.8792} & \textbf{0.8765} & \textbf{0.9597} & \textbf{0.8284} & \textbf{0.8882} \\
\hline
Fast (GPT-J/Neo-2.7) & 0.9602 & \textbf{0.9185} & 0.7310 & 0.8699 & 0.8579 & 0.9701 & 0.7609 & 0.8630 \\
Ada (GPT-J/Neo-2.7)  & \textbf{0.9623} & 0.9181 & \textbf{0.7587} & \textbf{0.8797} & \textbf{0.9230} & \textbf{0.9704} & \textbf{0.8124} & \textbf{0.9019} \\
\hline
Fast (GPT-J/GPT-J)         & 0.9537 & \textbf{0.9458} & 0.7041 & 0.8679 & 0.8836 & \textbf{0.9869} & 0.7550 & 0.8752 \\
Ada (GPT-J/GPT-J)    & \textbf{0.9587} & 0.9449 & \textbf{0.7308} & \textbf{0.8781} & \textbf{0.9336} & 0.9864 & \textbf{0.8008} & \textbf{0.9070} \\
\bottomrule
\end{tabular}
\end{table}

\section{Conclusion}
We propose AdaDetectGPT, an adaptive LLM detector that learns a witness function $w$ to boost the performance of existing logits-based detectors. A natural approach to learning $w$ is to maximize the TNR of the resulting detector for a fixed FNR level $\alpha$. Our proposal has two novelties. First, by connecting Fast-DetectGPT’s statistic to martingale theory and applying the MCLT, we obtain a closed-form expression for the classification threshold that achieves FNR control at level $\alpha$. Second, the TNR is a highly complex function of $\alpha$ and $w$, which makes the learned witness function $\alpha$-dependent --- that is, it maximizes the TNR at a particular FNR level but does not guarantee optimality at other FNR levels. To address this, we derive a lower bound on the TNR and propose to learn $w$ by maximizing this lower bound. Our lower bound separates the effects of $\alpha$ and $w$: the witness function $w$ affects the lower bound only through $T_w^{(2)*}$, which is independent of $\alpha$. Consequently, the witness function that maximizes this lower bound simultaneously maximizes it across all FNR levels.

In our implementation, we opted to learn the witness function via a B-spline basis due to the straightforward nature of the optimization (finding the optimal witness function boils down to solving a system of linear equations) and its favorable theoretical properties \citep[the estimation error can attain Stone's optimal convergence rate,][see Appendix \ref{sec: smooth witness function} for more details]{stone1982optimal}. The number of basis functions can be selected in a data driven way via Lepski's method \citep{lepski1997optimal}.

\section*{Acknowledgments}
Chengchun Shi’s and Jin Zhu’s research was partially supported by the EPSRC grant EP/W014971/1. Hongyi Zhou’s and Ying Yang’s research was partially supported by NSFC 12271286 \& 11931001. Hongyi Zhou’s research was also partially supported by the China Scholarship Council. 

The authors thank the anonymous referees and the area chair for their insightful and constructive comments, which have led to a significantly improved version of the paper.

\bibliographystyle{icml2025}
\bibliography{reference}

@inproceedings{zhou2026learn,
  title={Learn-to-Distance: Distance Learning for Detecting LLM-Generated Text},
  author={Zhou, Hongyi and Zhu, Jin and Xu, Erhan and Ye, Kai and Yang, Ying and Shi, Chengchun},
  booktitle={The Fourteenth International Conference on Learning Representations},
  abbr={ICLR},
  year={2026},
}

@article{zhou2026detecting,
  title={Detecting LLM-Generated Text with Performance Guarantees},
  author={Zhou, Hongyi and Zhu, Jin and Yang, Ying and Shi, Chengchun},
  journal={arXiv preprint arXiv:2601.06586},
  year={2026}
}

@article{team2024gemma,
  title={Gemma 2: Improving open language models at a practical size},
  author={Team, Gemma and Riviere, Morgane and Pathak, Shreya and Sessa, Pier Giuseppe and Hardin, Cassidy and Bhupatiraju, Surya and Hussenot, L{\'e}onard and Mesnard, Thomas and Shahriari, Bobak and Ram{\'e}, Alexandre and others},
  journal={arXiv preprint arXiv:2408.00118},
  year={2024}
}

@inproceedings{yang2024survey,
    title = "A Survey on Detection of {LLM}s-Generated Content",
    author = "Yang, Xianjun  and
      Pan, Liangming  and
      Zhao, Xuandong  and
      Chen, Haifeng  and
      Petzold, Linda Ruth  and
      Wang, William Yang  and
      Cheng, Wei",
    editor = "Al-Onaizan, Yaser  and
      Bansal, Mohit  and
      Chen, Yun-Nung",
    booktitle = "Findings of the Association for Computational Linguistics: EMNLP 2024",
    month = nov,
    year = "2024",
    address = "Miami, Florida, USA",
    publisher = "Association for Computational Linguistics",
    pages = "9786--9805",
}

@article{shi2020sparse,
  title={A Sparse Random Projection-Based Test for Overall Qualitative Treatment Effects},
  author={Shi, Chengchun and Lu, Wenbin and Song, Rui},
  journal={Journal of the American Statistical Association},
  volume={115},
  number={531},
  pages={1201--1213},
  year={2020},
  publisher={Taylor \& Francis Journals}
}

@article{shi2022statistical,
  title={Statistical inference of the value function for reinforcement learning in infinite-horizon settings},
  author={Shi, Chengchun and Zhang, Sheng and Lu, Wenbin and Song, Rui},
  journal={Journal of the Royal Statistical Society Series B: Statistical Methodology},
  volume={84},
  number={3},
  pages={765--793},
  year={2022},
  publisher={Oxford University Press}
}

@article{qian2011performance,
  title={Performance guarantees for individualized treatment rules},
  author={Qian, Min and Murphy, Susan A},
  journal={Annals of statistics},
  volume={39},
  number={2},
  pages={1180},
  year={2011}
}

@article{luedtke2016statistical,
  title={Statistical inference for the mean outcome under a possibly non-unique optimal treatment strategy},
  author={Luedtke, Alexander R and Van Der Laan, Mark J},
  journal={Annals of statistics},
  volume={44},
  number={2},
  pages={713},
  year={2016}
}

@misc{llama3modelcard,
    title={Llama 3 Model Card},
    author={AI\@Meta},
    year={2024},
    url = {https://github.com/meta-llama/llama3/blob/main/MODEL_CARD.md}
}

@article{yang2024technical,
  author    = {An Yang and Baosong Yang and Beichen Zhang and Binyuan Hui and Bo Zheng and Bowen Yu and Chengyuan Li and Dayiheng Liu and Fei Huang and Haoran Wei and Huan Lin and Jian Yang and Jianhong Tu and Jianwei Zhang and Jianxin Yang and Jiaxi Yang and Jingren Zhou and Junyang Lin and Kai Dang and Keming Lu and Keqin Bao and Kexin Yang and Le Yu and Mei Li and Mingfeng Xue and Pei Zhang and Qin Zhu and Rui Men and Runji Lin and Tianhao Li and Tianyi Tang and Tingyu Xia and Xingzhang Ren and Xuancheng Ren and Yang Fan and Yang Su and Yichang Zhang and Yu Wan and Yuqiong Liu and Zeyu Cui and Zhenru Zhang and Zihan Qiu and Qwen Team},
  title     = {Qwen2.5 Technical Report},
  journal   = {arXiv preprint arXiv:2412.15115},
  year      = {2024},
  month     = {12},
  eprint    = {2412.15115},
}

@inproceedings{verma2024ghostbuster,
    title = "Ghostbuster: Detecting Text Ghostwritten by Large Language Models",
    author = "Verma, Vivek  and
      Fleisig, Eve  and
      Tomlin, Nicholas  and
      Klein, Dan",
    editor = "Duh, Kevin  and
      Gomez, Helena  and
      Bethard, Steven",
    booktitle = "Proceedings of the 2024 Conference of the North American Chapter of the Association for Computational Linguistics: Human Language Technologies (Volume 1: Long Papers)",
    month = jun,
    year = "2024",
    address = "Mexico City, Mexico",
    publisher = "Association for Computational Linguistics",
    pages = "1702--1717",
}

@inproceedings{zhangCharacterlevelConvolutionalNetworks2015,
author = {Zhang, Xiang and Zhao, Junbo and LeCun, Yann},
title = {Character-level convolutional networks for text classification},
year = {2015},
publisher = {MIT Press},
address = {Cambridge, MA, USA},
booktitle = {Proceedings of the 29th International Conference on Neural Information Processing Systems - Volume 1},
pages = {649–657},
numpages = {9},
location = {Montreal, Canada},
}

@article{hurst2024gpt,
  title={{GPT}-4o system card},
  author={Hurst, Aaron and Lerer, Adam and Goucher, Adam P and Perelman, Adam and Ramesh, Aditya and Clark, Aidan and Ostrow, AJ and Welihinda, Akila and Hayes, Alan and Radford, Alec and others},
  journal={arXiv preprint arXiv:2410.21276},
  year={2024}
}

@article{comanici2025gemini,
  title={Gemini 2.5: Pushing the frontier with advanced reasoning, multimodality, long context, and next generation agentic capabilities},
  author={Comanici, Gheorghe and Bieber, Eric and Schaekermann, Mike and Pasupat, Ice and Sachdeva, Noveen and Dhillon, Inderjit and Blistein, Marcel and Ram, Ori and Zhang, Dan and Rosen, Evan and others},
  journal={arXiv preprint arXiv:2507.06261},
  year={2025}
}

@misc{anthropic2024claude,
  title        = {Claude 3: Next-Generation AI Models},
  author       = {Anthropic},
  year         = {2024},
  howpublished = {\url{https://www.anthropic.com/claude}},
}

@article{albert2024mistral,
  title={Mistral 7B},
  author={Albert Q. Jiang and Alexandre Sablayrolles and Arthur Mensch and Chris Bamford and Devendra Singh Chaplot and Diego de las Casas and Florian Bressand and Gianna Lengyel and Guillaume Lample and Lucile Saulnier and Lélio Renard Lavaud and Marie-Anne Lachaux and Pierre Stock and Teven Le Scao and Thibaut Lavril and Thomas Wang and Timothée Lacroix and William El Sayed},
  journal={arXiv preprint arXiv:2310.06825},
  year={2023}
}

@article{bai2025qwen2,
  title={{Qwen2.5-VL} technical report},
  author={Bai, Shuai and Chen, Keqin and Liu, Xuejing and Wang, Jialin and Ge, Wenbin and Song, Sibo and Dang, Kai and Wang, Peng and Wang, Shijie and Tang, Jun and others},
  journal={arXiv preprint arXiv:2502.13923},
  year={2025}
}

@inproceedings{chen2025imitate,
  author={Chen, Jiaqi and Zhu, Xiaoye and Liu, Tianyang and Chen, Ying and Xinhui, Chen and Yuan, Yiwen and Leong, Chak Tou and Li, Zuchao and Tang, Long and Zhang, Lei and others},
  title={Imitate Before Detect: Aligning Machine Stylistic Preference for Machine-Revised Text Detection},
  booktitle={Proceedings of the {AAAI} Conference on Artificial Intelligence},
  volume={39},
  number={22},
  pages={23559--23567},
  year={2025},
}

@inproceedings{yu2024dpic,
 author = {Yu, Xiao and Qi, Yuang and Chen, Kejiang and Chen, Guoqiang and Yang, Xi and Zhu, Pengyuan and Shang, Xiuwei and Zhang, Weiming and Yu, Nenghai},
 booktitle = {Advances in Neural Information Processing Systems},
 editor = {A. Globerson and L. Mackey and D. Belgrave and A. Fan and U. Paquet and J. Tomczak and C. Zhang},
 pages = {16194--16212},
 publisher = {Curran Associates, Inc.},
 title = {{DPIC}: Decoupling Prompt and Intrinsic Characteristics for LLM Generated Text Detection},
 volume = {37},
 year = {2024}
}

@inproceedings{guo2024detective,
 author = {Guo, Xun and Zhang, Shan and He, Yongxin and Zhang, Ting and Feng, Wanquan and Huang, Haibin and Ma, Chongyang},
 booktitle = {Advances in Neural Information Processing Systems},
 editor = {A. Globerson and L. Mackey and D. Belgrave and A. Fan and U. Paquet and J. Tomczak and C. Zhang},
 pages = {88320--88347},
 publisher = {Curran Associates, Inc.},
 title = {DeTeCtive: Detecting {AI}-generated Text via Multi-Level Contrastive Learning},
 volume = {37},
 year = {2024}
}

@inproceedings{abburi2023generative,
    title = "A Simple yet Efficient Ensemble Approach for {AI}-generated Text Detection",
    author = "Abburi, Harika  and
      Roy, Kalyani  and
      Suesserman, Michael  and
      Pudota, Nirmala  and
      Veeramani, Balaji  and
      Bowen, Edward  and
      Bhattacharya, Sanmitra",
    editor = "Gehrmann, Sebastian  and
      Wang, Alex  and
      Sedoc, Jo{\~a}o  and
      Clark, Elizabeth  and
      Dhole, Kaustubh  and
      Chandu, Khyathi Raghavi  and
      Santus, Enrico  and
      Sedghamiz, Hooman",
    booktitle = "Proceedings of the Third Workshop on Natural Language Generation, Evaluation, and Metrics (GEM)",
    month = dec,
    year = "2023",
    address = "Singapore",
    publisher = "Association for Computational Linguistics",
    pages = "413--421",
}

@article{milano2023large,
  title={Large language models challenge the future of higher education},
  author={Milano, Silvia and McGrane, Joshua A and Leonelli, Sabina},
  journal={Nature Machine Intelligence},
  volume={5},
  number={4},
  pages={333--334},
  year={2023},
  publisher={Nature Publishing Group UK London}
}

@article{demszky2023using,
  title={Using large language models in psychology},
  author={Demszky, Dorottya and Yang, Diyi and Yeager, David S and Bryan, Christopher J and Clapper, Margarett and Chandhok, Susannah and Eichstaedt, Johannes C and Hecht, Cameron and Jamieson, Jeremy and Johnson, Meghann and others},
  journal={Nature Reviews Psychology},
  volume={2},
  number={11},
  pages={688--701},
  year={2023},
  publisher={Nature Publishing Group US New York}
}

@article{anil2024generativeAI,
    author = {Anil R. Doshi  and Oliver P. Hauser },
    title = {Generative {AI} enhances individual creativity but reduces the collective diversity of novel content},
    journal = {Science Advances},
    volume = {10},
    number = {28},
    pages = {eadn5290},
    year = {2024},
}

@inproceedings{giboulot2024watermax,
    title={{WaterMax}: breaking the {LLM} watermark detectability-robustness-quality trade-off},
    author={Eva Giboulot and Teddy Furon},
    booktitle={The Thirty-eighth Annual Conference on Neural Information Processing Systems},
    year={2024},
}

@article{ji2025overview,
  title={An overview of large language models for statisticians},
  author={Ji, Wenlong and Yuan, Weizhe and Getzen, Emily and Cho, Kyunghyun and Jordan, Michael I and Mei, Song and Weston, Jason E and Su, Weijie J and Xu, Jing and Zhang, Linjun},
  journal={arXiv preprint arXiv:2502.17814},
  year={2025}
}

@inproceedings{wu2024resilient,
    author = {Wu, Yihan and Hu, Zhengmian and Guo, Junfeng and Zhang, Hongyang and Huang, Heng},
    title = {A resilient and accessible distribution-preserving watermark for large language models},
    year = {2024},
    booktitle = {Proceedings of the 41st International Conference on Machine Learning},
    articleno = {2190},
    numpages = {28},
    location = {Vienna, Austria},
    series = {ICML'24}
}

@inproceedings{wouters2024optimizewatermark,
    author = {Wouters, Bram},
    title = {Optimizing watermarks for large language models},
    year = {2024},
    booktitle = {Proceedings of the 41st International Conference on Machine Learning},
    articleno = {2182},
    numpages = {19},
    location = {Vienna, Austria},
    series = {ICML'24}
}

@article{kuditipudi2024robust,
    title={Robust Distortion-free Watermarks for Language Models},
    author={Rohith Kuditipudi and John Thickstun and Tatsunori Hashimoto and Percy Liang},
    journal={Transactions on Machine Learning Research},
    issn={2835-8856},
    year={2024},
}

@inproceedings{golowich2024edit,
    title={Edit Distance Robust Watermarks via Indexing Pseudorandom Codes},
    author={Noah Golowich and Ankur Moitra},
    booktitle={The Thirty-eighth Annual Conference on Neural Information Processing Systems},
    year={2024},
}

@article{li2024robust,
    author = {Li, Xiang and Ruan, Feng and Wang, Huiyuan and Long, Qi and Su, Weijie J},
    title = {Robust detection of watermarks for large language models under human edits},
    journal = {Journal of the Royal Statistical Society Series B: Statistical Methodology},
    year = {2025},
    month = {09},
    issn = {1369-7412},
}

@article{krishna2023paraphrasing,
  title={Paraphrasing evades detectors of {AI}-generated text, but retrieval is an effective defense},
  author={Krishna, Kalpesh and Song, Yixiao and Karpinska, Marzena and Wieting, John and Iyyer, Mohit},
  journal={Advances in Neural Information Processing Systems},
  volume={36},
  pages={27469--27500},
  year={2023}
}

@article{liang2023gpt,
  title={{GPT} detectors are biased against non-native English writers},
  author={Liang, Weixin and Yuksekgonul, Mert and Mao, Yining and Wu, Eric and Zou, James},
  journal={Patterns},
  volume={4},
  number={7},
  year={2023},
  publisher={Elsevier}
}

@article{sadasivan2025can,
    title={Can {AI}-Generated Text be Reliably Detected? Stress Testing {AI} Text Detectors Under Various Attacks},
    author={Vinu Sankar Sadasivan and Aounon Kumar and Sriram Balasubramanian and Wenxiao Wang and Soheil Feizi},
    journal={Transactions on Machine Learning Research},
    issn={2835-8856},
    year={2025},
}

@article{Narayan2018DontGM,
  title={Don't Give Me the Details, Just the Summary! Topic-Aware Convolutional Neural Networks for Extreme Summarization},
  author={Shashi Narayan and Shay B. Cohen and Mirella Lapata},
  journal={ArXiv},
  year={2018},
  volume={abs/1808.08745}
}

@inproceedings{koike2024outfox,
  title={Outfox: Llm-generated essay detection through in-context learning with adversarially generated examples},
  author={Koike, Ryuto and Kaneko, Masahiro and Okazaki, Naoaki},
  booktitle={Proceedings of the {AAAI} Conference on Artificial Intelligence},
  volume={38},
  number={19},
  pages={21258--21266},
  year={2024}
}

@article{2020t5,
  author  = {Colin Raffel and Noam Shazeer and Adam Roberts and Katherine Lee and Sharan Narang and Michael Matena and Yanqi Zhou and Wei Li and Peter J. Liu},
  title   = {Exploring the Limits of Transfer Learning with a Unified Text-to-Text Transformer},
  journal = {Journal of Machine Learning Research},
  year    = {2020},
  volume  = {21},
  number  = {140},
  pages   = {1-67},
}

@article{gretton2012kernel,
  title={A kernel two-sample test},
  author={Gretton, Arthur and Borgwardt, Karsten M and Rasch, Malte J and Sch{\"o}lkopf, Bernhard and Smola, Alexander},
  journal={Journal of Machine Learning Research},
  volume={13},
  number={1},
  pages={723--773},
  year={2012},
}

@article{cai2025statistical,
  title={A Statistical Hypothesis Testing Framework for Data Misappropriation Detection in Large Language Models},
  author={Cai, Yinpeng and Li, Lexin and Zhang, Linjun},
  journal={arXiv preprint arXiv:2501.02441},
  year={2025}
}

@article{grattafiori2024llama,
  title={The {Llama} 3 herd of models},
  author={Grattafiori, Aaron and Dubey, Abhimanyu and Jauhri, Abhinav and Pandey, Abhinav and Kadian, Abhishek and Al-Dahle, Ahmad and Letman, Aiesha and Mathur, Akhil and Schelten, Alan and Vaughan, Alex and others},
  journal={arXiv preprint arXiv:2407.21783},
  year={2024}
}

@article{bi2024deepseek,
  title={Deepseek {LLM}: Scaling open-source language models with longtermism},
  author={Bi, Xiao and Chen, Deli and Chen, Guanting and Chen, Shanhuang and Dai, Damai and Deng, Chengqi and Ding, Honghui and Dong, Kai and Du, Qiushi and Fu, Zhe and others},
  journal={arXiv preprint arXiv:2401.02954},
  year={2024}
}

@article{brown1971martingale,
  title={Martingale central limit theorems},
  author={Brown, Bruce M},
  journal={The Annals of Mathematical Statistics},
  pages={59--66},
  year={1971},
  publisher={JSTOR}
}

@inproceedings{ippolito2020bert,
    title = "Automatic Detection of Generated Text is Easiest when Humans are Fooled",
    author = "Ippolito, Daphne  and
      Duckworth, Daniel  and
      Callison-Burch, Chris  and
      Eck, Douglas",
    editor = "Jurafsky, Dan  and
      Chai, Joyce  and
      Schluter, Natalie  and
      Tetreault, Joel",
    booktitle = "Proceedings of the 58th Annual Meeting of the Association for Computational Linguistics",
    month = jul,
    year = "2020",
    address = "Online",
    publisher = "Association for Computational Linguistics",
    pages = "1808--1822",
}

@misc{GPT-NeoX,
  doi = {10.48550/ARXIV.2204.06745},
  url = {https://arxiv.org/abs/2204.06745},
  author = {Black, Sid and Biderman, Stella and Hallahan, Eric and Anthony, Quentin and Gao, Leo and Golding, Laurence and He, Horace and Leahy, Connor and McDonell, Kyle and Phang, Jason and Pieler, Michael and Prashanth, USVSN Sai and Purohit, Shivanshu and Reynolds, Laria and Tow, Jonathan and Wang, Ben and Weinbach, Samuel},
  keywords = {Computation and Language (cs.CL), FOS: Computer and information sciences, FOS: Computer and information sciences},
  title = {{GPT-NeoX-20B}: An Open-Source Autoregressive Language Model},
  publisher = {arXiv},
  year = {2022},
  copyright = {Creative Commons Attribution 4.0 International}
}

@misc{gpt-j,
  author = {Wang, Ben and Komatsuzaki, Aran},
  title = {{GPT-J-6B: A 6 Billion Parameter Autoregressive Language Model}},
  howpublished = {\url{https://github.com/kingoflolz/mesh-transformer-jax}},
  year = 2021,
  month = May
}

@article{radford2019language,
  title={Language models are unsupervised multitask learners},
  author={Radford, Alec and Wu, Jeffrey and Child, Rewon and Luan, David and Amodei, Dario and Sutskever, Ilya and others},
  journal={OpenAI blog},
  volume={1},
  number={8},
  pages={9},
  year={2019}
}

@misc{zhang2022opt,
      title={{OPT}: Open Pre-trained Transformer Language Models}, 
      author={Susan Zhang and Stephen Roller and Naman Goyal and Mikel Artetxe and Moya Chen and Shuohui Chen and Christopher Dewan and Mona Diab and Xian Li and Xi Victoria Lin and Todor Mihaylov and Myle Ott and Sam Shleifer and Kurt Shuster and Daniel Simig and Punit Singh Koura and Anjali Sridhar and Tianlu Wang and Luke Zettlemoyer},
      year={2022},
      eprint={2205.01068},
      archivePrefix={arXiv},
      primaryClass={cs.CL}
}

@misc{gpt-neo,
  author       = {Black, Sid and
                  Leo, Gao and
                  Wang, Phil and
                  Leahy, Connor and
                  Biderman, Stella},
  title        = {{GPT-Neo: Large Scale Autoregressive Language 
                   Modeling with Mesh-Tensorflow}},
  month        = mar,
  year         = 2021,
  publisher    = {Zenodo},
  version      = {1.0},
  doi          = {10.5281/zenodo.5297715},
  url          = {https://doi.org/10.5281/zenodo.5297715}
}

@inproceedings{lee2024remodetect,
    title={ReMoDetect: Reward Models Recognize Aligned {LLM}'s Generations},
    author={Hyunseok Lee and Jihoon Tack and Jinwoo Shin},
    booktitle={The Thirty-eighth Annual Conference on Neural Information Processing Systems},
    year={2024},
}

@inproceedings{mao2024raidar,
    title={Raidar: geneRative {AI} Detection viA Rewriting},
    author={Chengzhi Mao and Carl Vondrick and Hao Wang and Junfeng Yang},
    booktitle={The Twelfth International Conference on Learning Representations},
    year={2024},
}

@article{kumarage2023stylometric,
  title={Stylometric detection of {AI}-generated text in twitter timelines},
  author={Kumarage, Tharindu and Garland, Joshua and Bhattacharjee, Amrita and Trapeznikov, Kirill and Ruston, Scott and Liu, Huan},
  journal={arXiv preprint arXiv:2303.03697},
  year={2023}
}

@article{mitrovic2023chatgpt,
  title={{ChatGPT} or human? detect and explain. explaining decisions of machine learning model for detecting short {ChatGPT}-generated text},
  author={Mitrovi{\'c}, Sandra and Andreoletti, Davide and Ayoub, Omran},
  journal={arXiv preprint arXiv:2301.13852},
  year={2023}
}

@article{guo2023close,
  title={How close is {ChatGPT} to human experts? comparison corpus, evaluation, and detection},
  author={Guo, Biyang and Zhang, Xin and Wang, Ziyuan and Jiang, Minqi and Nie, Jinran and Ding, Yuxuan and Yue, Jianwei and Wu, Yupeng},
  journal={arXiv preprint arXiv:2301.07597},
  year={2023}
}

@article{guo2024biscope,
  title={BiScope: {AI}-generated Text Detection by Checking Memorization of Preceding Tokens},
  author={Guo, Hanxi and Cheng, Siyuan and Jin, Xiaolong and Zhang, Zhuo and Zhang, Kaiyuan and Tao, Guanhong and Shen, Guangyu and Zhang, Xiangyu},
  journal={Advances in Neural Information Processing Systems},
  volume={37},
  pages={104065--104090},
  year={2024}
}

@article{herbold2023large,
  title={A large-scale comparison of human-written versus {ChatGPT}-generated essays},
  author={Herbold, Steffen and Hautli-Janisz, Annette and Heuer, Ute and Kikteva, Zlata and Trautsch, Alexander},
  journal={Scientific reports},
  volume={13},
  number={1},
  pages={18617},
  year={2023},
  publisher={Nature Publishing Group UK London}
}

@inproceedings{crothers2022adversarial,
  title={Adversarial robustness of neural-statistical features in detection of generative transformers},
  author={Crothers, Evan and Japkowicz, Nathalie and Viktor, Herna and Branco, Paula},
  booktitle={2022 international joint conference on neural networks (IJCNN)},
  pages={1--8},
  year={2022},
  organization={IEEE}
}

@article{liao2023differentiating,
  title={Differentiating {ChatGPT}-generated and human-written medical texts: quantitative study},
  author={Liao, Wenxiong and Liu, Zhengliang and Dai, Haixing and Xu, Shaochen and Wu, Zihao and Zhang, Yiyang and Huang, Xiaoke and Zhu, Dajiang and Cai, Hongmin and Li, Quanzheng and others},
  journal={JMIR Medical Education},
  volume={9},
  number={1},
  pages={e48904},
  year={2023},
  publisher={JMIR Publications Inc., Toronto, Canada}
}

@inproceedings{zeng2024dlad,
    title={{DLAD}: Improving Logits-based Detector without Logits from Black-box {LLM}s},
    author={Cong Zeng and Shengkun Tang and Xianjun Yang and Yuanzhou Chen and Yiyou Sun and zhiqiang xu and Yao Li and Haifeng Chen and Wei Cheng and Dongkuan Xu},
    booktitle={The Thirty-eighth Annual Conference on Neural Information Processing Systems},
    year={2024},
}

@inproceedings{fan2018hierarchical,
    title = "Hierarchical Neural Story Generation",
    author = "Fan, Angela  and
      Lewis, Mike  and
      Dauphin, Yann",
    editor = "Gurevych, Iryna  and
      Miyao, Yusuke",
    booktitle = "Proceedings of the 56th Annual Meeting of the Association for Computational Linguistics (Volume 1: Long Papers)",
    month = jul,
    year = "2018",
    address = "Melbourne, Australia",
    publisher = "Association for Computational Linguistics",
    pages = "889--898",
}

@inproceedings{rajpurkar2016squad,
    title = "{SQ}u{AD}: 100,000+ Questions for Machine Comprehension of Text",
    author = "Rajpurkar, Pranav  and
      Zhang, Jian  and
      Lopyrev, Konstantin  and
      Liang, Percy",
    editor = "Su, Jian  and
      Duh, Kevin  and
      Carreras, Xavier",
    booktitle = "Proceedings of the 2016 Conference on Empirical Methods in Natural Language Processing",
    month = nov,
    year = "2016",
    address = "Austin, Texas",
    publisher = "Association for Computational Linguistics",
    pages = "2383--2392",
    eprint={1606.05250},
    archivePrefix={arXiv},
    primaryClass={cs.CL},
}

@inproceedings{aaronson2023watermarking,
  title={Watermarking of large language models},
  author={Aaronson, Scott and Kirchner, H},
  booktitle={Large Language Models and Transformers Workshop at Simons Institute for the Theory of Computing},
  year={2023}
}

@inproceedings{chen2025a,
    title={A Watermark for Order-Agnostic Language Models},
    author={Ruibo Chen and Yihan Wu and Yanshuo Chen and Chenxi Liu and Junfeng Guo and Heng Huang},
    booktitle={The Thirteenth International Conference on Learning Representations},
    year={2025},
}

@inproceedings{christ2024undetectable,
  title={Undetectable watermarks for language models},
  author={Christ, Miranda and Gunn, Sam and Zamir, Or},
  booktitle={The Thirty Seventh Annual Conference on Learning Theory},
  pages={1125--1139},
  year={2024},
  organization={PMLR}
}

@inproceedings{hu2024unbiased,
    title={Unbiased Watermark for Large Language Models},
    author={Zhengmian Hu and Lichang Chen and Xidong Wu and Yihan Wu and Hongyang Zhang and Heng Huang},
    booktitle={The Twelfth International Conference on Learning Representations},
    year={2024},
}

@article{dathathri2024scalable,
  title={Scalable watermarking for identifying large language model outputs},
  author={Dathathri, Sumanth and See, Abigail and Ghaisas, Sumedh and Huang, Po-Sen and McAdam, Rob and Welbl, Johannes and Bachani, Vandana and Kaskasoli, Alex and Stanforth, Robert and Matejovicova, Tatiana and others},
  journal={Nature},
  volume={634},
  number={8035},
  pages={818--823},
  year={2024},
  publisher={Nature Publishing Group UK London}
}

@book{de1978practical,
  title={A practical guide to splines},
  author={De Boor, Carl},
  volume={27},
  year={1978},
  publisher={springer New York}
}

@article{solaiman2019release,
  title={Release strategies and the social impacts of language models},
  author={Solaiman, Irene and Brundage, Miles and Clark, Jack and Askell, Amanda and Herbert-Voss, Ariel and Wu, Jeff and Radford, Alec and Krueger, Gretchen and Kim, Jong Wook and Kreps, Sarah and others},
  journal={arXiv preprint arXiv:1908.09203},
  year={2019}
}

@inproceedings{song2025deep,
title={Deep Kernel Relative Test for Machine-generated Text Detection},
author={Yiliao Song and Zhenqiao Yuan and Shuhai Zhang and Zhen Fang and Jun Yu and Feng Liu},
booktitle={The Thirteenth International Conference on Learning Representations},
year={2025},
}

@book{hall2014martingale,
  title={Martingale limit theory and its application},
  author={Hall, Peter and Heyde, Christopher C},
  year={2014},
  publisher={Academic press}
}

@inproceedings{bao2024fastdetectgpt,
    title={{Fast-DetectGPT}: Efficient Zero-Shot Detection of Machine-Generated Text via Conditional Probability Curvature},
    author={Guangsheng Bao and Yanbin Zhao and Zhiyang Teng and Linyi Yang and Yue Zhang},
    booktitle={The Twelfth International Conference on Learning Representations},
    year={2024},
}

@inproceedings{su2023detectllm,
    title = "{D}etect{LLM}: Leveraging Log Rank Information for Zero-Shot Detection of Machine-Generated Text",
    author = "Su, Jinyan  and
      Zhuo, Terry  and
      Wang, Di  and
      Nakov, Preslav",
    editor = "Bouamor, Houda  and
      Pino, Juan  and
      Bali, Kalika",
    booktitle = "Findings of the Association for Computational Linguistics: EMNLP 2023",
    month = dec,
    year = "2023",
    address = "Singapore",
    publisher = "Association for Computational Linguistics",
    pages = "12395--12412",
}

@inproceedings{mitchell2023detectgpt,
  title={Detectgpt: Zero-shot machine-generated text detection using probability curvature},
  author={Mitchell, Eric and Lee, Yoonho and Khazatsky, Alexander and Manning, Christopher D and Finn, Chelsea},
  booktitle={International Conference on Machine Learning},
  pages={24950--24962},
  year={2023},
  organization={PMLR}
}

@article{hu2023radar,
  title={Radar: Robust {AI}-text detection via adversarial learning},
  author={Hu, Xiaomeng and Chen, Pin-Yu and Ho, Tsung-Yi},
  journal={Advances in neural information processing systems},
  volume={36},
  pages={15077--15095},
  year={2023}
}

@article{wu2025survey,
  title={A survey on {LLM}-generated text detection: Necessity, methods, and future directions},
  author={Wu, Junchao and Yang, Shu and Zhan, Runzhe and Yuan, Yulin and Chao, Lidia Sam and Wong, Derek Fai},
  journal={Computational Linguistics},
  pages={1--66},
  year={2025},
  publisher={MIT Press 255 Main Street, 9th Floor, Cambridge, Massachusetts 02142, USA~…}
}

@inproceedings{tulchinskii2023intrinsic,
  title={Intrinsic dimension estimation for robust detection of {AI}-generated texts},
  author={Tulchinskii, Eduard and Kuznetsov, Kristian and Kushnareva, Laida and Cherniavskii, Daniil and Nikolenko, Sergey and Burnaev, Evgeny and Barannikov, Serguei and Piontkovskaya, Irina},
  booktitle={Advances in Neural Information Processing Systems},
  volume={36},
  pages={39257--39276},
  year={2023}
}

@inproceedings{hans2024spotting,
    author = {Hans, Abhimanyu and Schwarzschild, Avi and Cherepanova, Valeriia and Kazemi, Hamid and Saha, Aniruddha and Goldblum, Micah and Geiping, Jonas and Goldstein, Tom},
    title = {Spotting {LLM}s with binoculars: zero-shot detection of machine-generated text},
    year = {2024},
    booktitle = {Proceedings of the 41st International Conference on Machine Learning},
    articleno = {698},
    numpages = {19},
    location = {Vienna, Austria},
}

@inproceedings{gehrmann2019gltr,
    title = "{GLTR}: Statistical Detection and Visualization of Generated Text",
    author = "Gehrmann, Sebastian  and Strobelt, Hendrik  and Rush, Alexander",
    editor = "Costa-juss{\`a}, Marta R. and Alfonseca, Enrique",
    booktitle = "Proceedings of the 57th Annual Meeting of the Association for Computational Linguistics: System Demonstrations",
    month = jul,
    year = "2019",
    address = "Florence, Italy",
    publisher = "Association for Computational Linguistics",
    pages = "111--116",
}

@article{li2025statistical,
  title={A statistical framework of watermarks for large language models: Pivot, detection efficiency and optimal rules},
  author={Li, Xiang and Ruan, Feng and Wang, Huiyuan and Long, Qi and Su, Weijie J},
  journal={Annals of Statistics},
  volume={53},
  number={1},
  pages={322--351},
  year={2025},
  publisher={Institute of Mathematical Statistics}
}

@inproceedings{kirchenbauer2023watermark,
  title={A watermark for large language models},
  author={Kirchenbauer, John and Geiping, Jonas and Wen, Yuxin and Katz, Jonathan and Miers, Ian and Goldstein, Tom},
  booktitle={International Conference on Machine Learning},
  pages={17061--17084},
  year={2023},
  organization={PMLR}
}

@inproceedings{yang2024dnagpt,
title={{DNA}-{GPT}: Divergent {N}-Gram Analysis for Training-Free Detection of {GPT}-Generated Text},
author={Xianjun Yang and Wei Cheng and Yue Wu and Linda Ruth Petzold and William Yang Wang and Haifeng Chen},
booktitle={The Twelfth International Conference on Learning Representations},
year={2024},
}

@inproceedings{zhao2024provable,
title={Provable Robust Watermarking for {AI}-Generated Text},
author={Xuandong Zhao and Prabhanjan Vijendra Ananth and Lei Li and Yu-Xiang Wang},
booktitle={The Twelfth International Conference on Learning Representations},
year={2024},
}

@inproceedings{liu2024adaptive,
title={Adaptive Text Watermark for Large Language Models},
author={Yepeng Liu and Yuheng Bu},
booktitle={Forty-first International Conference on Machine Learning},
year={2024},
}

@inproceedings{huo2024tokenspecific,
title={Token-Specific Watermarking with Enhanced Detectability and Semantic Coherence for Large Language Models},
author={Mingjia Huo and Sai Ashish Somayajula and Youwei Liang and Ruisi Zhang and Farinaz Koushanfar and Pengtao Xie},
booktitle={Forty-first International Conference on Machine Learning},
year={2024},
}

@inproceedings{
tian2024multiscale,
title={Multiscale Positive-Unlabeled Detection of {AI}-Generated Texts},
author={Yuchuan Tian and Hanting Chen and Xutao Wang and Zheyuan Bai and QINGHUA ZHANG and Ruifeng Li and Chao Xu and Yunhe Wang},
booktitle={The Twelfth International Conference on Learning Representations},
year={2024},
}

@inproceedings{
zhang2024detecting,
title={Detecting Machine-Generated Texts by Multi-Population Aware Optimization for Maximum Mean Discrepancy},
author={Shuhai Zhang and Yiliao Song and Jiahao Yang and Yuanqing Li and Bo Han and Mingkui Tan},
booktitle={The Twelfth International Conference on Learning Representations},
year={2024},
}

@InProceedings{chakraborty2024position,
  title = 	 {Position: On the Possibilities of {AI}-Generated Text Detection},
  author =       {Chakraborty, Souradip and Bedi, Amrit and Zhu, Sicheng and An, Bang and Manocha, Dinesh and Huang, Furong},
  booktitle = 	 {Proceedings of the 41st International Conference on Machine Learning},
  pages = 	 {6093--6115},
  year = 	 {2024},
  editor = 	 {Salakhutdinov, Ruslan and Kolter, Zico and Heller, Katherine and Weller, Adrian and Oliver, Nuria and Scarlett, Jonathan and Berkenkamp, Felix},
  volume = 	 {235},
  series = 	 {Proceedings of Machine Learning Research},
  month = 	 {21--27 Jul},
  publisher =    {PMLR},
}

@misc{openai2022chatgpt,
  author       = {OpenAI},
  title        = {{ChatGPT}},
  howpublished = {\url{https://chat.openai.com}},
  year         = {2022},
  month        = {December},
  note         = {Accessed: April 28, 2025}
}

@article{chowdhery2023palm,
  author  = {Chowdhery, Aakanksha and Narang, Sharan and Devlin, Jacob and et al.},
  title   = {{PaLM}: Scaling Language Modeling with Pathways},
  journal = {Journal of Machine Learning Research},
  volume  = {24},
  number  = {1},
  pages   = {240:1--240:113},
  year    = {2023},
  publisher = {JMLR.org},
  month   = jan,
}

@misc{ahmed2021detecting,
  author       = {Ahmed, Alim Al Ayub and Aljabouh, Ayman and Donepudi, Praveen Kumar and Choi, Myung Suh},
  title        = {Detecting Fake News Using Machine Learning: A Systematic Literature Review},
  howpublished = {arXiv preprint arXiv:2102.04458},
  year         = {2021},
  archivePrefix = {arXiv},
  eprint       = {2102.04458}
}

@inproceedings{lee2023plagiarize,
  author    = {Lee, Jooyoung and Le, Thai and Chen, Jinghui and Lee, Dongwon},
  title     = {Do Language Models Plagiarize?},
  booktitle = {Proceedings of the ACM Web Conference 2023},
  pages     = {3637--3647},
  year      = {2023}
}

@misc{christian2023cnet,
  author       = {Christian, Jon},
  title        = {{CNET} Secretly Used {AI} on Articles That Didn’t Disclose That Fact, Staff Say},
  howpublished = {Futurism},
  month        = {January},
  year         = {2023}
}

@book{wainwright2019high,
  title={High-dimensional statistics: A non-asymptotic viewpoint},
  author={Wainwright, Martin J},
  volume={48},
  year={2019},
  publisher={Cambridge university press}
}

@article{audibert2007fast,
  title={Fast learning rates for plug-in classifiers},
  author={Audibert, Jean-Yves and Tsybakov, Alexandre B},
  journal={Annals of Statistics},
  volume={35},
  number={2},
  pages={608-633},
  year={2007}
}

@article{shi2020breaking,
  title={Breaking the Curse of Nonregularity with Subagging---Inference of the Mean Outcome under Optimal Treatment Regimes},
  author={Shi, Chengchun and Lu, Wenbin and Song, Rui},
  journal={Journal of Machine Learning Research},
  volume={21},
  number={176},
  pages={1--67},
  year={2020}
}

@article{Bolthausen_1982_exact,
author = {E. Bolthausen},
title = {{Exact Convergence Rates in Some Martingale Central Limit Theorems}},
volume = {10},
journal = {Annals of Probability},
number = {3},
publisher = {Institute of Mathematical Statistics},
pages = {672 -- 688},
keywords = {central limit theorem, Martingales, rates of convergence},
year = {1982},
}

@article{muller1997integral,
  title={Integral probability metrics and their generating classes of functions},
  author={M{\"u}ller, Alfred},
  journal={Advances in applied probability},
  volume={29},
  number={2},
  pages={429--443},
  year={1997},
  publisher={Cambridge University Press}
}

@article{stone1982optimal,
  title={Optimal global rates of convergence for nonparametric regression},
  author={Stone, Charles J},
  journal={Annals of Statistics},
  pages={1040--1053},
  year={1982},
  publisher={JSTOR}
}

@article{chen2015optimal,
  title={Optimal uniform convergence rates and asymptotic normality for series estimators under weak dependence and weak conditions},
  author={Chen, Xiaohong and Christensen, Timothy M},
  journal={Journal of Econometrics},
  volume={188},
  number={2},
  pages={447--465},
  year={2015},
  publisher={Elsevier}
}

@article{lepski1997optimal,
  title={Optimal pointwise adaptive methods in nonparametric estimation},
  author={Lepski, Oleg V and Spokoiny, Vladimir G},
  journal={Annals of Statistics},
  volume={25},
  number={6},
  pages={2512--2546},
  year={1997},
  publisher={Institute of Mathematical Statistics}
}


\newpage


\section*{NeurIPS Paper Checklist}

\begin{enumerate}

\item {\bf Claims}
    \item[] Question: Do the main claims made in the abstract and introduction accurately reflect the paper's contributions and scope?
    \item[] Answer: \answerYes{} 
    \item[] Justification: We confirm the claims in abstract and introduction are accurately reflected by methodology, theory, and experiments in the paper.
    \item[] Guidelines:
    \begin{itemize}
        \item The answer NA means that the abstract and introduction do not include the claims made in the paper.
        \item The abstract and/or introduction should clearly state the claims made, including the contributions made in the paper and important assumptions and limitations. A No or NA answer to this question will not be perceived well by the reviewers. 
        \item The claims made should match theoretical and experimental results, and reflect how much the results can be expected to generalize to other settings. 
        \item It is fine to include aspirational goals as motivation as long as it is clear that these goals are not attained by the paper. 
    \end{itemize}

\item {\bf Limitations}
    \item[] Question: Does the paper discuss the limitations of the work performed by the authors?
    \item[] Answer: \answerYes{} 
    \item[] Justification: We discuss the limitation of the work in Appendix~\ref{sec:limitation}. 
    \item[] Guidelines:
    \begin{itemize}
        \item The answer NA means that the paper has no limitation while the answer No means that the paper has limitations, but those are not discussed in the paper. 
        \item The authors are encouraged to create a separate "Limitations" section in their paper.
        \item The paper should point out any strong assumptions and how robust the results are to violations of these assumptions (e.g., independence assumptions, noiseless settings, model well-specification, asymptotic approximations only holding locally). The authors should reflect on how these assumptions might be violated in practice and what the implications would be.
        \item The authors should reflect on the scope of the claims made, e.g., if the approach was only tested on a few datasets or with a few runs. In general, empirical results often depend on implicit assumptions, which should be articulated.
        \item The authors should reflect on the factors that influence the performance of the approach. For example, a facial recognition algorithm may perform poorly when image resolution is low or images are taken in low lighting. Or a speech-to-text system might not be used reliably to provide closed captions for online lectures because it fails to handle technical jargon.
        \item The authors should discuss the computational efficiency of the proposed algorithms and how they scale with dataset size.
        \item If applicable, the authors should discuss possible limitations of their approach to address problems of privacy and fairness.
        \item While the authors might fear that complete honesty about limitations might be used by reviewers as grounds for rejection, a worse outcome might be that reviewers discover limitations that aren't acknowledged in the paper. The authors should use their best judgment and recognize that individual actions in favor of transparency play an important role in developing norms that preserve the integrity of the community. Reviewers will be specifically instructed to not penalize honesty concerning limitations.
    \end{itemize}

\item {\bf Theory assumptions and proofs}
    \item[] Question: For each theoretical result, does the paper provide the full set of assumptions and a complete (and correct) proof?
    \item[] Answer: \answerYes{} 
    \item[] Justification: The assumptions are intuitively illustrated in Section~\ref{sec:method} and rigorous stated in Appendix~\ref{sec:assumptions}. The proof is attached into Appendix~\ref{sec:proof}.
    \item[] Guidelines:
    \begin{itemize}
        \item The answer NA means that the paper does not include theoretical results. 
        \item All the theorems, formulas, and proofs in the paper should be numbered and cross-referenced.
        \item All assumptions should be clearly stated or referenced in the statement of any theorems.
        \item The proofs can either appear in the main paper or the supplemental material, but if they appear in the supplemental material, the authors are encouraged to provide a short proof sketch to provide intuition. 
        \item Inversely, any informal proof provided in the core of the paper should be complemented by formal proofs provided in appendix or supplemental material.
        \item Theorems and Lemmas that the proof relies upon should be properly referenced. 
    \end{itemize}

    \item {\bf Experimental result reproducibility}
    \item[] Question: Does the paper fully disclose all the information needed to reproduce the main experimental results of the paper to the extent that it affects the main claims and/or conclusions of the paper (regardless of whether the code and data are provided or not)?
    \item[] Answer: \answerYes{} 
    \item[] Justification: The detailed setting for reproducibility are provided in Section~\ref{sec:experiment} and Appendix~\ref{sec:experiment-detail}.
    \item[] Guidelines:
    \begin{itemize}
        \item The answer NA means that the paper does not include experiments.
        \item If the paper includes experiments, a No answer to this question will not be perceived well by the reviewers: Making the paper reproducible is important, regardless of whether the code and data are provided or not.
        \item If the contribution is a dataset and/or model, the authors should describe the steps taken to make their results reproducible or verifiable. 
        \item Depending on the contribution, reproducibility can be accomplished in various ways. For example, if the contribution is a novel architecture, describing the architecture fully might suffice, or if the contribution is a specific model and empirical evaluation, it may be necessary to either make it possible for others to replicate the model with the same dataset, or provide access to the model. In general. releasing code and data is often one good way to accomplish this, but reproducibility can also be provided via detailed instructions for how to replicate the results, access to a hosted model (e.g., in the case of a large language model), releasing of a model checkpoint, or other means that are appropriate to the research performed.
        \item While NeurIPS does not require releasing code, the conference does require all submissions to provide some reasonable avenue for reproducibility, which may depend on the nature of the contribution. For example
        \begin{enumerate}
            \item If the contribution is primarily a new algorithm, the paper should make it clear how to reproduce that algorithm.
            \item If the contribution is primarily a new model architecture, the paper should describe the architecture clearly and fully.
            \item If the contribution is a new model (e.g., a large language model), then there should either be a way to access this model for reproducing the results or a way to reproduce the model (e.g., with an open-source dataset or instructions for how to construct the dataset).
            \item We recognize that reproducibility may be tricky in some cases, in which case authors are welcome to describe the particular way they provide for reproducibility. In the case of closed-source models, it may be that access to the model is limited in some way (e.g., to registered users), but it should be possible for other researchers to have some path to reproducing or verifying the results.
        \end{enumerate}
    \end{itemize}

\item {\bf Open access to data and code}
    \item[] Question: Does the paper provide open access to the data and code, with sufficient instructions to faithfully reproduce the main experimental results, as described in supplemental material?
    \item[] Answer: \answerYes{} 
    \item[] Justification: The dataset and code in this paper are either publicly available or submitted as a new asset.
    \item[] Guidelines:
    \begin{itemize}
        \item The answer NA means that paper does not include experiments requiring code.
        \item Please see the NeurIPS code and data submission guidelines (\url{https://nips.cc/public/guides/CodeSubmissionPolicy}) for more details.
        \item While we encourage the release of code and data, we understand that this might not be possible, so “No” is an acceptable answer. Papers cannot be rejected simply for not including code, unless this is central to the contribution (e.g., for a new open-source benchmark).
        \item The instructions should contain the exact command and environment needed to run to reproduce the results. See the NeurIPS code and data submission guidelines (\url{https://nips.cc/public/guides/CodeSubmissionPolicy}) for more details.
        \item The authors should provide instructions on data access and preparation, including how to access the raw data, preprocessed data, intermediate data, and generated data, etc.
        \item The authors should provide scripts to reproduce all experimental results for the new proposed method and baselines. If only a subset of experiments are reproducible, they should state which ones are omitted from the script and why.
        \item At submission time, to preserve anonymity, the authors should release anonymized versions (if applicable).
        \item Providing as much information as possible in supplemental material (appended to the paper) is recommended, but including URLs to data and code is permitted.
    \end{itemize}

\item {\bf Experimental setting/details}
    \item[] Question: Does the paper specify all the training and test details (e.g., data splits, hyperparameters, how they were chosen, type of optimizer, etc.) necessary to understand the results?
    \item[] Answer: \answerYes{} 
    \item[] Justification: The details for experiments are shown in Appendix~\ref{sec:experiment-detail}.
    \item[] Guidelines:
    \begin{itemize}
        \item The answer NA means that the paper does not include experiments.
        \item The experimental setting should be presented in the core of the paper to a level of detail that is necessary to appreciate the results and make sense of them.
        \item The full details can be provided either with the code, in appendix, or as supplemental material.
    \end{itemize}

\item {\bf Experiment statistical significance}
    \item[] Question: Does the paper report error bars suitably and correctly defined or other appropriate information about the statistical significance of the experiments?
    \item[] Answer: \answerYes{} 
    \item[] Justification: The results are accompanied by the error bars as can be seen from Figure~\ref{fig:separate}. 
    \item[] Guidelines:
    \begin{itemize}
        \item The answer NA means that the paper does not include experiments.
        \item The authors should answer "Yes" if the results are accompanied by error bars, confidence intervals, or statistical significance tests, at least for the experiments that support the main claims of the paper.
        \item The factors of variability that the error bars are capturing should be clearly stated (for example, train/test split, initialization, random drawing of some parameter, or overall run with given experimental conditions).
        \item The method for calculating the error bars should be explained (closed form formula, call to a library function, bootstrap, etc.)
        \item The assumptions made should be given (e.g., Normally distributed errors).
        \item It should be clear whether the error bar is the standard deviation or the standard error of the mean.
        \item It is OK to report 1-sigma error bars, but one should state it. The authors should preferably report a 2-sigma error bar than state that they have a 96\% CI, if the hypothesis of Normality of errors is not verified.
        \item For asymmetric distributions, the authors should be careful not to show in tables or figures symmetric error bars that would yield results that are out of range (e.g. negative error rates).
        \item If error bars are reported in tables or plots, The authors should explain in the text how they were calculated and reference the corresponding figures or tables in the text.
    \end{itemize}

\item {\bf Experiments compute resources}
    \item[] Question: For each experiment, does the paper provide sufficient information on the computer resources (type of compute workers, memory, time of execution) needed to reproduce the experiments?
    \item[] Answer: \answerYes{} 
    \item[] Justification: We provided the compute resources in Appendix~\ref{sec:experiment-detail} -- \textbf{Hardware details}. 
    \item[] Guidelines:
    \begin{itemize}
        \item The answer NA means that the paper does not include experiments.
        \item The paper should indicate the type of compute workers CPU or GPU, internal cluster, or cloud provider, including relevant memory and storage.
        \item The paper should provide the amount of compute required for each of the individual experimental runs as well as estimate the total compute. 
        \item The paper should disclose whether the full research project required more compute than the experiments reported in the paper (e.g., preliminary or failed experiments that didn't make it into the paper). 
    \end{itemize}
    
\item {\bf Code of ethics}
    \item[] Question: Does the research conducted in the paper conform, in every respect, with the NeurIPS Code of Ethics \url{https://neurips.cc/public/EthicsGuidelines}?
    \item[] Answer: \answerYes{} 
    \item[] Justification: The research conducted in the paper conform, in every respect, with the NeurIPS Code of Ethics.
    \item[] Guidelines:
    \begin{itemize}
        \item The answer NA means that the authors have not reviewed the NeurIPS Code of Ethics.
        \item If the authors answer No, they should explain the special circumstances that require a deviation from the Code of Ethics.
        \item The authors should make sure to preserve anonymity (e.g., if there is a special consideration due to laws or regulations in their jurisdiction).
    \end{itemize}

\item {\bf Broader impacts}
    \item[] Question: Does the paper discuss both potential positive societal impacts and negative societal impacts of the work performed?
    \item[] Answer: \answerYes{} 
    \item[] Justification: The broader impacts of this work is discussed in Appendix~\ref{sec:limitation}. 
    \item[] Guidelines:
    \begin{itemize}
        \item The answer NA means that there is no societal impact of the work performed.
        \item If the authors answer NA or No, they should explain why their work has no societal impact or why the paper does not address societal impact.
        \item Examples of negative societal impacts include potential malicious or unintended uses (e.g., disinformation, generating fake profiles, surveillance), fairness considerations (e.g., deployment of technologies that could make decisions that unfairly impact specific groups), privacy considerations, and security considerations.
        \item The conference expects that many papers will be foundational research and not tied to particular applications, let alone deployments. However, if there is a direct path to any negative applications, the authors should point it out. For example, it is legitimate to point out that an improvement in the quality of generative models could be used to generate deepfakes for disinformation. On the other hand, it is not needed to point out that a generic algorithm for optimizing neural networks could enable people to train models that generate Deepfakes faster.
        \item The authors should consider possible harms that could arise when the technology is being used as intended and functioning correctly, harms that could arise when the technology is being used as intended but gives incorrect results, and harms following from (intentional or unintentional) misuse of the technology.
        \item If there are negative societal impacts, the authors could also discuss possible mitigation strategies (e.g., gated release of models, providing defenses in addition to attacks, mechanisms for monitoring misuse, mechanisms to monitor how a system learns from feedback over time, improving the efficiency and accessibility of ML).
    \end{itemize}
    
\item {\bf Safeguards}
    \item[] Question: Does the paper describe safeguards that have been put in place for responsible release of data or models that have a high risk for misuse (e.g., pretrained language models, image generators, or scraped datasets)?
    \item[] Answer: \answerNA{} 
    \item[] Justification: This research does not involve data or models that have a high risk for misuse.
    \item[] Guidelines:
    \begin{itemize}
        \item The answer NA means that the paper poses no such risks.
        \item Released models that have a high risk for misuse or dual-use should be released with necessary safeguards to allow for controlled use of the model, for example by requiring that users adhere to usage guidelines or restrictions to access the model or implementing safety filters. 
        \item Datasets that have been scraped from the Internet could pose safety risks. The authors should describe how they avoided releasing unsafe images.
        \item We recognize that providing effective safeguards is challenging, and many papers do not require this, but we encourage authors to take this into account and make a best faith effort.
    \end{itemize}

\item {\bf Licenses for existing assets}
    \item[] Question: Are the creators or original owners of assets (e.g., code, data, models), used in the paper, properly credited and are the license and terms of use explicitly mentioned and properly respected?
    \item[] Answer: \answerYes{} 
    \item[] Justification: In Section~\ref{sec:experiment} and Appendix~\ref{sec:experiment-detail}, we have explicitly cited or credited the assets used in the paper and explicitly mentioned the corresponding licenses.
    \item[] Guidelines:
    \begin{itemize}
        \item The answer NA means that the paper does not use existing assets.
        \item The authors should cite the original paper that produced the code package or dataset.
        \item The authors should state which version of the asset is used and, if possible, include a URL.
        \item The name of the license (e.g., CC-BY 4.0) should be included for each asset.
        \item For scraped data from a particular source (e.g., website), the copyright and terms of service of that source should be provided.
        \item If assets are released, the license, copyright information, and terms of use in the package should be provided. For popular datasets, \url{paperswithcode.com/datasets} has curated licenses for some datasets. Their licensing guide can help determine the license of a dataset.
        \item For existing datasets that are re-packaged, both the original license and the license of the derived asset (if it has changed) should be provided.
        \item If this information is not available online, the authors are encouraged to reach out to the asset's creators.
    \end{itemize}

\item {\bf New assets}
    \item[] Question: Are new assets introduced in the paper well documented and is the documentation provided alongside the assets?
    \item[] Answer: \answerYes{} 
    \item[] Justification: The new asset is the implementation of the methods introduced in the paper. The documentation for the new asset is provided alongside.
    \item[] Guidelines:
    \begin{itemize}
        \item The answer NA means that the paper does not release new assets.
        \item Researchers should communicate the details of the dataset/code/model as part of their submissions via structured templates. This includes details about training, license, limitations, etc. 
        \item The paper should discuss whether and how consent was obtained from people whose asset is used.
        \item At submission time, remember to anonymize your assets (if applicable). You can either create an anonymized URL or include an anonymized zip file.
    \end{itemize}

\item {\bf Crowdsourcing and research with human subjects}
    \item[] Question: For crowdsourcing experiments and research with human subjects, does the paper include the full text of instructions given to participants and screenshots, if applicable, as well as details about compensation (if any)? 
    \item[] Answer: \answerNA{} 
    \item[] Justification: This paper does not involve crowdsourcing nor research with human subjects.
    \item[] Guidelines:
    \begin{itemize}
        \item The answer NA means that the paper does not involve crowdsourcing nor research with human subjects.
        \item Including this information in the supplemental material is fine, but if the main contribution of the paper involves human subjects, then as much detail as possible should be included in the main paper. 
        \item According to the NeurIPS Code of Ethics, workers involved in data collection, curation, or other labor should be paid at least the minimum wage in the country of the data collector. 
    \end{itemize}

\item {\bf Institutional review board (IRB) approvals or equivalent for research with human subjects}
    \item[] Question: Does the paper describe potential risks incurred by study participants, whether such risks were disclosed to the subjects, and whether Institutional Review Board (IRB) approvals (or an equivalent approval/review based on the requirements of your country or institution) were obtained?
    \item[] Answer: \answerNA{} 
    \item[] Justification: This paper does not involve crowdsourcing nor research with human subjects.
    \item[] Guidelines:
    \begin{itemize}
        \item The answer NA means that the paper does not involve crowdsourcing nor research with human subjects.
        \item Depending on the country in which research is conducted, IRB approval (or equivalent) may be required for any human subjects research. If you obtained IRB approval, you should clearly state this in the paper. 
        \item We recognize that the procedures for this may vary significantly between institutions and locations, and we expect authors to adhere to the NeurIPS Code of Ethics and the guidelines for their institution. 
        \item For initial submissions, do not include any information that would break anonymity (if applicable), such as the institution conducting the review.
    \end{itemize}

\item {\bf Declaration of LLM usage}
    \item[] Question: Does the paper describe the usage of LLMs if it is an important, original, or non-standard component of the core methods in this research? Note that if the LLM is used only for writing, editing, or formatting purposes and does not impact the core methodology, scientific rigorousness, or originality of the research, declaration is not required.
    \item[] Answer: \answerNA{} 
    \item[] Justification: The LLM is used only for writing and editing, and it does not impact the core methodology. 
    \item[] Guidelines:
    \begin{itemize}
        \item The answer NA means that the core method development in this research does not involve LLMs as any important, original, or non-standard components.
        \item Please refer to our LLM policy (\url{https://neurips.cc/Conferences/2025/LLM}) for what should or should not be described.
    \end{itemize}

\end{enumerate}

\clearpage
\appendix

\renewcommand{\thetable}{S\arabic{table}}
\renewcommand{\thefigure}{S\arabic{figure}}
\renewcommand{\thelemma}{S\arabic{lemma}}
\renewcommand{\thetheorem}{S\arabic{theorem}}

\section{Additional related works: Watermarking-based detection}\label{sec:related-work-watermarking}
Watermarking embeds subtle signals into LLM-generated text to distinguish it from human-written text \citep[see][Section 4.2, for a recent review of LLM watermarking]{ji2025overview}.  \citet{aaronson2023watermarking} propose a watermarking technique based on Gumbel sampling. Follow-up works have focused on preserving text quality during watermarking \citep{christ2024undetectable,dathathri2024scalable,giboulot2024watermax,liu2024adaptive,wouters2024optimizewatermark,wu2024resilient}, enhancing watermark detection \citep{dathathri2024scalable,huo2024tokenspecific,cai2025statistical} and maintaining robustness against adversarial edits \citep{golowich2024edit}.

Our work is related to a line of research that frames watermark detection as a statistical hypothesis testing problem \citep[see e.g.,][]{kirchenbauer2023watermark,hu2024unbiased,kuditipudi2024robust,zhao2024provable,li2024robust,li2025statistical,chen2025a}. Under this framework, rejection of the null hypothesis (that no watermark is present) provides statistical evidence that the text was likely generated by an LLM.

\section{Details on the analytic example in Section~\ref{sec:method}} \label{section: discussion about witness function}

In this section, we provide rigorous discussion about the analytic example presented in Section~\ref{sec:method}. Noted that
\begin{eqnarray}
    \mathbb{E}_{\widetilde{X}_t\sim q, X_{<t}\sim p}\left\{w(\log q(X_t)  \right\} &=& q(1) w(\log q(1)) + q(0) w(\log q(0)),\nonumber\\
    \mathbb{E}_{X_{<t+1}\sim p} \left\{w(\log q(X_t)) \right\} &=& p_t(1) w( \log q(1)) + p_t(0)w(\log q(0)). \nonumber
\end{eqnarray}
It follows that 
\begin{align*}
    &\mathbb{E}_{\widetilde{X}_t\sim q, X_{<t}\sim p}\left\{w(\log q(X_t)  \right\} - \mathbb{E}_{X_{<t+1}\sim p} \left\{w(\log q(X_t)) \right\} 
    \\
    =& (q(1) - p_t(1)) \left[ w(\log{q(1)}) - w(\log{q(0)})\right]. \nonumber
\end{align*}
If $w$ is an identity function, i.e., $w(x) = x$, then the statistics \eqref{eqn:examplestatistics} becomes
\begin{eqnarray}
    \frac{1}{L} \log\Big(\frac{q(1)}{q(0)}\Big) \sum_{t=1}^L (q(1)-p_t(1)).\nonumber
\end{eqnarray}
In this case, \eqref{eqn:examplestatistics} converges to zero as $q\to 1/2$ regardless the distribution of $p_t$. However, if we consider adaptive witness function, the statistics in \eqref{eqn:examplestatistics} becomes 
\begin{eqnarray}
    \frac{1}{L} \left[w(\log q(1)) - w(\log q(0))\right] \sum_{t=1}^L (q(1)-p_t(1)). \nonumber
\end{eqnarray}
When $q(1)\neq 1/2$ (without generality, we assume $q(1) = 1-q(0) >1/2$), there always exists a witness function $w(z) = \mathbb{I}\left\{ z > \frac{\log q(1) + \log q(0)}{2} \right\}$ such that \eqref{eqn:examplestatistics} becomes
\begin{eqnarray}
    &&\frac{1}{L} \left[\mathbb{I}\{\log q(1) > \log q(0) \} - \mathbb{I}\{\log q(0) > \log q(1)\}\right]\sum_{t=1}^L (q(1) - p_t(1)) \nonumber\\
    &=&\frac{1}{L}\sum_{t=1}^L (q(1) - p_t(1)) = q(1)- \frac{1}{L}\sum_{t=1}^L p_t(1),\nonumber
\end{eqnarray}
which is independent of the log ratio.

\section{Experiment details}\label{sec:experiment-detail}

\textbf{Details for witness function estimation}.\label{sec:w-estimation}
In this part, we illustrate how we fetch external 
text datasets for training witness function in our experiments. 
As mentioned in the main text, when testing the performance of AdaDetectGPT on one dataset (e.g., XSum), we randomly select two other datasets (e.g., SQuAD and WritingPrompt) for training the witness function. This ensures the data for testing would not be included for training. 


Recall that the population version of objective function for estimating $w$ is
\begin{align}\label{eqn:populationobj}
    \frac{\sum_t [\mathbb{E}_{X_{<t}\sim p, \widetilde{X}_t \sim q_t} w(\log q_t(\widetilde{X}_t|X_{<t}))]- \sum_t [\mathbb{E}_{X_{<t}\sim p, \widetilde{X}_t \sim p_t} w(\log q_t(\widetilde{X}_t|X_{<t}))] }{\sqrt{\sum_t \mathbb{E}_{X_{<t}\sim p} \textrm{Var}_{\widetilde{X}_t \sim q_t} (w(\log q_t(\widetilde{X}_t|X_{<t})))}}.
\end{align}
In the implementation, we made three modifications to this objective function to facilitate the computation, accommodate black-box settings with unavailable logits and handle prompts that are not explicitly included in the text: 
\begin{enumerate}[leftmargin=*]
    \item We replace the expectation $\mathbb{E}_{X_{<t}\sim p}$ in the numerator of \eqref{eqn:populationobj} with its empirical average over the human-authored passages $\{\bm{X}^{(i)}\}_i$. This leads to the following objective function:
\begin{align}\label{eqn:populationobj1}
    \frac{\sum_i \sum_t \mathbb{E}_{\widetilde{X}_t\sim q_t}[ w(\log q_t(\widetilde{X}_t|X^{(i)}_{<t}))] - \sum_i \sum_t [w(\log q_t(X^{(i)}_t|X^{(i)}_{<t}))] }{n\sqrt{ \sum_t \mathbb{E}_{X_{<t}\sim p}\textrm{Var}_{ \widetilde{X}_t \sim q_t} ( w(\log q_t(\widetilde{X}_t|X_{<t})))}}.
\end{align}
\item Taking the expectation $\mathbb{E}_{\widetilde{X}_t\sim q_t}$ and the variance $\textrm{Var}_{ \widetilde{X}_t \sim q_t}$ is time-consuming, as these operations need to be repeated $L$ times -- once at every token position $t$. To address this, we approximate $X_{<t}^{(i)}$ in the first term of the numerator of \eqref{eqn:populationobj1} by $\widetilde{X}_t^{(i)}$, sampled from the LLM distribution $q_t$. This, in turn, allows us to approximate the expectation by
\begin{align*}
    \frac{\sum_i \sum_t [w(\log q_t(\widetilde{X}_t^{(i)}|\widetilde{X}^{(i)}_{<t}))] - \sum_i \sum_t [w(\log q_t(X^{(i)}_t|X^{(i)}_{<t}))] }{n\sqrt{ \sum_t \mathbb{E}_{X_{<t}\sim p}\textrm{Var}_{ \widetilde{X}_t \sim q_t} ( w(\log q_t(\widetilde{X}_t|X_{<t})))}},
\end{align*}
where $\bm{\widetilde{X}}^{(i)}$ denotes the $i$th passage generated from $q$ by prompting the LLM to rewrite $\bm{X}^{(i)}$. As for the variance operator, we similarly approximate  $\mathbb{E}_{X_{<t}\sim p}$ in the denominator by  $\mathbb{E}_{X_{<t}\sim q}$. This allows us to upper bound the denominator by $n\sqrt{ \sum_t \textrm{Var}( w(\log q_t(\widetilde{X}_t|
\widetilde{X}_{<t})))}$, leading to the following lower bound of the objective function,
\begin{align*}
    \frac{\sum_i \sum_t [w(\log q_t(\widetilde{X}_t^{(i)}|\widetilde{X}^{(i)}_{<t}))] - \sum_i \sum_t [w(\log q_t(X^{(i)}_t|X^{(i)}_{<t}))] }{n\sqrt{ \sum_t \textrm{Var}( w(\log q_t(\widetilde{X}_t|
\widetilde{X}_{<t})))}},
\end{align*}
where the variance in the denominator can be estimated by the sampling variance estimator $$\widehat{\textrm{Var}}( w(\log q_t(\widetilde{X}_t|
\widetilde{X}_{<t})))=\frac{1}{n-1}\sum_{i=1}^n \Big[w(\log q_t(\widetilde{X}_t^{(i)}|
\widetilde{X}_{<t}^{(i)}))-\frac{1}{n}\sum_{j=1}^n w(\log q_t(\widetilde{X}_t^{(j)}|
\widetilde{X}_{<t}^{(j)}))\Big]^2.$$
\item To further simplify the objective function, we interchange the order of $\sum_t$ and $\widehat{\textrm{Var}}$ in the denominator, leading to 
\begin{align}\label{eqn:obj2}
    \frac{\sum_i \sum_t [w(\log q_t(\widetilde{X}_t^{(i)}|\widetilde{X}^{(i)}_{<t}))] - \sum_i \sum_t [w(\log q_t(X^{(i)}_t|X^{(i)}_{<t}))] }{n\sqrt{ \widehat{\textrm{Var}}(\sum_t w(\log q_t(\widetilde{X}_t|
\widetilde{X}_{<t})))}}.
\end{align}
Additionally, since the numerator incorporates both human- and LLM-authored text, we refine the denominator of Equation \eqref{eqn:obj2} by replacing the sampling variance estimator with a simple average of the estimators computed using human- and LLM-written text
\begin{eqnarray*}
    \frac{1}{2} \widehat{\textrm{Var}} \left(\sum_t w(\log q_t(\widetilde{X}_t| \widetilde{X}_{<t})) \right)+\frac{1}{2} \widehat{\textrm{Var}}\left(\sum_t w(\log q_t(X_t|X_{<t}))\right).
\end{eqnarray*}
This yields our final objective function, in the form of a two-sample $t$-test statistic,
\begin{eqnarray}\label{eqn:obj3}
    \frac{\sum_i \sum_t [w(\log q_t(\widetilde{X}_t^{(i)}|\widetilde{X}^{(i)}_{<t}))] - \sum_i \sum_t [w(\log q_t(X^{(i)}_t|X^{(i)}_{<t}))] }{n\sqrt{0.5\widehat{\textrm{Var}}(\sum_t w(\log q_t(\widetilde{X}_t|
\widetilde{X}_{<t})))+0.5\widehat{\textrm{Var}}(\sum_t w(\log q_t(X_t|X_{<t})))}}.
\end{eqnarray}
\end{enumerate}
Compared to its population-level version~\eqref{eqn:populationobj}, \eqref{eqn:obj3} is more suitable for black-box settings. In such settings, the distribution \(q\) is unknown, making the expectation and variance over \(q\) in~\eqref{eqn:populationobj} infeasible to compute. In contrast,~\eqref{eqn:obj3} relies on text generated by the LLM. Therefore, even without access to \(q\), we can still prompt the target LLM to produce \(\widetilde{\bm{X}}\). Likewise, for detecting text generated under specific prompts, we can incorporate these prompts into the rewriting process to produce \(\widetilde{\bm{X}}\).

We next discuss the computation of $\widehat{w}$ that maximizes \eqref{eqn:obj3}. To ease notation, we denoted $\log q(X^{(i)}_t|X^{(i)}_{<t})$ as $z^{(h)}_{it}$. Similarly, for the log-probabilities computed from machine-generated text, we define them as $z^{(m)}_{it}$s. Using these notations, \eqref{eqn:obj3} can be represented by
\begin{equation}\label{eq:implement-objective}
 \frac{1}{\sqrt{\textup{Var}(\sum_t w(z^{(m)}_t)) + \textup{Var}(\sum_t w(z^{(h)}_t))}} \left( \sum_{i=1}^{n} \frac{1}{L}\sum_{t=1}^{L} w(z^{(m)}_{it}) - \sum_{i=1}^{n} \frac{1}{L}\sum_{t=1}^{L} w(z^{(h)}_{it}) \right),
\end{equation}
up to some proportional constant. 

Recall that we restrict the witness function to take a linear form of $w(z) = \phi(z)^\top \beta$, where $\phi(z)$ denotes the B-spline basis function \citep{de1978practical} and $\beta$ denotes the regression coefficients. Then numerator of~\eqref{eq:implement-objective} then becomes
\begin{align*}
    \sum_{i=1}^{n} \frac{1}{L} \sum_{t=1}^{L} \phi(z_{it}^{(m)})^\top \beta - \sum_{i=1}^{n} \frac{1}{L} \sum_{t=1}^{L} \phi(z_{it}^{(h)})^\top \beta,
\end{align*}
whereas the denominator becomes $\sqrt{\beta^\top (\widehat{\Sigma}^{(m)}+\widehat{\Sigma
}^{(h)}) \beta}$, 
where $\widehat{\Sigma}^{(h)} = \sum_{i=1}^{n} \widehat{\Sigma}_i^{(h)}$, 
\begin{align*}
& \widehat{\Sigma}_i^{(h)} = \frac{1}{L}(\mathbf{Z}^{(h)}_i)^{\top} \mathbf{Z}^{(h)}_i -  \widehat{\mu}^{(h)}_i (\widehat{\mu}^{(h)}_i)^{\top}, \\
& \mathbf{Z}_{i}^{(h)} = \left(\phi(z_{i1}^{(h)}), \ldots ,\phi(z_{iL}^{(h)})\right)^\top, \\
& \widehat{\mu}_i^{(h)} = \frac{1}{L} \sum_{t=1}^{L} \phi(z_{it}^{(h)})^\top, 
\end{align*}
and $\widehat{\Sigma}^{(m)}$ can be similarly defined. Consequently, the objective function can be rewritten as:
\begin{align*}
    &\frac{\left[ \sum_{i=1}^{n} \frac{1}{L} \sum_{t=1}^{L} \phi(z_{it}^{(m)})^\top - \sum_{i=1}^{n} \frac{1}{L} \sum_{t=1}^{L} \phi(z_{it}^{(h)})^\top \right]\beta}{\sqrt{\beta^\top (\widehat{\Sigma}^{(h)} + \widehat{\Sigma}^{(m)})\beta}}
    \\
    =& \left[ \sum_{i=1}^{n} \frac{1}{L} \sum_{t=1}^{L} \phi(z_{it}^{(m)})^\top - \sum_{i=1}^{n} \frac{1}{L} \sum_{t=1}^{L} \phi(z_{it}^{(h)})^\top \right] \beta
    \times \frac{1}{\| (\widehat{\Sigma}^{(h)} + \widehat{\Sigma}^{(m)})^{1/2} \beta\|_2}
    \\
    =&  \left[ \sum_{i=1}^{n} \frac{1}{L} \sum_{t=1}^{L} \phi(z_{it}^{(m)})^\top - \sum_{i=1}^{n} \frac{1}{L} \sum_{t=1}^{L} \phi(z_{it}^{(h)})^\top \right] \times (\widehat{\Sigma}^{(h)} + \widehat{\Sigma}^{(m)})^{-\frac{1}{2}} \alpha,
\end{align*}
where $\alpha = (\widehat{\Sigma}^{(h)} + \widehat{\Sigma}^{(m)})^{1/2} \beta / \Vert(\widehat{\Sigma}^{(h)} + \widehat{\Sigma}^{(m)})^{1/2} \beta\Vert_2$ whose $\ell_2$ norm equals 1. It is immediate to see that the argmax $\widehat{\alpha}$ has a closed-form expression, 
\begin{align*}
    \widehat\alpha = \frac{\widetilde\alpha}{\|\widetilde\alpha\|_2}
\end{align*}
where
\begin{align}
    \widetilde\alpha = (\widehat{\Sigma}^{(h)} + \widehat{\Sigma}^{(m)})^{-\frac{1}{2}} \left[ \sum_{i=1}^{n} \frac{1}{L} \sum_{t=1}^{L} \phi(z_{it}^{(m)}) - \sum_{i=1}^{n} \frac{1}{L} \sum_{t=1}^{L} \phi(z_{it}^{(h)}) \right].
\end{align}
This leads to
\begin{align*}
\widehat\beta =  (\widehat{\Sigma}^{(h)} + \widehat{\Sigma}^{(m)})^{-1/2}\widehat{\alpha}.
\end{align*}

and $\widehat{w}(z) = \phi(z)^\top \widehat{\beta}$. 



\textbf{Pre-trained language models.} We assess the performance of our method using text generated from various pre-trained language models outlined in Table~\ref{tab:llms}. Following the setting in \citet{bao2024fastdetectgpt}, for the models with over 6B parameters, we employ half-precision (\texttt{torch.float16}), otherwise, we use full-precision (\texttt{torch.float32}).
\begin{table}[H]
    \centering
    \caption{Description of the source models that is used to produce machine-generated text. $^\dagger$: we present the address of models in \href{Huggingface}{https://huggingface.co/}.}
    \begin{tabular}{l|cc}
    \toprule
    Name & Model$^\dagger$ & Scale (Billion)  \\
    \midrule
    GPT-2 \citep{radford2019language} & openai-community/gpt2-xl & 1.5B \\
    GPT-Neo \citep{gpt-neo} & EleutherAI/gpt-neo-2.7B & 2.7B \\
    OPT-2.7 \citep{zhang2022opt} & facebook/opt-2.7b & 2.7B \\
    GPT-J \citep{gpt-j}  & EleutherAI/gpt-j-6B & 6B \\
    Qwen2.5 \citep{yang2024technical}   & Qwen/Qwen2.5-7B & 7B \\
    Mistral \citep{albert2024mistral}   & mistralai/Mistral-7B-v0.3 & 7B \\
    Llama \citep{llama3modelcard}   & meta-llama/Meta-Llama-3-8B & 8B \\
    GPT-NeoX \citep{GPT-NeoX}   & EleutherAI/gpt-neox-20b & 20B \\
    \bottomrule
    \end{tabular}
    \label{tab:llms}
\end{table}

\lstdefinelanguage{json}{
    numbers=none,
    numberstyle=\small,
    frame=single,
    rulecolor=\color{black},
    showspaces=false,
    showtabs=false,
    breaklines=true,
    postbreak=\raisebox{0ex}[0ex][0ex]{\ensuremath{\color{gray}\hookrightarrow\space}},
    breakatwhitespace=true,
    basicstyle=\ttfamily\small,
    upquote=true,
    morestring=[b]",
    stringstyle=\color{string},
    literate=
     *{0}{{{\color{numb}0}}}{1}
      {1}{{{\color{numb}1}}}{1}
      {2}{{{\color{numb}2}}}{1}
      {3}{{{\color{numb}3}}}{1}
      {4}{{{\color{numb}4}}}{1}
      {5}{{{\color{numb}5}}}{1}
      {6}{{{\color{numb}6}}}{1}
      {7}{{{\color{numb}7}}}{1}
      {8}{{{\color{numb}8}}}{1}
      {9}{{{\color{numb}9}}}{1}
      {\{}{{{\color{delim}{\{}}}}{1}
      {\}}{{{\color{delim}{\}}}}}{1}
      {[}{{{\color{delim}{[}}}}{1}
      {]}{{{\color{delim}{]}}}}{1},
}

\textbf{Setup of the closed-source LLMs}. For the gpt-4o, the version is set as \texttt{gpt-4o-2024-08-06}. The generation process by sending the following messages to the service. Message for XSum and Writing is the same as that described in Section C.2 in \citet{bao2024fastdetectgpt}. We describe that for Yelp and Essay are:
\begin{lstlisting}[language=json]
[
  {'role': 'system', 'content': 'You are a Review writer on Yelp.'},
  {'role': 'user', 'content': 'Please write an article with about 150 words starting exactly with: <prefix>'},
]
\end{lstlisting}
and 
\begin{lstlisting}[language=json]
[
  {'role': 'system', 'content': 'You are a student of high school and university level. And now, you are an Essay writer.'},
  {'role': 'user', 'content': 'Please write an essay with about 200 words starting exactly with: <prefix>'},
]
\end{lstlisting}
respectively.

For Claude-3.5-Haiku, the system instruction was set analogously (e.g., Yelp review or essay writer), while the user role contained the corresponding content prompt. 

For Gemini, the instruction was fed into the \texttt{system\_instruction} parameter, with a value identical to the concatenation of the system content and the user content used for GPT-4o. 

For all closed-source models, the temperature parameter is set to 0.8 to encourage the generated text to be creatively diverse and less predictable.

\textbf{Setting on experiments with adversarial attacks}. In Table~\ref{tab:adversarial-attack}, we have conducted experiments to evaluate the robustness of AdaDetectGPT against 2 adversarial attacks: (i) paraphrasing, where an LLM is instructed to rephrase human-written text, and (ii) decoherence, where the coherence LLM-generated text is intentionally reduced to avoid detection. These experiments were carried out across 3 datasets and 3 types of sampling and scoring models setup, resulting in a total of 18 settings. Both adversarial attacks were implemented following \citet{bao2024fastdetectgpt}.


\textbf{Implementations of baselines}. For the baselines considered in our experiments, we use the existing implementation provided in \url{https://github.com/baoguangsheng/fast-detect-gpt}, which is distributed in the MIT License. We run DetectGPT and NPR with default 100 perturbations with the T5 model \citep{2020t5} and run DNA-GPT with a truncate-ratio of 0.5 and 10 prefix completions per passage. 

\textbf{Evaluation Metric}. We measure the detection accuracy by AUC (short for ``area under the curve''). AUC ranges from 0.0 to 1.0, an AUC of 1.0 indicates a perfect classifier and vice versa. The relative improvement of AdaDetectGPT over FastDetectGPT is calculated by $\frac{\textup{AdaDetectGPT} - \textup{FastDetectGPT}}{1.0 - \textup{FastDetectGPT}}$, which represents how much improvement has been made relative to the maximum possible improvement for FastDetectGPT.

\textbf{Hardware details}. Most of experiments are run on a Tesla A100 GPU (40GB) with 10 vCPU Intel Xeon Processor and 72GB RAM. For the experiments where the source model is GPT-NeoX, we run on a 
H20-NVLink (96GB) GPU with 20 vCPU Intel(R) Xeon(R) Platinum and 200GB RAM. 



\section{Technical assumptions}\label{sec:assumptions}

In this section, we list the assumptions required for the theorems presented in Section~\ref{sec:method} to hold, and discuss when they are expected to hold and how they may be relaxed.



\begin{assumption}[Margin]
With $T_{w} (\bullet)$ defined as in \eqref{eqn:Tw} and $w^* (\bullet)$ defined as the optimizer of \eqref{equation: T_w^*}, for any $\alpha \in (0,1)$ there are constants $\delta_\alpha, C_\alpha$ depending only on $\alpha$ such that for any $x \leq \delta_\alpha$ it holds that $\mathbb{P}_{\boldsymbol{X} \sim p} (|T_{w^*} (\boldsymbol{X}) - z_\alpha| \leq x ) \leq C_\alpha x$. 
\label{assumption: margin condition}
\end{assumption}
We also require the following technical conditions hold in order to obtain TNR lower bound and FNR control (Theorem \ref{thm:TNRlowerbound} and Theorem \ref{thm:FNR}). 

\begin{assumption}[Minimum eigenvalue]
For each $t = 1, \dots, L$ introduce the quantities
\begin{align*}
& \mu_t^{(1)} = \mathbb{E}_{X_{<t} \sim p} \mathbb{E}_{\widetilde{X}_t \sim q_t} \phi ( \log q_t ( \widetilde{X}_t \mid X_{<t} ) ), \\
& \Sigma_t = \mathbb{E}_{X_{<t} \sim p} \mathbb{E}_{\widetilde{X}_t \sim q_t}  \phi ( \log q_t ( \widetilde{X}_t \mid X_{<t} ) ) \phi ( \log q_t ( \widetilde{X}_t \mid X_{<t} ) )^\top - \mu_t^{(1)} ( \mu^{(1)}_t )^\top. 
\end{align*}
There are absolute constant $C > 0$ and $\gamma > 0$ such that $\lambda_{\text{min}} (\Sigma_t) \geq C d^{-\gamma}$ for all $t$. 
\label{assumption: eigenvalue condition}
\end{assumption}

\begin{assumption}[Equal variance]\label{ass:equal-variance}
    For any non-constant witness function $w$, define 
    \begin{align*}
        \sigma_{q,L}^2 \coloneqq \frac{1}{L} \sum_{t=1}^L \textup{Var}_{\widetilde{X}_t\sim q_t}\left( w(\log q_t(\widetilde{X}_t|\widetilde{X}_{<t})) \right),\\
        \sigma_{p,L}^2 \coloneqq \frac{1}{L} \sum_{t=1}^L \textup{Var}_{\widetilde{X}_t\sim p_t}\left( w(\log q_t(\widetilde{X}_t|\widetilde{X}_{<t})) \right).
    \end{align*}
    $\sigma_{q,L}^2, \sigma_{p,L}^2 $ are lower bounded by some constant $\sigma_w^2 >0$ almost surely. Moreover, $\sigma_{q,L} - \sigma_{p,L} \to 0 $ in probability as $L \to \infty$.
\end{assumption}

\begin{assumption}\label{ass:variance-ratio}
    For any witness function $w$, define
    \begin{align*}
        \bar\sigma_{q,L}^2 = \frac{1}{L} \sum_{t=1}^L \textup{Var}_{\bm{X}\sim q}\left( w(\log q_t(\widetilde{X}_t|\widetilde{X}_{<t})) \right) ,\\
        \bar\sigma_{p,L}^2 = \frac{1}{L} \sum_{t=1}^L \textup{Var}_{\bm{X}\sim p}\left( w(\log q_t(\widetilde{X}_t|\widetilde{X}_{<t})) \right) .
    \end{align*}
    If $\bm{X}\sim q$, then $\bar\sigma_{q,L}^2/\sigma_{q,L}^2 \to 1$ in probability. If $\bm{X}\sim p$, then $\bar\sigma_{p,L}^2/\sigma_{p,L}^2 \to 1$ in probability.
\end{assumption}
Conditions similar to Assumption \ref{assumption: margin condition} are commonly assumed; see, for instance, \cite{audibert2007fast, qian2011performance, luedtke2016statistical, shi2020breaking, shi2020sparse, shi2022statistical}. Assumption \ref{assumption: eigenvalue condition} is mild, since the constant $\gamma$ is allowed to be arbitrarily large. 
Assumption \ref{ass:equal-variance} basically requires the conditional variance of logits be asymptotically equivalent for human-authored and machine-generated passages. This assumption is not overly restrictive, as the variance discrepancy between the two types of passages is relatively small in our dataset (see Table \ref{tab:assumption-practice}, where the ratio f the variances are very closed to 1). Assumption \ref{ass:variance-ratio} is commonly assumed in martingale central limit theorem literature, see e.g. \cite{Bolthausen_1982_exact, hall2014martingale}. Our empirical results further support the validity of Assumption~\ref{ass:variance-ratio}: the sample mean of the ratio remains nearly constant as a function of $L$, while its variance (in parentheses) approaches zero as $L \to \infty$ across all three datasets (see Table~\ref{tab:assumption-practice}), suggesting that this condition is practical. Furthermore, two commonly employed hypothesis tests --- the Kolmogorov–Smirnov (KS) test and the Shapiro–Wilk (SW) test --- are conducted to evaluate whether the proposed statistic follows a normal distribution. As shown in Table~\ref{tab:normality-test}, almost all $p$-values exceed 0.1 (most by a large margin), indicating that our test statistic passes normality tests in most cases. These test results also provide strong empirical support for the validity of MCLT regularity conditions.

\begin{table}[H]
\centering
\caption{Sample mean and variance (in parentheses) of the ratio evaluated on 3 datasets as $L$ increases.}\label{tab:assumption-practice}
\begin{tabular}{lcccccc}
\toprule
$L$ & 100 & 150 & 200 & 250 & 300 & 350 \\
\midrule
XSum    & 1.09(0.12) & 1.07(0.09) & 1.06(0.08) & 1.04(0.07) & 1.01(0.06) & 0.99(0.05) \\
SQuAD   & 1.03(0.10) & 1.02(0.07) & 1.02(0.06) & 1.02(0.05) & 1.01(0.05) & 1.03(0.04) \\
Writing & 1.10(0.06) & 1.09(0.05) & 1.09(0.04) & 1.08(0.03) & 1.07(0.03) & 1.05(0.03) \\
\bottomrule
\end{tabular}
\end{table}

\begin{table}[H]
\centering
\caption{$p$-values of KS and SW tests across 3 datasets and 2 source LLMs.}\label{tab:normality-test}
\begin{tabular}{llccc}
\toprule
LLM & Test & XSum & SQuAD & Writing \\
\midrule
GPT-Neo & KS & 0.72 & 0.54 & 0.18 \\
GPT-Neo & SW & 0.50 & 0.65 & 0.89 \\
GPT-2   & KS & 0.10 & 0.52 & 0.28 \\
GPT-2   & SW & 0.37 & 0.026 & 0.14 \\
\bottomrule
\end{tabular}
\end{table}


 
\section{Proofs}\label{sec:proof}

\subsection{Notations}

Throughout the proofs we will make use of the following notation. We will denote absolute constants by $\kappa_1, \kappa_2, \cdots$. For a sequence of random variables $\{ X_n \mid n \geq 1 \}$ with distribution functions $\{ F_{X_n} | n \geq 1 \}$ and some (possibly degenerate) random variable $Y$ with distribution function $F_Y$ we write $X_n \stackrel{p}{\rightarrow} Y$ as $n \rightarrow \infty$ if $\lim\limits_{n \rightarrow \infty} \mathbb{P} ( |X_n - Y| > \delta ) = 0 $ for all $\delta > 0$, and $X_n \stackrel{d}{\rightarrow} Z$ if $\lim\limits_{n \rightarrow \infty } F_{X_n} (x) = F_Y(x)$ at every continuity point of $F_Y(\cdot)$. For a vector $x = (x_1, \dots, x_d )^\top \in \mathbb{R}^d$ we write $\| x \|_p = (\sum_{j=1}^dx_j^p)^{1/p}$ with $0 < p < \infty$ for its $\ell_p$-norm.

\subsection{Preparatory results}
We introduce three auxiliary results in this section. Theorem \ref{theorem: bounded difference} presents a concentration inequality that is critical to establishing the learning guarantees of AdaDetectGPT in Theorem \ref{thm:TNR}. Theorem \ref{thm:MCLT} formally states the MCLT. Finally, Lemma \ref{thm: convergence-rate-MCLT} can be viewed as a non-asymptotic version of Theorem \ref{thm:MCLT} which provides an explicit error bound on the accuracy of the normal approximation in the MCLT.
\begin{theorem}[Bounded differences inequality]\label{theorem: bounded difference}
Let $\mathcal{X}$ be a measurable space. A function $f: \mathcal{X}^n \to \mathbb{R}$ has the bounded difference property for some constants $c_1, \dots, c_n$, i.e., for each $i = 1, \dots, n$,
\begin{equation}
\sup_{\substack{x_1, \dots, x_n \\ x_i' \in \mathcal{X}}} \left | f \left ( x_1, \dots, x_{i-1}, x_i, x_{i+1}, x_n \right ) - f \left ( x_1, \dots, x_{i-1}, x_i', x_{i+1}, \dots x_n \right ) \right | \leq c_i. 
\label{equation: bounded difference property}
\end{equation}
Then, if $X_1, \dots, X_n$ is a sequence of identically distributed random variables and (\ref{equation: bounded difference property}) holds, putting $Z = f \left ( X_1, \dots, X_n \right )$ and $\nu = \frac{1}{4} \sum_{i=1}^n c_i^2$ for any $t > 0$, it holds that
\begin{equation*}
\mathbb{P} \left ( Z - \mathbb{E} \left ( Z \right ) > t \right ) \leq e^{-t^2 / \left ( 2 \nu \right )}.  
\end{equation*}
\end{theorem}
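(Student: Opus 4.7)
My plan is to apply the classical Doob-martingale approach of McDiarmid. First I would introduce the filtration $\mathcal{F}_i = \sigma(X_1, \ldots, X_i)$ with $\mathcal{F}_0$ trivial and set $Z_i = \mathbb{E}[Z \mid \mathcal{F}_i]$, so that $Z_0 = \mathbb{E}(Z)$, $Z_n = Z$, and the telescoping identity $Z - \mathbb{E}(Z) = \sum_{i=1}^n D_i$ holds, where $D_i = Z_i - Z_{i-1}$ are martingale differences with respect to $(\mathcal{F}_i)$. Since the $X_j$'s are (tacitly) independent, Fubini's theorem expresses $Z_i$ as an integral of $f$ against the joint law of $(X_{i+1}, \ldots, X_n)$ with the first $i$ coordinates fixed at their observed values.

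The crux is to show that each $D_i$ lies almost surely in an $\mathcal{F}_{i-1}$-measurable interval of width at most $c_i$. For this I would define
\begin{align*}
L_i &= \inf_{x}\, \mathbb{E}\bigl[f(X_1, \ldots, X_{i-1}, x, X_{i+1}, \ldots, X_n) \mid \mathcal{F}_{i-1}\bigr] - Z_{i-1}, \\
U_i &= \sup_{x}\, \mathbb{E}\bigl[f(X_1, \ldots, X_{i-1}, x, X_{i+1}, \ldots, X_n) \mid \mathcal{F}_{i-1}\bigr] - Z_{i-1}.
\end{align*}
Then $D_i \in [L_i, U_i]$ while the bounded-difference hypothesis forces $U_i - L_i \leq c_i$, yielding the crucial deterministic enclosure. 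I would then invoke Hoeffding's lemma conditionally: for any random variable $W$ bounded in $[a, b]$ with $\mathbb{E}(W) = 0$, $\mathbb{E}[e^{sW}] \leq \exp(s^2(b-a)^2/8)$. Applied to $W = D_i$ conditionally on $\mathcal{F}_{i-1}$, this gives $\mathbb{E}[e^{sD_i} \mid \mathcal{F}_{i-1}] \leq \exp(s^2 c_i^2 / 8)$ for every $s > 0$.

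Peeling off the conditional expectations iteratively via the tower property yields
\begin{equation*}
\mathbb{E}\bigl[e^{s(Z - \mathbb{E}(Z))}\bigr] = \mathbb{E}\Bigl[\prod_{i=1}^n e^{sD_i}\Bigr] \leq \exp\Bigl(\frac{s^2}{8}\sum_{i=1}^n c_i^2\Bigr) = \exp\bigl(s^2 \nu / 2\bigr).
\end{equation*}
A Chernoff bound then produces $\mathbb{P}(Z - \mathbb{E}(Z) > t) \leq \exp(-st + s^2 \nu/2)$ for any $s > 0$, and the minimizer $s = t/\nu$ yields the stated tail bound $\exp(-t^2/(2\nu))$. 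The main obstacle is establishing the deterministic width-$c_i$ enclosure for $D_i$: this is where the bounded-difference property is genuinely used, and it requires careful use of Fubini together with the $\mathcal{F}_{i-1}$-measurability of $L_i$ and $U_i$. Everything afterwards is a routine application of Hoeffding's lemma and the standard Chernoff optimization.
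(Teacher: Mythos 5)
Your proof is correct and is the standard Doob-martingale argument for McDiarmid's bounded differences inequality (telescoping $Z - \mathbb{E}(Z)$ into martingale increments, establishing the deterministic $\mathcal{F}_{i-1}$-measurable enclosure of width $\le c_i$ via Fubini, applying Hoeffding's lemma conditionally, chaining the moment-generating-function bounds by the tower property, and finishing with a Chernoff optimization at $s = t/\nu$). The constants match: $\exp\bigl(\tfrac{s^2}{8}\sum_i c_i^2\bigr) = \exp(s^2\nu/2)$ with $\nu = \tfrac14\sum_i c_i^2$, and optimizing $\exp(-st + s^2\nu/2)$ gives $\exp(-t^2/(2\nu))$. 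The paper itself does not prove this theorem; its ``proof'' is a single-line citation to Section 2 of Wainwright (2019), which presents exactly the martingale/Azuma--Hoeffding development you reproduce. So your proposal is not a different route from the paper --- it is the route the paper delegates to its reference.

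One point you raise deserves emphasis: you note the $X_i$'s must be ``(tacitly) independent.'' This is in fact a gap in the \emph{paper's statement}, not in your proof. The theorem as written assumes only that $X_1, \dots, X_n$ are identically distributed, but McDiarmid's inequality requires independence. Independence is what lets you write $\mathbb{E}[f \mid \mathcal{F}_i]$ as the Fubini integral of $f$ over the joint law of $(X_{i+1}, \dots, X_n)$ with the first $i$ coordinates fixed; without it, the conditional enclosure $U_i - L_i \le c_i$ need not follow from the worst-case bounded-difference hypothesis, and the inequality can fail outright. You were right to flag this and to carry the proof through under the independence assumption the statement implicitly intends.
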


\begin{proof}[Proof of Theorem \ref{theorem: bounded difference}]
    See Section 2 in \cite{wainwright2019high}.
\end{proof}

\begin{theorem}[Martingale central limit theorem]\label{thm:MCLT}
Let $\left \{ M_{n,i} \mid 1 \leq i \leq k_n, n \geq 1 \right \}$ be a zero mean square integrable martingale array with respect to the filtrations $\left \{ \mathcal{F}_{n,i} \mid 1 \leq i \leq k_n, n \geq 1 \right \}$ having increments $X_{n,i} = M_{n,i} - M_{n,i-1}$. If the following conditions hold
\begin{itemize}[leftmargin=*]
    \item[] \textbf{C1:} $\sum_{i=1}^{k_n} \mathbb{E} \left [ X_{n,i} \mathbf{1}_{\left \{ \left | X_{n,i} \right | > \delta \right \}} \mid \mathcal{F}_{n,i-1} \right ] \stackrel{p}{\rightarrow} 0$ as $n \rightarrow \infty$ for all $\delta > 0$
    \item[] \textbf{C2:} $\sum_{i=1}^{k_n} \mathbb{E} \left [ X_{n,i}^2 \mid \mathcal{F}_{n,i-1} \right ] \stackrel{p}{\rightarrow} \sigma^2$ as $n \rightarrow \infty$
    \item[] \textbf{C3:} the $\sigma$-fields are nested: $\mathcal{F}_{n,i} \subseteq \mathcal{F}_{n+1,i}$ for $1 \leq i \leq k_n$ and $n \geq 1$
\end{itemize}
then $M_{n,k_n} \stackrel{d}{\rightarrow} Z$ as $n \rightarrow \infty$, where $Z \sim \mathcal{N} \left ( 0, \sigma^2 \right )$. 
\label{theorem: martingale CLT}
\end{theorem}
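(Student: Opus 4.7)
\textbf{Proof plan for Theorem \ref{theorem: martingale CLT}.} The plan is to invoke Lévy's continuity theorem, which reduces the convergence in distribution $M_{n,k_n} \stackrel{d}{\to} Z$ to pointwise convergence of characteristic functions: $\mathbb{E}[\exp(itM_{n,k_n})] \to \exp(-t^2\sigma^2/2)$ for every $t \in \mathbb{R}$. First I would truncate the increments to gain uniform boundedness: for fixed $\delta > 0$, define the martingale differences
\[
    \widetilde{X}_{n,j} = X_{n,j}\mathbf{1}\{|X_{n,j}| \leq \delta\} - \mathbb{E}\bigl[X_{n,j}\mathbf{1}\{|X_{n,j}| \leq \delta\} \mid \mathcal{F}_{n,j-1}\bigr]
\]
and $\widetilde{M}_{n,k_n} = \sum_j \widetilde{X}_{n,j}$. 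Condition C1 together with the martingale property implies $M_{n,k_n} - \widetilde{M}_{n,k_n} \stackrel{p}{\to} 0$, so by Slutsky's theorem it suffices to establish the CLT for the uniformly bounded martingale $\widetilde{M}_{n,k_n}$.

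The key analytic step is the product-difference identity
\[
    e^{it\widetilde{M}_{n,k_n}} - e^{-t^2 V_n/2} = \sum_{j=1}^{k_n} \Bigl(\prod_{\ell<j} e^{it\widetilde{X}_{n,\ell}}\Bigr)\bigl(e^{it\widetilde{X}_{n,j}} - e^{-t^2 v_{n,j}/2}\bigr)\Bigl(\prod_{\ell>j} e^{-t^2 v_{n,\ell}/2}\Bigr),
\]
where $v_{n,j} = \mathbb{E}[\widetilde{X}_{n,j}^2 \mid \mathcal{F}_{n,j-1}]$ and $V_n = \sum_j v_{n,j}$. Taking expectations and using the nested filtrations (C3) to apply the tower property one term at a time, second-order Taylor expansions of $e^{it\widetilde{X}_{n,j}}$ and $e^{-t^2 v_{n,j}/2}$ around zero produce a cancellation: the linear $it\widetilde{X}_{n,j}$ term vanishes by the martingale property, the quadratic terms match since $\mathbb{E}[\widetilde{X}_{n,j}^2 \mid \mathcal{F}_{n,j-1}] = v_{n,j}$, and the residual is bounded (using $|\widetilde{X}_{n,j}| \leq 2\delta$) by a constant multiple of $(|t|^3\delta + t^4\delta^2)\, v_{n,j}$. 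Summing over $j$ and invoking the $O_p(1)$ control on $V_n$ supplied by C2, this error vanishes in the iterated limit $n \to \infty$ followed by $\delta \to 0$.

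Combining this estimate with the bounded-convergence conclusion $\mathbb{E}[\exp(-t^2 V_n/2)] \to \exp(-t^2\sigma^2/2)$, which follows from C2, delivers the desired characteristic-function limit and hence the theorem. The main obstacle lies in controlling the interplay between the truncation level $\delta$ and the sample size $n$: one must verify that the truncated conditional variances still concentrate around $\sigma^2$ uniformly in $\delta$, and that the Taylor remainders remain negligible in the resulting iterated limit. Since Theorem \ref{theorem: martingale CLT} is a classical result, the detailed execution of these estimates follows \citet{brown1971martingale} and Chapter 3 of \citet{hall2014martingale}, which I would cite for the full argument.
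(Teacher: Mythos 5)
The paper's ``proof'' of this theorem consists entirely of a citation (Corollary~3.1 of Hall and Heyde and Theorem~2 of Bolthausen), so your outline, which actually reconstructs the characteristic-function argument from those sources, is more substantive than what the paper provides. The overall structure you describe --- truncation to gain uniform boundedness, a telescoping product-difference identity, tower-property conditioning term by term, second-order Taylor cancellation, and a final pass to the limit via C2 and bounded convergence --- is indeed the skeleton of the classical proof, and you correctly defer the technical bookkeeping to Brown and to Hall and Heyde.

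One wrinkle worth flagging in your sketch: the telescoping identity as you have written it places $\prod_{\ell > j} e^{-t^2 v_{n,\ell}/2}$ to the right of the $j$-th difference, but the conditional variances $v_{n,\ell}$ for $\ell > j$ are $\mathcal{F}_{n,\ell-1}$-measurable and hence live in the \emph{future} relative to $\mathcal{F}_{n,j-1}$. As a result you cannot simply condition on $\mathcal{F}_{n,j-1}$ and kill the linear term $it\widetilde{X}_{n,j}$, because the future-measurable product is in the way. The actual argument in Hall and Heyde (Lemma~3.1 feeding Theorem~3.2) works with a somewhat different comparison object built from the exact martingale identity $\mathbb{E}\bigl[\prod_j (1 + itX_{n,j})\bigr] = 1$, together with a stopping-time device that caps $V_n$ so that $e^{t^2 V_n / 2}$ stays bounded; that device is also what lets one take the iterated limit in $n$ and then $\delta$ cleanly. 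A second, smaller point: establishing $M_{n,k_n} - \widetilde{M}_{n,k_n} \stackrel{p}{\to} 0$ from C1 really uses the $L^2$ structure (the conditional Lindeberg condition is on $X_{n,i}^2$, not $X_{n,i}$; the theorem statement in the paper appears to have dropped the square), since the random part $\sum_j X_{n,j}\mathbf{1}\{|X_{n,j}|>\delta\}$ does not vanish from the conditional-mean control alone. Neither point undermines your plan --- you explicitly identify these as the delicate steps and cite the references that execute them --- but the telescoping as written would not close without rearranging which factors sit on which side.
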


\begin{proof}
See Corollary 3.1 in \cite{hall2014martingale} and Theorem 2 in \cite{Bolthausen_1982_exact}.
\end{proof}

\begin{lemma}[Convergence rates in MCLT]\label{thm: convergence-rate-MCLT}
    Let $\bm{X} = (X_1,\ldots X_n)$ be sequences of real valued random variables satisfying for all $1\leq t\leq n$, 
    \begin{equation*}
        \mathbb{E} (X_t | X_{<t}) = 0\quad \textup{almost surely.}
    \end{equation*}
    Let $\sigma_t^2 = \mathbb{E} (X_t^2\big| X_{<t})$, $\bar\sigma_t^2 = \mathbb{E}(X_t^2)$, $s_n^2 = \sum_{t=1}^n \bar\sigma_t^2$ and $V_n^2 = \sum_{t=1}^n \sigma_t^2/s_n^2$. Suppose $|X_n|$ is bounded by some constant almost surely for all $n$ and $s_n/\sqrt{n}$ is bounded away from zero. Then 
    \begin{equation*}
        \sup_{z\in \mathbb{R}} \left| \mathbb{P}\left( \frac{\sum_{t=1}^n X_t}{\sqrt{\sum_{t=1}^n \sigma_t^2}} \leq z \right) - \Phi(z) \right| = O\left(\frac{\log n}{\sqrt{n}} +( \mathbb{E}|V_n^2 -1|)^{1/3}\right),
    \end{equation*}
    where $\Phi(\bullet)$ is the cumulative distribution function of standard normal distribution.
\end{lemma}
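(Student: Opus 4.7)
Let $S_n = \sum_{t=1}^n X_t$, so that $S_n / \sqrt{\sum_t \sigma_t^2} = (S_n/s_n)/V_n$. The plan is to first control the partial sum normalized by the \emph{deterministic} quantity $s_n$ via a quantitative martingale central limit theorem, and then to replace the random normalizer $V_n$ by $1$ through a truncation on the high-probability event where $V_n^2$ is close to $1$. For the first ingredient, the almost-sure boundedness of $|X_t|$ together with the lower bound on $s_n/\sqrt{n}$ places us in the setting of the martingale Berry-Esseen inequality of \citet{Bolthausen_1982_exact}, which yields
\begin{equation*}
\sup_{z \in \mathbb{R}} \bigl| \mathbb{P}(S_n/s_n \leq z) - \Phi(z) \bigr| = O\bigl( \log n / \sqrt{n} \bigr).
\end{equation*}

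For the second ingredient, I would fix $\epsilon \in (0, 1/2)$ and introduce the ``good event'' $A_\epsilon = \{|V_n^2 - 1| \leq \epsilon\}$, on which $V_n \in [\sqrt{1-\epsilon}, \sqrt{1+\epsilon}]$. For $z \geq 0$ the set $\{S_n/(s_n V_n) \leq z\} \cap A_\epsilon$ is then sandwiched between $\{S_n/s_n \leq z\sqrt{1-\epsilon}\} \cap A_\epsilon$ and $\{S_n/s_n \leq z\sqrt{1+\epsilon}\}$, with a symmetric statement for $z \leq 0$. Combining this sandwich with the Berry-Esseen bound above, together with the elementary estimate $\sup_z |\Phi(z\sqrt{1 \pm \epsilon}) - \Phi(z)| = O(\epsilon)$ (which follows from the boundedness of $u \mapsto u \phi(u)$ via a mean value argument), gives
\begin{equation*}
\sup_z \bigl| \mathbb{P}(S_n/(s_n V_n) \leq z) - \Phi(z) \bigr| \leq C \bigl( \epsilon + \log n / \sqrt{n} + \mathbb{P}(A_\epsilon^c) \bigr).
\end{equation*}

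To close the argument I need a sharp tail bound for $|V_n^2 - 1|$. Because $|X_t|$ is uniformly bounded and $s_n^2 \gtrsim n$, one can check that $V_n^2$ is almost surely bounded by an absolute constant, hence so is $|V_n^2 - 1|$, and in particular $\mathbb{E}(V_n^2 - 1)^2 \lesssim \mathbb{E}|V_n^2 - 1|$. Chebyshev's inequality then delivers $\mathbb{P}(A_\epsilon^c) \leq \epsilon^{-2} \mathbb{E}(V_n^2 - 1)^2 \lesssim \mathbb{E}|V_n^2 - 1| / \epsilon^2$, and balancing the two $\epsilon$-dependent terms by taking $\epsilon = (\mathbb{E}|V_n^2 - 1|)^{1/3}$ produces the claimed rate. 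The main subtlety, and the step I expect to be the principal obstacle, is obtaining the cube-root exponent rather than a square-root one: a naive Markov bound would yield only $\mathbb{P}(A_\epsilon^c) \lesssim \mathbb{E}|V_n^2 - 1| / \epsilon$ and hence the weaker exponent $1/2$, so it is essential to exploit the almost-sure boundedness of $V_n^2$ to upgrade to a second-moment Chebyshev estimate.
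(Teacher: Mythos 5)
Your overall architecture is sound — invoke a martingale Berry–Esseen bound for $S_n/s_n$ and then pass to the random normalizer $\sqrt{\sum_t\sigma_t^2}=s_nV_n$ by working on the good event $A_\epsilon=\{|V_n^2-1|\le\epsilon\}$ — but your first ingredient misstates Bolthausen's result. Corollary 1 of \cite{Bolthausen_1982_exact} does \emph{not} deliver $\sup_z |\mathbb{P}(S_n/s_n\le z)-\Phi(z)|=O(\log n/\sqrt n)$ for bounded martingale differences; it gives
\begin{equation*}
\sup_{z\in\mathbb R}\bigl|\mathbb{P}(S_n/s_n\le z)-\Phi(z)\bigr| \;=\; O\!\left(\frac{n\log n}{s_n^3}+\bigl(\mathbb{E}|V_n^2-1|\bigr)^{1/3}\right),
\end{equation*}
and the variance-fluctuation term is not optional: if $V_n^2$ stays bounded away from $1$ on a nonnegligible event (e.g.\ the conditional variances are chosen at time $1$ to be one of two constants), then $S_n/s_n$ is a mixture of Gaussians with different variances and does not converge to $\mathcal N(0,1)$, so no clean $O(\log n/\sqrt n)$ rate is possible. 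This is the one genuine gap in your proposal.

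Fortunately the gap is repairable without changing anything else: simply carry the additional $(\mathbb{E}|V_n^2-1|)^{1/3}$ term through. Your sandwich/truncation argument then produces $\sup_z|\mathbb{P}(S_n/(s_nV_n)\le z)-\Phi(z)|\lesssim \epsilon + \log n/\sqrt n + (\mathbb{E}|V_n^2-1|)^{1/3}+\mathbb{P}(A_\epsilon^c)$, and your Chebyshev bound $\mathbb{P}(A_\epsilon^c)\lesssim \mathbb{E}|V_n^2-1|/\epsilon^2$ (which does correctly exploit $\|V_n^2-1\|_\infty<\infty$ to upgrade Markov to a second-moment bound) followed by $\epsilon=(\mathbb{E}|V_n^2-1|)^{1/3}$ recovers the claimed rate, since the Bolthausen term and the truncation term are of the same order. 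This is a genuinely different route from the paper's: the paper passes from the deterministic to the random normalizer via the identity $\{T\le z\}=\{S_n/s_n\le z+T(V_n-1)\}$ together with a Lipschitz estimate $\sup_z|\Phi'(z)|\le(2\pi)^{-1/2}$ and a H\"older argument to control $\mathbb{E}|T(V_n-1)|$, whereas your sandwich-on-$A_\epsilon$ argument avoids the somewhat delicate conditioning/expectation decomposition in the paper's display and is arguably cleaner. The main thing you should be aware of, however, is that the cube-root exponent is ultimately not something you win by the Chebyshev trick alone — it is already present in Bolthausen's own bound for the $s_n$-normalized sum — so the heavy lifting you attributed to the truncation step is really being done inside the Berry–Esseen input.
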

\begin{proof}
    It follows from Corollary 1 of \cite{Bolthausen_1982_exact} and the condition $s_n/\sqrt{n}$ is bounded away from zero that 
    \begin{equation*}
        \sup_{z\in \mathbb{R}} \left| \mathbb{P}\left( \frac{\sum_{t=1}^n X_t}{s_n} \leq z \right) - \Phi(z) \right| = O\left(\frac{n\log n}{s_n^3} +( \mathbb{E}|V_n^2 -1|)^{1/3}\right)=O\left(\frac{\log n}{\sqrt{n}} +( \mathbb{E}|V_n^2 -1|)^{1/3}\right).
    \end{equation*}
    It follows that
    \begin{eqnarray}\label{eqn:berryessen-eq1}
        && \sup_{z\in \mathbb{R}} \left| \mathbb{P}\left( \frac{\sum_{t=1}^n X_t}{\sqrt{\sum_{t=1}^n \sigma_t^2}} \leq z \right) - \Phi(z) \right| \nonumber\\
        &=& \sup_{z\in \mathbb{R}} \left| \mathbb{P}\left( \frac{\sum_{t=1}^n X_t}{s_n} \leq z + \left(\frac{\sum_{t=1}^n X_t}{\sqrt{\sum_{t=1}^n \sigma_t^2}}\right)(V_n -1)  \right) - \Phi(z) \right| \nonumber\\
        &\leq& \sup_{z\in \mathbb{R}}\mathbb{E} \left| \mathbb{P}\left( \frac{\sum_{t=1}^n X_t}{s_n} \leq z + \left(\frac{\sum_{t=1}^n X_t}{\sqrt{\sum_{t=1}^n \sigma_t^2}}\right)(V_n -1)  \right) - \Phi\left(z + \left(\frac{\sum_{t=1}^n X_t}{\sqrt{\sum_{t=1}^n \sigma_t^2}}\right)(V_n -1)\right) \right| \nonumber\\
        &&+ \sup_{z\in\mathbb{R}}\mathbb{E}\left| \Phi\left(z + \left(\frac{\sum_{t=1}^n X_t}{\sqrt{\sum_{t=1}^n \sigma_t^2}}\right)(V_n -1)\right) - \Phi(z)\right|\nonumber\\
        &\leq& O\left(\frac{\log n}{\sqrt{n}} +( \mathbb{E}|V_n^2 -1|)^{1/3}\right) + \sup_{z\in \mathbb{R}}|\Phi'(z)| \times \mathbb{E}\left| \left(\frac{\sum_{t=1}^n X_t}{\sqrt{\sum_{t=1}^n \sigma_t^2}}\right)(V_n -1) \right|.
    \end{eqnarray}
    By the definition $\Phi(z)$, we have that 
    \begin{equation}\label{eqn:berryessen-eq2}
        \sup_{z\in \mathbb{R}}|\Phi'(z)| = \sup_{z\in\mathbb{R}} \frac{1}{\sqrt{2\pi}}\exp(-z^2/2) \leq \frac{1}{\sqrt{2\pi}}.
    \end{equation}
    Leveraging the facts that $\sup_t |X_t|$ are upper bounded and $s_n/\sqrt{n}$ is lower bounded, 
    we obtain that 
    \begin{eqnarray}\label{eqn:berryessen-eq3}
        \mathbb{E}\left| \left(\frac{\sum_{t=1}^n X_t}{\sqrt{\sum_{t=1}^n \sigma_t^2}}\right)(V_n -1) \right| &=& \mathbb{E}\left| \left(\frac{\sum_{t=1}^n X_t}{s_n}\right)V_n(V_n -1) \right|\nonumber \\
        &\leq&\left(\mathbb{E}\left\{\frac{\left(\sum_{t=1}^n X_t\right)^{3/2}}{s_n^{3/2}}V_n^{3/2}\right\}\right)^{2/3}\left(\mathbb{E}\left\{(V_n-1)^3\right\}\right)^{1/3} \nonumber\\
        &=&O\left(\left\{\mathbb{E}|V_n-1|^2\right\}^{1/3}\right)\nonumber\\
        &=& O\left(\left\{\mathbb{E}|V_n^2-1|\right\}^{1/3}\right), 
    \end{eqnarray}
    where the third-to-last line is derived from H$\ddot{\text{o}}$lder inequality, and the second-to-last equality holds due to the boundedness of $V_n$. Combining equations \eqref{eqn:berryessen-eq1}, \eqref{eqn:berryessen-eq2} and \eqref{eqn:berryessen-eq3}, we obtain
    \begin{eqnarray}
         \sup_{z\in \mathbb{R}} \left| \mathbb{P}\left( \frac{\sum_{t=1}^n X_t}{\sqrt{\sum_{t=1}^n \sigma_t^2}} \leq z \right) - \Phi(z) \right| \leq O\left(\frac{\log n}{\sqrt{n}} +( \mathbb{E}|V_n^2 -1|)^{1/3}\right),
    \end{eqnarray}
    which finishes the proof.
\end{proof}

\begin{lemma}\label{lem:Taylorexpansion-of-normal-cdf}
    Let $\Phi(\bullet)$ be the cumulative distribution function of standard normal distribution and $\Phi'(\bullet)$ be its derivative. Then for any random variable $X$, 
    \begin{eqnarray}
        \mathbb{E}\Phi(z_\alpha +X) \geq \min\{ 1-\alpha, \alpha + \Phi'(z_\alpha)\mathbb{E}X\}, \nonumber
    \end{eqnarray}
    where $0<\alpha<1/2$, $z_\alpha$ is the $\alpha$-th quantile of standard normal distribution.
\end{lemma}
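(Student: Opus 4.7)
I would prove the inequality in two stages: first a pointwise lower bound on $\Phi(z_\alpha + x)$, then an integration argument.

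For the pointwise stage, note that $\Phi''(t) = -t\,\Phi'(t) \geq 0$ for $t \leq 0$, so $\Phi$ is convex on $(-\infty, 0]$. Since $0 < \alpha < 1/2$ gives $z_\alpha \leq 0$, the tangent-line inequality at $z_\alpha$ yields $\Phi(y) \geq \alpha + \Phi'(z_\alpha)(y - z_\alpha)$ for every $y \leq 0$. Combined with the monotonicity bound $\Phi(y) \geq \Phi(-z_\alpha) = 1-\alpha$ for $y \geq -z_\alpha$, the plan is to establish the global inequality
\begin{equation*}
\Phi(y) \;\geq\; \min\{\, 1 - \alpha,\; \alpha + \Phi'(z_\alpha)(y - z_\alpha) \,\}, \qquad y \in \mathbb{R}.
\end{equation*}
The delicate range is $(0, -z_\alpha)$, where $\Phi$ is concave, and I would handle it by studying $F(y) = \Phi(y) - [\alpha + \Phi'(z_\alpha)(y - z_\alpha)]$. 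The function $F$ is convex on $(-\infty, 0]$ and concave on $[0, \infty)$, satisfies $F(z_\alpha) = F'(z_\alpha) = 0$, and I would verify $F(-z_\alpha) \geq 0$ via the elementary identity $\Phi(z) + z\,\Phi'(z) \leq 1/2$ for $z \leq 0$, which itself follows from the sign pattern of the derivative $(2 - z^2)\,\Phi'(z)$ together with the endpoint value at $z = 0$. Consequently the first zero of $F$ on $(0, \infty)$ lies at or beyond $-z_\alpha$, giving the pointwise bound.

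Substituting $y = z_\alpha + X$ and integrating produces
\begin{equation*}
\mathbb{E}\,\Phi(z_\alpha + X) \;\geq\; \mathbb{E}\min\{\, 1 - \alpha,\; \alpha + \Phi'(z_\alpha)\,X \,\}.
\end{equation*}
The main obstacle is the final step, which must replace this expectation of a concave function of $X$ with the corresponding function evaluated at $\mathbb{E}X$. A direct Jensen-type estimate goes in the wrong direction, so I would proceed by a case analysis on the sign of $\mathbb{E}X - (1 - 2\alpha)/\Phi'(z_\alpha)$. When $\mathbb{E}X$ lies below this threshold the target bound is $\alpha + \Phi'(z_\alpha)\mathbb{E}X$, which I would recover by combining the tangent-line estimate on $\{X \leq -z_\alpha\}$ with $\Phi(z_\alpha + X) \geq 1 - \alpha$ on its complement and arguing that the slack on the second event offsets any loss from replacing the tangent with $1-\alpha$ on the first. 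When $\mathbb{E}X$ lies above the threshold, the target reduces to $1 - \alpha$ and a symmetric argument applies, exploiting the symmetry of $\Phi'$ about the origin. This final bridge from pointwise to integrated form is where the subtlety lies.
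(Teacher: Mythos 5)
Your pointwise bound $\Phi(y)\ge\min\{1-\alpha,\;\alpha+\Phi'(z_\alpha)(y-z_\alpha)\}$ and the integration to $\mathbb{E}\Phi(z_\alpha+X)\ge\mathbb{E}\min\{1-\alpha,\;\alpha+\Phi'(z_\alpha)X\}$ are both correct, and are in substance the same first two steps as the paper's proof, which reaches the identical integrated bound via the mean value theorem rather than tangent-line convexity. (One small slip: verifying $F(-z_\alpha)\ge 0$ requires $\Phi(z)-z\Phi'(z)\le 1/2$ for $z\le 0$, not $\Phi(z)+z\Phi'(z)\le 1/2$; the former follows since $\tfrac{d}{dz}[\Phi(z)-z\Phi'(z)]=z^2\Phi'(z)\ge 0$ and the value at $z=0$ is $1/2$.)

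The step you flag as ``where the subtlety lies'' is, however, not merely subtle --- it is where the lemma as stated fails. As you observe, $g(x)=\min\{1-\alpha,\;\alpha+\Phi'(z_\alpha)x\}$ is concave, so Jensen gives $\mathbb{E}\,g(X)\le g(\mathbb{E}X)$, the reverse of what is needed; the paper's own proof commits precisely this error by asserting the last step ``follows from Jensen's inequality.'' Unfortunately the case analysis you sketch cannot repair it, because the statement is false for general $X$. Take $\alpha=0.01$ (so $z_\alpha\approx-2.326$, $\Phi'(z_\alpha)\approx 0.0267$) and let $X$ equal $0$ or $c=2(1-2\alpha)/\Phi'(z_\alpha)\approx73.4$ each with probability $1/2$. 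Then $\mathbb{E}X=(1-2\alpha)/\Phi'(z_\alpha)$, so the claimed lower bound is $\min\{1-\alpha,\;\alpha+\Phi'(z_\alpha)\mathbb{E}X\}=1-\alpha=0.99$, while $\mathbb{E}\,\Phi(z_\alpha+X)=\tfrac12\Phi(z_\alpha)+\tfrac12\Phi(z_\alpha+c)\approx0.005+0.5=0.505$. Perturbing $c$ slightly places $\mathbb{E}X$ strictly on either side of your threshold $(1-2\alpha)/\Phi'(z_\alpha)$ with the gap persisting, so neither branch of the proposed case analysis can close the argument. The conclusion \emph{does} hold, by linearity of expectation with no Jensen at all, under the extra hypothesis $\alpha+\Phi'(z_\alpha)X\le 1-\alpha$ almost surely (equivalently $X\le(1-2\alpha)/\Phi'(z_\alpha)$ a.s.), and some such restriction on the range of $X$ is what the lemma actually requires.
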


\begin{proof}[Proof of Lemma \ref{lem:Taylorexpansion-of-normal-cdf}]
    Since $\Phi'(x) = (\sqrt{2\pi})^{-1} \exp\left(-x^2/2\right)$, we noted that $\Phi'(x)\geq \Phi'(z_\alpha)$ holds if and only if $z_\alpha \leq x \leq z_{1-\alpha}$.
    Therefore, if $0\leq X < z_{1-\alpha} - z_\alpha$, then by the mean value theorem,
    \begin{eqnarray}
        \Phi(z_\alpha +X) = \Phi(z_\alpha) + \Phi'(\xi)X \geq \alpha + \Phi'(z_\alpha)X, \nonumber
    \end{eqnarray}
    where $\xi$ lies between $z_\alpha$ and $z_{1-\alpha}$. If $X \leq 0$, then
    \begin{eqnarray}
        \Phi(z_\alpha +X) = \Phi(z_\alpha) + \Phi'(\eta)X \geq \alpha + \Phi'(z_\alpha)X, \nonumber
    \end{eqnarray}
    where $\eta$ lies between $X$ and $z_{\alpha}$. Moreover, if $X \geq  z_{1-\alpha} - z_\alpha$, then $z_\alpha +X \geq z_{1-\alpha}$, It follows that $\Phi(z_\alpha +X) \geq \Phi(z_{1-\alpha}) =1-\alpha$. Therefore,
    \begin{eqnarray}
        \mathbb{E}\Phi(z_\alpha +X)  &\geq&  \mathbb{E} \min\left\{ \alpha + \Phi'(z_\alpha)X,1-\alpha  \right\} \nonumber\\
        &\geq& \min\left\{\alpha + \Phi'(z_\alpha)\mathbb{E}X, 1-\alpha\right\},\nonumber
    \end{eqnarray}
    where the last inequality follows from Jensen's inequality. This finishes the proof.
\end{proof}

In Lemma~\ref{lem:bound for beta} below, we provide an upper bound for the parameter estimation error. Before doing so, we define
\begin{align}
& Q_* \left ( \beta \right ) = \left \{ \beta^\top \Sigma \beta \right \}^{-\frac{1}{2}} \beta^\top \mu \label{equation: limit functional} \\
& \widehat{Q}_n \left ( \beta \right ) = \left \{ \beta^\top \widehat{\Sigma}_n \beta \right \}^{-\frac{1}{2}} \beta^\top \widehat{\mu}_n  , \hspace{1em} n \in \mathbb{N} \label{equation: empirical limit functional}  
\end{align}
where $\widehat{\mu}_n = L^{-1} \sum_{t=1}^L \widehat{\mu}_t^{(1)} - \widehat{\mu}_t^{(2)}$, $\mu = L^{-1} \sum_{t=1}^L \mu^{(1)}_t - \mu^{(2)}_t$ and 
\begin{align*}
& \widehat{\mu}_t^{(1)} = \frac{1}{n} \sum_{i=1}^n \mathbb{E}_{\widetilde{X}_t \sim q_t} \phi \left ( \log q_t \left ( \widetilde{X}_t \mid X_{<t}^{(i)} \right ) \right ) \hspace{1em} \\
& \widehat{\mu}_t^{(2)} = \frac{1}{n} \sum_{i=1}^n \phi \left ( \log q_t \left ( X_t^{(i)} \mid X_{<t}^{(i)} \right ) \right ) \\
& \mu_t^{(1)} =\mathbb{E}_{X_{<t} \sim p} \mathbb{E}_{X_t\sim q_t} \phi \left ( \log q_t \left ( X_t \mid X_{<t} \right ) \right ) \\
& \mu_t^{(2)} = \mathbb{E}_{X_{<t} \sim p} \mathbb{E}_{X_t\sim p_t} \phi \left ( \log q_t \left ( X_t \mid X_{<t} \right ) \right )
\end{align*}
for each $t = 1, \dots, L$. Similarly, define $\widehat{\Sigma}_n = L^{-1} \sum_{t=1}^L \widehat{\Sigma}_t$ and $\Sigma = L^{-1} \sum_{t=1}^L \Sigma_t$ where 
\begin{align*}
& \widehat{\Sigma}_t = \frac{1}{n} \sum_{t=1}^n \mathbb{E}_{\widetilde{X}_t \sim q_t} \left [ \phi \left ( \log q_t \left ( \widetilde{X}_t \mid X_{<t}^{(i)} \right ) \right ) \phi \left ( \log q_t \left ( \widetilde{X}_t \mid X_{<t}^{(i)} \right ) \right )^\top \right ] - \widehat{\mu}_t^{(1)} \left ( \widehat{\mu}_t^{(1)} \right )^\top
\end{align*}
for each $t = 1, \dots, L$. 

\begin{lemma}\label{lem:bound for beta}
Grant the assumptions in Section \ref{sec:assumptions} hold. Let $\beta^*$ be the maximizer of the function \eqref{equation: limit functional} over all $\beta$'s with $\ell_2$ norm equal to $1$ and let $\widehat{\beta}$ be the maximizer of the empirical counterpart~\eqref{equation: empirical limit functional}. There are absolute constants $\kappa_1$ and $\kappa_2$ depending only on \emph{the constants stated in the assumptions} such that for any $z > 0$ it holds that $ \| \widehat{\beta} - \beta^* \|_2 \leq z$ with probability at least $1 - \kappa_1 \exp \left ( - \kappa_2 d^{-5\gamma} n (\min \{z,1\})^2 \right )$. 
\label{lemma: beta l2 bound}
\end{lemma}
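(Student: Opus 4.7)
Both $Q_*$ and $\widehat{Q}_n$ are scale-invariant, so the Lagrangian first-order condition on the unit sphere (together with Assumption \ref{assumption: eigenvalue condition}, which guarantees invertibility of $\Sigma$) yields the closed forms
\begin{equation*}
\beta^* = \frac{\Sigma^{-1}\mu}{\|\Sigma^{-1}\mu\|_2}, \qquad \widehat{\beta} = \frac{\widehat{\Sigma}_n^{-1}\widehat{\mu}_n}{\|\widehat{\Sigma}_n^{-1}\widehat{\mu}_n\|_2},
\end{equation*}
with signs fixed by the convention that the inner product with $\mu$ be positive. The problem therefore reduces to controlling the perturbation of the linear functional $\Sigma^{-1}\mu$ induced by the empirical errors in $\widehat{\mu}_n$ and $\widehat{\Sigma}_n$.

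First I would establish sub-Gaussian concentration of the empirical moments. Since the B-spline basis $\phi$ is uniformly bounded, each coordinate of $\widehat{\mu}_n$ and each entry of $\widehat{\Sigma}_n$ is an average of $n$ independent bounded summands. Applying Theorem \ref{theorem: bounded difference} coordinate-wise and taking a union bound over the $d$ entries of the mean vector and the $d^2$ entries of the covariance matrix produces, for a freely chosen $t_n>0$, an event $\mathcal{E}_n$ on which $\|\widehat{\mu}_n - \mu\|_2 \vee \|\widehat{\Sigma}_n - \Sigma\|_{\mathrm{op}} \leq t_n$, with complement probability at most $\kappa_1 \exp(-\kappa_2 n t_n^2 / d^{c})$ for an absolute $c$.

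Second, I would invoke Assumption \ref{assumption: eigenvalue condition} to write $\lambda_{\min}(\Sigma) \geq C d^{-\gamma}$. Choosing $t_n$ no larger than $\tfrac{1}{2} C d^{-\gamma}$ forces $\lambda_{\min}(\widehat{\Sigma}_n) \geq \tfrac{1}{2} C d^{-\gamma}$ on $\mathcal{E}_n$, hence $\|\widehat{\Sigma}_n^{-1}\|_{\mathrm{op}} \leq 2 C^{-1} d^{\gamma}$. On this event, the identity
\begin{equation*}
\widehat{\Sigma}_n^{-1}\widehat{\mu}_n - \Sigma^{-1}\mu = \widehat{\Sigma}_n^{-1}(\widehat{\mu}_n - \mu) + \widehat{\Sigma}_n^{-1}(\Sigma - \widehat{\Sigma}_n)\Sigma^{-1}\mu
\end{equation*}
combined with the operator-norm bounds gives $\|\widehat{\Sigma}_n^{-1}\widehat{\mu}_n - \Sigma^{-1}\mu\|_2 \lesssim d^{2\gamma} t_n$, where the hidden constants depend only on the bounds in the assumptions. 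Finally, I would apply the elementary inequality $\|\hat v/\|\hat v\|_2 - v/\|v\|_2\|_2 \leq 2\|\hat v - v\|_2/\|v\|_2$, together with a lower bound on $\|\Sigma^{-1}\mu\|_2$ derived from Assumption \ref{assumption: eigenvalue condition}, to transfer the perturbation bound into one on $\|\widehat{\beta} - \beta^*\|_2$. Inverting this inequality for $t_n$ in terms of the target radius $z$ and substituting into the Bernstein-type tail yields the stated probability $1 - \kappa_1 \exp(-\kappa_2 d^{-5\gamma} n (\min\{z,1\})^2)$; the truncation $\min\{z,1\}$ handles $z \geq 1$ trivially since $\|\widehat{\beta} - \beta^*\|_2 \leq 2$ always.

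The main obstacle is bookkeeping: each matrix inversion introduces a factor of $d^{\gamma}$ and the lower bound on $\|\Sigma^{-1}\mu\|_2$ may contribute further $d^{\gamma}$ powers, and these must be tracked through the perturbation decomposition and the final normalization step to ensure the exponent is exactly $d^{-5\gamma}$ (rather than a larger power). A secondary, more technical point is verifying that on $\mathcal{E}_n$ the sign of $\widehat{\beta}$ aligns with that of $\beta^*$, which follows provided $t_n$ is small enough that $\widehat{\mu}_n^\top \widehat{\Sigma}_n^{-1}\widehat{\mu}_n$ retains the sign of $\mu^\top \Sigma^{-1}\mu$.
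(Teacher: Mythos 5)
Your approach is genuinely different from the paper's. The paper runs a standard M-estimation argument: it bounds $\mathbb{P}(\|\widehat{\beta}-\beta^*\|_2 > z)$ by the probability that the empirical process $\widehat{Q}_n - Q_*$ exceeds half the population gap, then lower-bounds the gap $Q_*(\beta^*)-Q_*(\beta)$ via Assumption~\ref{assumption: eigenvalue condition} and controls $|\widehat{Q}_n(\beta)-Q_*(\beta)|$ using the inequality in~\eqref{equation: 1/sqrt bound}, Hoeffding for the mean term, and Theorem~\ref{theorem: bounded difference} for the quadratic-form term. You instead exploit the explicit Rayleigh-quotient structure to get $\beta^* = \Sigma^{-1}\mu/\|\Sigma^{-1}\mu\|_2$ (correct, by Cauchy--Schwarz applied after substituting $\alpha = \Sigma^{1/2}\beta$) and reduce the problem to a matrix perturbation bound on $\widehat{\Sigma}_n^{-1}\widehat{\mu}_n - \Sigma^{-1}\mu$ followed by a normalization lemma. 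The closed-form route is conceptually tighter and makes the role of $\Sigma^{-1}\mu$ transparent; the paper's route avoids ever touching $\|\widehat{\Sigma}_n - \Sigma\|_{\mathrm{op}}$ by working only with the scalar objective, which is a bounded-differences function of the $n$ samples for each fixed $\beta$.

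There is one concrete gap worth flagging: the exponent bookkeeping you describe as "the main obstacle" is not just cosmetic. To control $\|\widehat{\Sigma}_n - \Sigma\|_{\mathrm{op}}$ you either union-bound over $d^2$ entries and pass through $\|\cdot\|_{F} \le d\max_{ij}|\cdot|$, or run a covering argument on the unit sphere with $\sim 9^d$ points; either route inserts additional powers of $d$ (and, in the covering case, a $d$-dependent term inside the exponent) that do not appear in the paper's scalar argument. Combined with the $\|\widehat{\Sigma}_n^{-1}\|_{\mathrm{op}} \lesssim d^\gamma$ factor from Assumption~\ref{assumption: eigenvalue condition}, the extra $d^{\gamma+O(1)}$ from $\|\Sigma^{-1}\mu\|_2 \lesssim d^{\gamma}\|\mu\|_2 \lesssim d^{\gamma+1/2}$, and the normalization step which needs a \emph{lower} bound on $\|\Sigma^{-1}\mu\|_2$ (equivalently on $\|\mu\|_2$, an implicit requirement also present in the paper's growth inequality~\eqref{equation: beta dist lower bound} but never isolated), your final exponent comes out as $d^{-c}$ for a $c$ that is not obviously $5\gamma$. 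Since $\gamma$ in the lemma is a fixed constant from Assumption~\ref{assumption: eigenvalue condition}, you cannot simply absorb the mismatch; you would either have to redo the concentration with a sharper operator-norm bound, or restate the lemma with a possibly larger power of $d$. This does not invalidate the approach --- the qualitative $\exp(-\kappa n z^2 / \mathrm{poly}(d))$ tail still comes out --- but to reproduce the stated $d^{-5\gamma}$ exactly you would need to track these factors explicitly rather than deferring them.
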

 
\begin{proof}
Observing the $\widehat{\beta} \in \argmax\limits_{\beta} \widehat{Q}_n(\beta)$ and $\beta^* \in \argmax\limits_{\beta} Q^*(\beta)$ for any $z > 0$ 
\begin{align}
\mathbb{P} \left ( \left \| \widehat{\beta} - \beta^* \right \|_2 > z  \right ) 
= & \mathbb{P} \left ( \sup_{\beta : \left \| \beta - \beta^* \right \|_2 > z} Q_n \left ( \beta \right ) - Q_n \left ( \beta^* \right ) > 0 \right ) \label{equation: beta-2 bound} \\
\leq & \mathbb{P} \left ( \sup_{\beta : \left \| \beta - \beta^* \right \|_2 > z} \left | Q_n \left ( \beta \right ) - Q_* \left ( \beta \right ) \right | > \frac{1}{2} \inf_{\beta : \left \| \beta - \beta^* \right \|_2 > z} \left[ Q_* \left ( \beta^* \right ) - Q_* \left ( \beta \right ) \right] \right ) \nonumber \\
& + \mathbb{P} \left ( \left | Q_n \left ( \beta^* \right ) - Q_* \left ( \beta^* \right ) \right | > \frac{1}{2} \inf_{\beta : \left \| \beta - \beta^* \right \|_2 > z} \left[ Q_* \left ( \beta^* \right ) - Q_* \left ( \beta \right ) \right] \right ). \nonumber
\end{align}
For $\beta$ such that $\left \| \beta - \beta^* \right \|_2 > z$, due to Assumption \ref{assumption: eigenvalue condition} with some absolute $\kappa_1$ we must have that
\begin{align}
Q_* \left ( \beta^* \right ) - Q_* \left ( \beta \right ) & \geq \kappa_1 d^{-\gamma} \left \| \beta - \beta^* \right \|_2 \geq \kappa_1 z d^{-\gamma}. 
\label{equation: beta dist lower bound}
\end{align}
For any $\beta \in \mathbb{R}^d$ with $\left \| \beta \right \|_2 = 1$ it holds that
\begin{subequations}
\begin{align}
& \left | Q_n \left ( \beta \right ) - Q_* \left ( \beta \right ) \right | \nonumber \\
\leq& \left ( \beta^\top \Sigma \beta \right ) ^{-\frac{1}{2}} \left | \beta^\top \left ( \widehat{\mu}_n - \mu \right ) \right | + \left | \beta^\top \widehat{\mu}_n \right | \left | \left ( \beta^\top \widehat{\Sigma}_n \beta \right )^{-\frac{1}{2}} - \left ( \beta^\top \Sigma \beta \right )^{-\frac{1}{2}} \right | \nonumber \\
\leq& \left ( \beta^\top \Sigma \beta \right ) ^{-\frac{1}{2}} \left | \beta^\top \left ( \widehat{\mu}_n - \mu \right ) \right | \nonumber \\
& \hspace{1em}+ \frac{1}{2} \left \{ \min \left ( \beta^\top \widehat{\Sigma}_n \beta,  \beta^\top \Sigma \beta \right ) \right \}^{-\frac{3}{2}} \left | \beta^\top \widehat{\mu}_n \right | \left | \beta^\top \widehat{\Sigma}_n \beta -  \beta^\top \Sigma \beta \right | \label{equation: Q bound i} \\
\leq& \left \{ \lambda_{\text{min}} ( \Sigma ) \right \} ^{-\frac{1}{2}} \left | \beta^\top \left ( \widehat{\mu}_n - \mu \right ) \right | \label{equation: Q bound ii}  \\
& \hspace{1em} + \frac{1}{2} \left \{ \min \left ( \lambda_{\text{min}} ( \widehat{\Sigma}_n ),  \lambda_{\text{min}} ( \Sigma ) \right ) \right \}^{-\frac{3}{2}} \left | \beta^\top \widehat{\mu}_n \right | \left | \beta^\top \widehat{\Sigma}_n \beta -  \beta^\top \Sigma \beta \right |, \label{equation: Q bound iii}
\end{align}
\end{subequations}
where in particular (\ref{equation: Q bound i}) holds due to the inequality
\begin{equation}
\left | \frac{1}{\sqrt{x}} - \frac{1}{\sqrt{y}} \right | = \left | \frac{x-y}{\sqrt{xy}(\sqrt{x}+\sqrt{y})}  \right | \leq \frac{1}{2 \left ( \min(x,y) \right )^\frac{3}{2}} \left | x - y \right |
\label{equation: 1/sqrt bound}
\end{equation}
for $x,y > 0$. For (\ref{equation: Q bound ii}) note that due to the boundedness of $\phi(\bullet)$ for any $\beta$ with $\| \beta \|_2 = 1$ it holds that $\beta^\top \left ( \widehat{\mu}_n - \mu \right )$ is the average of $n$ i.i.d. random variables each bounded in absolute value by a constant which does not depend on $\beta$. Therefore lower bounding $\lambda_{\text{min}}(\Sigma) \geq \kappa_2 d^{-\gamma}$ for some absolute $\kappa_2$ using again the boundedness of $\phi(\bullet)$ and applying Hoeffding's inequality, for any $z > 0$
\begin{equation}
\mathbb{P} \left ( \eqref{equation: Q bound ii} > \frac{1}{2} \kappa_1 z d^{-\gamma} \right ) \leq  \mathbb{P} \left ( \left | \beta^\top \left ( \widehat{\mu}_n - \mu \right ) \right | > \kappa_3 z d^{-\frac{3\gamma}{2}} \right ) \leq 2 \exp \left (-\frac{\kappa_4 n z^2}{d^{3 \gamma}} \right )
\label{equation: mean bound}
\end{equation}
for certain absolute $\kappa_3, \kappa_4$. The following argument will be valid on the event
\begin{equation}
\lambda_\text{min} ( \widehat{\Sigma}_n ) \geq \frac{1}{2} \lambda_\text{min} (\Sigma).  
\label{equation: sigam-hat eigen}
\end{equation}
For (\ref{equation: Q bound iii}) notice first that with $\| \beta \|_2 =1$ the quantity $| \beta^\top \widehat{\mu}_n |$ is almost surely bounded from above by some absolute constant independent of $\beta$ and $d$. Moreover due to the boundedness of $\phi(\bullet)$ it is easy to see that the statistic $| \beta^\top \widehat{\Sigma}_n \beta -  \beta^\top \Sigma \beta |$ is a self-bounding function of $n$ random variables with constants (see equation (\ref{equation: bounded difference property})) $c_i \propto \frac{1}{n}$ for $i = 1, \dots, n$ which again do not depend on $\beta$. Therefore, on the event (\ref{equation: sigam-hat eigen}) applying Theorem \ref{theorem: bounded difference} we obtain that
\begin{equation}
\mathbb{P} \left ( \eqref{equation: Q bound iii} > \frac{1}{2} \kappa_1 d^{-\gamma} \right ) \leq \mathbb{P} \left ( \left | \beta^\top \widehat{\Sigma}_n \beta -  \beta^\top \Sigma \beta \right | > \kappa_5 z d^{-\frac{5 \gamma}{2}} \right ) \leq \exp \left ( - \frac{\kappa_6 n z^2}{d^{5 \gamma}} \right )
\label{equation: cov bound}
\end{equation}
for certain absolute $\kappa_5, \kappa_6$. Finally, note that
\begin{equation}
\lambda_{\text{min}} (\widehat{\Sigma}) = \min_{\beta : \| \beta \|_2 = 1} \beta^\top \widehat{\Sigma}_n \beta \geq \lambda_{\text{min}} (\Sigma) - \max_{\beta : \| \beta \|_2 = 1} \beta^\top ( \widehat{\Sigma}_n - \Sigma) \beta 
\label{equation: eign bound}
\end{equation}
and arguing as in (\ref{equation: cov bound}), the final term (\ref{equation: eign bound}) is no larger than $\frac{1}{2} \kappa_2 d^{-\gamma}$ with probability at least 
\begin{equation*}
1 - 2 \exp \left ( - \frac{\kappa_7 n}{d^{2\gamma}} \right ). 
\end{equation*}
Since by Assumption \ref{assumption: eigenvalue condition} we must have that $ \kappa_2 d^{-\gamma} \leq \lambda_{\text{min}} (\Sigma)$ the event (\ref{equation: sigam-hat eigen}) must hold with the above probability. Since the above arguments hold for any $\beta$ with $\| \beta \|_2 = 1$, plugging (\ref{equation: mean bound}) and (\ref{equation: cov bound}) back into (\ref{equation: beta-2 bound}) and accounting for the event (\ref{equation: eign bound}) the stated result follows. 
\end{proof}

\subsection{Proof of Theorem \ref{thm:TNRlowerbound}}
According to the decomposition $T_w(\bm{X})=T_w^{(1)}(\bm{X})-T_w^{(2)}(\bm{X})$ with $T_w^{(1)}(\bm{X}),T_w^{(2)}(\bm{X})$ defined by
\begin{eqnarray}\label{eq:Tw-decomposition}
   T_w^{(1)}(\bm{X}) &=&\frac{\sum_t [w(\log q_t(X_t|X_{<t}))-\mathbb{E}_{\widetilde{X}_t \sim p_t} w(\log q_t(\widetilde{X}_t|X_{<t}))]}{\sqrt{\sum_t \textrm{Var}_{\widetilde{X}_t \sim q_t} (w(\log q_t(\widetilde{X}_t|X_{<t})))}}\nonumber\\
    T_w^{(2)}(\bm{X})&=&\frac{\sum_t [\mathbb{E}_{\widetilde{X}_t \sim q_t} w(\log q_t(\widetilde{X}_t|X_{<t}))-\mathbb{E}_{\widetilde{X}_t \sim p_t} w(\log q_t(\widetilde{X}_t|X_{<t})))]}{\sqrt{\sum_t \textrm{Var}_{\widetilde{X}_t \sim q_t} (w(\log q_t(\widetilde{X}_t|X_{<t})))}},
\end{eqnarray}
 we obtain that the TNR can be represented as
\begin{eqnarray}
    \mathbb{P}_{\bm{X}\sim p}\left( T_w(\bm{X}) \leq z_\alpha \right) &=& \mathbb{P}_{\bm{X}\sim p}\left( T_w^{(1)}(\bm{X}) \leq z_\alpha + T_w^{(2)}(\bm{X}) \right)
\end{eqnarray}
It is easy to verify that when $\bm{X} \sim p$, $T_w^{(1)}(\bm{X}) \sigma_{q,L}/\sigma_{p,L}$ converges to standard normal distribution. Specifically, using Lemma \ref{thm: convergence-rate-MCLT}, we obtain that 
\begin{eqnarray}
    \mathbb{P}_{\bm{X}\sim p}\left( T_w(\bm{X}) \leq z_\alpha \right)  &=& \mathbb{P}_{\bm{X}\sim p}\left( T_w^{(1)}(\bm{X})\frac{\sigma_{q,L}}{\sigma_{p,L}} \leq (z_\alpha + T_w^{(2)}
    (\bm{X}))\frac{\sigma_{q,L}}{\sigma_{p,L}} \right)\nonumber\\
    &\geq& \Phi(z_\alpha + T_w^{(2)}(\bm{X})) + \left(\Phi\left((z_\alpha + T_w^{(2)}
    (\bm{X}))\frac{\sigma_{q,L}}{\sigma_{p,L}}\right) - \Phi(z_\alpha + T_w^{(2)}(\bm{X})) \right)\nonumber\\
    &&\qquad +O\left(\log L/\sqrt{L}\right)+o_p(1)\nonumber\\
    &\geq& \Phi(z_\alpha + T_w^{(2)}(\bm{X})) - \sup_{z\in\mathbb{R}} \Phi'(z) \times \left| z_\alpha + T_w^{(2)}(\bm{X}) \right| \times \left| \frac{\sigma_{q,L}}{\sigma_{p,L}}-1 \right| \nonumber\\
    &&\qquad +O\left(\log L/\sqrt{L}\right)+o_p(1),\nonumber
\end{eqnarray}
where the little-$o_p$ term arises due to the asymptotic equivalence between $\sigma_{p,L}$ and $\bar{\sigma}_{p,L}$ in Assumption \ref{ass:variance-ratio}.

Take expectation on both sides, we have by Assumption~\ref{ass:equal-variance} that
\begin{eqnarray}
    \mathbb{P}_{\bm{X}\sim p}\left( T_w(\bm{X}) \leq z_\alpha \right) \geq \mathbb{E}\Phi(z_\alpha + T_w^{(2)}(\bm{X})) + o(1) + O(\log L/ \sqrt{L}).\nonumber
\end{eqnarray}
Next, define $\widetilde\sigma_{q,L}^2 = \mathbb{E}_{\bm{X}\sim p}\sigma_{q,L}^2$. It follows that $T_w^{(2*)}(\bm{X})= \mathbb{E}\left\{T_w^{(2)}(\bm{X})\frac{\sigma_{q,L}}{\widetilde{\sigma}_{q,L}}\right\}$. Under the equal variance assumption in Assumption \ref{ass:equal-variance}, we also have $ \sigma_{q,L} - \widetilde\sigma_{q,L} \to 0$ in probability. It follows that for any $\epsilon >0$,
\begin{eqnarray}
    \mathbb{E}\Phi(z_\alpha + T_w^{(2)}(\bm{X})) &=& \mathbb{E}\Phi(z_\alpha + T_w^{(2)}(\bm{X}))\mathbb{I}\{|\sigma_{q,L} - \widetilde\sigma_{q,L}| \leq\epsilon\} \nonumber\\
    &&\qquad + \mathbb{E}\Phi(z_\alpha + T_w^{(2)}(\bm{X}))\mathbb{I}\{|\sigma_{q,L} - \widetilde\sigma_{q,L}| >\epsilon\} \nonumber\\
    &\geq& \mathbb{E}\Phi(z_\alpha + T_w^{(2)}(\bm{X}))\mathbb{I}\{|\sigma_{q,L} - \widetilde\sigma_{q,L}| \leq\epsilon\} \nonumber\\
    &\geq & \mathbb{E}\Phi\left(z_\alpha + T_w^{(2)}(\bm{X})\frac{\sigma_{q,L}}{\widetilde\sigma_{q,L} + \text{sgn}(T_w^{(2)})\epsilon}\right)\mathbb{I}\{|\sigma_{q,L} - \widetilde\sigma_{q,L}| \leq\epsilon\} \nonumber\\
    &\geq& \mathbb{E}\Phi\left(z_\alpha + T_w^{(2)}(\bm{X})\frac{\sigma_{q,L}}{\widetilde\sigma_{q,L} + \text{sgn}(T_w^{(2)})\epsilon}\right) \nonumber\\
    &&\qquad -  \mathbb{E}\Phi\left((z_\alpha + T_w^{(2)}(\bm{X}))\frac{\sigma_{q,L}}{\widetilde\sigma_{q,L} + \text{sgn}(T_w^{(2)})\epsilon}\right)\mathbb{I}\{|\sigma_{q,L}- \widetilde\sigma_{q,L}| >\epsilon \} \nonumber\\
    &\geq& \mathbb{E}\Phi\left((z_\alpha + T_w^{(2)}(\bm{X}))\frac{\sigma_{q,L}}{\widetilde\sigma_{q,L} +\text{sgn}(T_w^{(2)}) \epsilon}\right) - \mathbb{P}(|\sigma_{q,L} - \widetilde\sigma_{q,L}| >\epsilon), \nonumber
\end{eqnarray}
where the first inequality is obtained due to $\Phi$ is non-negative and the second inequality holds due to the monotonicity and boundedness of $\Phi$.  Together with Lemma \ref{lem:Taylorexpansion-of-normal-cdf} and Assumption \ref{ass:equal-variance}, we obtain
\begin{equation}\label{eqn:thm1-final-ineq}
\begin{split}
    &\mathbb{P}_{\bm{X}\sim p}\left( T_w(\bm{X}) \leq z_\alpha \right)
    \\
    \geq& \min\left\{1-\alpha, \alpha + \phi(z_\alpha)\mathbb{E}\left\{T_w^{(2)}(\bm{X})\frac{\sigma_{q,L}}{\widetilde{\sigma}_{q,L}}\right\} \right\} \frac{\widetilde{\sigma}_{q,L}}{\widetilde{\sigma}_{q,L} + \text{sgn}(T_w^{(2)})\epsilon} \\
    &-\mathbb{P}\{|\sigma_{q,L} - \widetilde\sigma_{q,L}| \geq\epsilon\} +O\left(\log L/\sqrt{L}\right) + o(1).
\end{split}
\end{equation}
Let $L\to \infty$ and using the fact that $\mathbb{E}\left\{T_w^{(2)}(\bm{X})\frac{\sigma_{q,L}}{\widetilde{\sigma}_{q,L}}\right\}=  T_w^{(2*)}(\bm{X})$, we obtain that TNR is asymptotically lower bounded by $\min\{1-\alpha, \alpha + \phi(z_\alpha)T_w^{(2*)}(\bm{X}) \}\frac{\widetilde{\sigma}_{q,L}}{\widetilde{\sigma}_{q,L} + \text{sgn}(T_w^{(2)})\epsilon}$. By taking $\epsilon \to 0$, then the conclusion of Theorem \ref{thm:TNRlowerbound} follows.

\begin{remark}
    In fact, the equal variance condition (Assumption ~\ref{ass:equal-variance}) can be relaxed. Specifically, it is not necessary for the two variance $\sigma_{q,L}^2$ and $\sigma_{p,L}^2$ to be asymptotically equivalent in probability. Rather, it suffices to require their ratio to converge to some positive constant in probability, i.e.,
    \begin{equation*}
        \frac{\sigma_{q,L}}{\sigma_{p,L}}\stackrel{P}{\rightarrow} K_0.
    \end{equation*}
    Since $K_0$ need not be $1$, this proportionality condition is considerably weaker than the equal variance assumption and is more likely to hold in practice. Under this relaxed condition, following nearly identical arguments to those in Theorem \ref{thm:TNR}, we can show that TNR is asymptotically lower bounded by:
    \begin{equation*}
        \min \left\{1-\Phi(K_0z_\alpha), \Phi(K_0z_\alpha)+\phi(K_0z_\alpha)K_0T_w^{(2*)} \right\}.
    \end{equation*}
    This lower bound differs from the one in Theorem ~\ref{thm:TNR} due to the change in assumptions. However, since $\phi(K_0z_\alpha)$ depends solely on $\alpha$ (not on the witness function $w$), our core conclusion -- maximizing the lower bound is equivalent to maximizing $T_w^{(2*)}$ -- remains valid. Thus, our proposed methodology remains theoretically sound.
\end{remark}

\subsection{Proof of Theorem~\ref{thm:FNR}}
\begin{proof}
Denote $Z_t = \widehat{w}\left(\log q_t(X_t|X_{<t})\right) - \mathbb{E}_{\widetilde{X}_t \sim q_t(\bullet|X_{<t})}\widehat{w}\left(\log q_t(X_t|X_{<t})\right)$. Then if $\bm{X}\sim q$, we have $\mathbb{E}\left\{Z_t|X_{<t}\right\} = 0$ almost surely. Without loss of generality, assume $\widehat{w}$ is bounded. Otherwise, we can define $\widehat{w}=\phi^\top \widehat{\beta}/\|\widehat{\beta}\|_2$ to make it bounded. Under the lower bound assumption in Assumption \ref{ass:equal-variance}, it is easy to verify that $Z_t$ satisfies all conditions of Lemma~\ref{thm: convergence-rate-MCLT}. Therefore, by invoking Lemma \ref{thm: convergence-rate-MCLT}, we obtain for any $\alpha \in (0,1)$,
\begin{eqnarray}
    \text{FNR}_{\widehat{w}} - \alpha &=& \mathbb{P}_{\bm{X}\sim q}(T_{\widehat{w}}(\bm{X})\leq z_\alpha) - \Phi(z_\alpha)\nonumber\\
    &=& \mathbb{P}_{\bm{X}\sim q}\left(\frac{\sum_{t=1}^L Z_t}{\sum_{t=1}^L \mathbb{E}\{ Z_t^2|X_{<t}\}}\leq z_\alpha\right) - \Phi(z_\alpha)\nonumber\\
    &=& O\left(\frac{\log L}{\sqrt{L}}\right) + O\big((\mathbb{E}\big| V_L -1\big|)^{1/3}\big). \nonumber
\end{eqnarray}
Taking expectation on both sides, we obtain
\begin{equation*}
    \mathbb{E}(\text{FNR}_{\widehat{w}}) = \alpha + O\left(\frac{\log L}{\sqrt{L}}\right) + O\big((\mathbb{E}\big| V_L -1\big|)^{1/3}\big).
\end{equation*}
This completes the proof.
\end{proof}

\subsection{Proof of Theorem \ref{thm:TNR}}

\begin{proof}
Since $\mathbb{E} (\text{TNR}_{\widehat{w}}) \geq \text{TNR}_{w^*} - \mathbb{E} (|\text{TNR}_{\widehat{w}} - \text{TNR}_{w^*}|)$, it is enough to upper bound the second term in the last expression. Denote by $\widehat{T}_n (\bullet)$ and $\widehat{T}^* (\bullet)$ respectively the classifier (\ref{eqn:Tw}) using witness functions $\widehat{w}(\bullet) =  \phi (\bullet)^\top \widehat{\beta}$ and $w^*(\bullet) =  \phi (\bullet)^\top \beta^*$ (see the definition of $\widehat{\beta}$ and $\beta^*$ in Lemma~\ref{lem:bound for beta}). Write $\widehat{\Delta}_n (\boldsymbol{x}) = |\widehat{T}_n (\boldsymbol{x}) - T^* (\boldsymbol{x})|$ and $\widehat{\Delta}_n = \sup_{\boldsymbol{x}} \Delta_n (\boldsymbol{x})$. For any $z_\alpha > 0$ we have that
\begin{align}
\left | \text{TNR}_{\widehat{w}} - \text{TNR}_{w^*} \right | = & \left | \mathbb{P}_{\boldsymbol{X} \sim p} \left ( \widehat{T}_n \left ( \boldsymbol{X} \right ) \leq z_\alpha \right ) - \mathbb{P}_{\boldsymbol{X} \sim p} \left ( T^* \left ( \boldsymbol{X} \right ) \leq z_\alpha \right ) \right | \nonumber \\
=& \left | \int  \boldsymbol{1}_{\left \{ \widehat{T}_n \left ( \boldsymbol{x} \right ) \leq z_\alpha \right \}} - \boldsymbol{1}_{\left \{ T^* \left ( x \right ) > z_\alpha \right \}} \mathrm{d} p \left ( \boldsymbol{x} \right ) \right | \nonumber \\
\leq&  \int \left | \boldsymbol{1}_{\left \{ \widehat{T}_n \left ( \boldsymbol{x} \right ) \leq z_\alpha \right \}} - \boldsymbol{1}_{\left \{ T^* \left ( x \right ) > z_\alpha \right \}} \right | \mathrm{d} p \left ( \boldsymbol{x} \right ) \nonumber \\
=& \int \boldsymbol{1}_{\left \{ \widehat{T}_n (\boldsymbol{x}) \leq z_\alpha, T^* (\boldsymbol{x}) > z_\alpha \right \}} + \boldsymbol{1}_{\left \{ \widehat{T}_n (\boldsymbol{x}) > z_\alpha, T^* (\boldsymbol{x}) \leq z_\alpha \right \}} \mathrm{d} p \left ( \boldsymbol{x} \right ) \nonumber \\
\leq& 2 \int \boldsymbol{1}_{\left \{ |T^*(\boldsymbol{x}) - z_\alpha| \leq \widehat{\Delta}_n \right \}} \mathrm{d} p \left ( \boldsymbol{x} \right ) \nonumber \\
=& 2 \mathbb{P}_{\boldsymbol{X} \sim p} \left ( \left | T^* (\boldsymbol{X}) - z_\alpha \right | \leq \widehat{\Delta}_n \right ). 
\label{equation: TNR bound}
\end{align}
Due to Assumption \ref{assumption: margin condition} on the event
\begin{equation}
\left \{ \widehat{\Delta}_n \leq \delta_0 \right \} 
\label{equation: Delta-hat small}
\end{equation}
we will have that $\left | \text{TNR}_{\widehat{w}} - \text{TNR}_{w^*} \right | \leq \kappa_3 \widehat{\Delta}_n$ for some absolute $\kappa_3$. We therefore focus on bounding the quantity $\widehat{\Delta}_n$. For each $w \in \Omega$ and each $j = 1, \dots, L$, we introduce the quantities:
\begin{align*}
& Y_j^{(w)} = w \left ( \log q_j \left ( X_j \mid X_{<j} \right ) \right ), \\
& \mu_j^{(w)} = \mathbb{E}_{\widetilde{X}_j \sim q_j}  w \left ( \log q_j \left ( \widetilde{X}_j \mid X_{<j} \right ) \right ), \\
& (\sigma_j^{(w)})^2 =  \textup{Var}_{\widetilde{X}_j \sim q_j} w \left ( \log q_j \left ( \widetilde{X}_j \mid X_{<j} \right ) \right ). 
\end{align*}
With this notation in place we have that for any $\boldsymbol{x}$
\begin{subequations}
\begin{align}
\widehat{\Delta}_n (\boldsymbol{x}) & \leq \frac{1}{\sqrt{L}}  \left | \sum_{j=1}^L y_j^{(\widehat{w})} - \mu_j^{(\widehat{w})} \right | \left | \left [ L^{-1} \sum_{j=1}^L \left ( \sigma_j^{(\widehat{w})} \right )^2 \right ] ^ {-1/2} - \left [ L^{-1} \sum_{j=1}^L \left ( \sigma_j^{(w^*)} \right )^2 \right ] ^{-1/2} \right | \label{equation: Delta bound 1} \\
& \hspace{2em} + \left \{ \frac{1}{L} \sum_{j=1}^L \left ( \sigma_j^{(w^*)} \right )^2 \right \}^{-\frac{1}{2}} \frac{1}{\sqrt{L}} \left | \sum_{j=1}^L \left ( Y_j^{(\widehat{w})} - Y_j^{(w^*)} \right ) - \left ( \mu_j^{(\widehat{w})} - \mu_j^{(w^*)} \right ) \right |, \label{equation: Delta bound 2}
\end{align}
\end{subequations}
where for clarity we have suppressed dependence on $\boldsymbol{x}$ above. For ease of notation put $Z_t = \log q_t ( X_t \mid X_{<t} )$ and $\widetilde{Z}_t = \log q_t ( \widetilde{X}_t \mid X_{<t} )$ where $\widetilde{X}_t \sim q_t$. Write also $\phi (\bullet) = (B_1 (\bullet), \dots, B_d (\bullet))^\top$. Recalling that $w(\bullet) = \phi (\bullet)^\top \beta$ for arbitrary $j = 1, \dots, L$ we have
\begin{align*}
\left | \left ( \sigma_j^{(\widehat{w})} \right )^2 - \left ( \sigma_j^{(w^*)} \right )^2 \right | & \leq \mathbb{E} \left [ \sum_{l_1=1}^d \sum_{l_2=1}^d \left | \widehat{\beta}_{l_1} \widehat{\beta}_{l_2} - \beta_{l_1}^* \beta_{l_2}^* \right | \left ( \left | B_{l_1} (Z_j) B_{l_2} (z_j) \right | \right . \right . \\
& \hspace{4em} + \left | \mathbb{E} \left [ B_{l_1} (\widetilde{Z}_j) \right ] \mathbb{E} \left [ B_{l_2} (\widetilde{Z}_j) \right ] \right | + 2 \left | B_{l_1} (z_j) \mathbb{E} \left [ B_{l_2} (\widetilde{Z}_j) \right ] \right | \left . \left . \right ) \right ] \\ 
& \leq \frac{\kappa_4}{2} \sum_{l_1=1}^d \sum_{l_2=1}^d \left | \widehat{\beta}_{l_1} \widehat{\beta}_{l_2} - \beta_{l_1}^* \beta_{l_2}^* \right | \\
& = \frac{\kappa_4}{2} \sum_{l_1=1}^d \sum_{l_2=1}^d \left | \widehat{\beta}_{l_1} \left ( \widehat{\beta}_{l_2} - \beta_{l_2}^* \right ) - \beta_{l_2}^* \left ( \beta_{l_1}^* - \widehat{\beta}_{l_1} \right ) \right | \\
& \leq \frac{\kappa_4}{2} \sum_{l_1=1}^d \sum_{l_2=1}^d \left | \widehat{\beta}_{l_1} \right | \left | \widehat{\beta}_{l_2} - \beta_{l_2}^* \right | + \frac{\kappa_4}{2} \sum_{l_1=1}^d \sum_{l_2=1}^d \left | \beta_{l_2}^* \right | \left | \widehat{\beta}_{l_1} - \beta_{l_1}^* \right | \\
& = \kappa_5 \sqrt{d}  \left \| \widehat{\beta} - \beta^* \right \|_1 \kappa_5 \\
& \leq d \left \| \widehat{\beta} - \beta^* \right \|_2,  
\end{align*}
for absolute $\kappa_4, \kappa_5$. Consequently, using inequality (\ref{equation: 1/sqrt bound}) and Assumption \ref{assumption: eigenvalue condition}, on the event (\ref{equation: eign bound}) we obtain that with absolute $\kappa_6$:
\begin{equation}
(\ref{equation: Delta bound 1}) \leq \kappa_6 d^3 \frac{1}{\sqrt{L}}  \left | \sum_{j=1}^L y_j^{(\widehat{w})} - \mu_j^{(\widehat{w})} \right | \left \| \widehat{\beta} - \beta^* \right \|_2. 
\label{equation: Delat bound 1-i}
\end{equation}
Observe that conditional on $\widehat{\beta}$ the term $\sum_{j=1}^L (y_j^{(\widehat{w})} - \mu_j^{(\widehat{w})})$ is a martingale with increments bounded from above almost surely by a constant independent on $\widehat{\beta}$; by the  Azuma–Hoeffding inequality the normalized sum in (\ref{equation: Delat bound 1-i}) has sub-Gaussian tails. By Lemma \ref{lemma: beta l2 bound} the term $\| \widehat{\beta} - \beta^* \|_2$ likewise has sub-Gaussian tails. Therefore on the relevant events we obtain that \eqref{equation: Delat bound 1-i} has sub-exponential tails, and consequently for any $z > 0$
\begin{equation}
\mathbb{P} \left ( \ref{equation: Delat bound 1-i} > z \right ) \leq \kappa_7 \exp \left ( -\kappa_8 \min \left \{ z^2 \frac{n}{d^{5\gamma+6}}, z \sqrt{\frac{n}{d^{5\gamma+6}}} \right \} \right )
\label{equation: Delta tail behaviour}
\end{equation}
for certain absolute $\kappa_7, \kappa_8$. Similar arguments show that the normalized sum in \eqref{equation: Delta bound 2} has the same tail behavior as \eqref{equation: Delta tail behaviour}. Since the above augments do not depend on $\boldsymbol{x}$ we obtain that \eqref{equation: Delta tail behaviour} likewise described the tail behavior of $\widehat{\Delta}_n$. Consequently, on the relevant events we obtain that
\begin{align}
\mathbb{E} \left | \text{TNR}_{\widehat{w}} - \text{TNR}_{w^*} \right | & = \int_0^\infty \mathbb{P} \left ( \left | \text{TNR}_{\widehat{w}} - \text{TNR}_{w^*} \right |  > z \right ) \mathrm{d}z \nonumber \\
& \leq \int_0^\infty \mathbb{P} \left ( \kappa_3 \widehat{\Delta}_n > z \right ) \mathrm{d} z \nonumber \\
& \leq \kappa_9 \int_0^\infty \exp \left ( -\kappa_{10} \min \left \{ z^2 \frac{n}{d^{5\gamma+6}}, z \sqrt{\frac{n}{d^{5\gamma+6}}} \right \} \right ) \mathrm{d} z \nonumber \\
& \leq \kappa_{11} \sqrt{\frac{d^{5\gamma+6}}{n}} 
\label{equation: TNR good event bound}
\end{align}
for certain absolute $\kappa_9, \kappa_{10}, \kappa_{11}$. When the events \eqref{equation: Delta-hat small} and \eqref{equation: 1/sqrt bound} do not hold from \eqref{equation: TNR bound} we have the conservative bound $\mathbb{E} \left | \text{TNR}_{\widehat{w}} - \text{TNR}_{w^*} \right | \leq 1$. However, the probability of these events not holding is smaller than \eqref{equation: TNR good event bound} up to constants. Therefore, the stated result follows by the law of total expectation. 
\end{proof}

\subsection{Smooth witness functions} \label{sec: smooth witness function}

An important quality of spline estimators, which in part motivated our choice of estimator for the witness function, is their ability to learn smooth regression functions at optimal rates. Following the proof of Theorem 2.1 in \cite{chen2015optimal} one can show that when the optimal witness (that is, the function minimizing the functional \eqref{equation: limit functional} among all functions with bounded $\ell_2$ norm) is $\beta$-H{\"o}lder smooth, the estimated witness function attains the rate
\begin{align*}
\sup_z \left | \widehat{w} (z) - w^*(z) \right | = O_P \left ( \sqrt{\frac{d \log (n)}{n}} \right ) + O(d^{-\beta}). 
\end{align*}
Consequently, choosing the number of spline bases as $d = \Theta ( (n / \log(n))^{\frac{1}{2\beta+1}} )$ the sup norm loss will be of the order $O_P \left ( (\log(n) / n )^\frac{\beta}{2 \beta + 1} \right )$. This rate was shown to be optimal by \cite{stone1982optimal}. One can therefore show that choosing the number of spline bases in this way the expected true negative rate will be lower bounded as 
\begin{eqnarray*}
\mathbb{E}(\text{TNR}_{\widehat{w}}) \geq \text{TNR}_{w^*} - O \left ( (\log(n) / n)^\frac{\beta}{2 \beta + 1} \right ). 
\end{eqnarray*}
In this section we provide a sketch of this result. 

We argue along the same lines as the proof of Theorem \ref{thm:TNR}. Since $\mathbb{E} (\text{TNR}_{\widehat{w}}) \geq \text{TNR}_{w^*} - \mathbb{E} (|\text{TNR}_{\widehat{w}} - \text{TNR}_{w^*}|)$ is enough to upper bound $\mathbb{E} (|\text{TNR}_{\widehat{w}} - \text{TNR}_{w^*}|)$. Define the quantity $\widehat{\Delta}_n = \sup_{\boldsymbol{x}} |\widehat{T}_n (\boldsymbol{x}) - T^* (\boldsymbol{x})|$, and introduce the event 
\begin{equation}
A(\kappa_1) = \left \{ \widehat{\Delta}_n \leq \kappa_1 \times (d^{-\beta} + \left ( \log (n) / n  \right )^{\frac{\beta}{2 \beta + 1}}) \right \}
\label{equation: A kappa event}
\end{equation}
Therefore, with arbitrary $\kappa_1$ for some absolute $\kappa_2$ we have that
\begin{subequations}
\begin{align}
\mathbb{E} (|\text{TNR}_{\widehat{w}} - \text{TNR}_{w^*}|) & = \mathbb{E} (|\text{TNR}_{\widehat{w}} - \text{TNR}_{w^*}| \times \mathbf{1}_{\{ A(\kappa_1) \}}) + \mathbb{E} (|\text{TNR}_{\widehat{w}} - \text{TNR}_{w^*}| \times \mathbf{1}_{\{ \neg A(\kappa_1) \}}) \nonumber \\ 
& \leq 2 \mathbb{E} ( \mathbb{P}_{\boldsymbol{X} \sim p}  ( | T^* (\boldsymbol{X}) - z_\alpha | \leq \widehat{\Delta}_n ) \times \mathbf{1}_{\{ A(\kappa_1) \}} ) + \mathbb{P} (\neg A(\kappa_1)) \label{equation: new TNR bound (i)} \\
& \leq \kappa_1 \mathbb{E} (\hat{\Delta}_n \mid  A(\kappa_1) ) + \mathbb{P} (\neg  A(\kappa_1)) \label{equation: new TNR bound (ii)}
\end{align}
\end{subequations}
where \eqref{equation: new TNR bound (i)} follows from the arguments leading up to \eqref{equation: TNR bound} and \eqref{equation: new TNR bound (ii)} holds on the event \eqref{equation: Delta-hat small}. Using the mean value theorem and Assumption \ref{ass:equal-variance} one can show that up to constants $\hat{\Delta}_n$ is smaller than $\sup_z \left | \widehat{w} (z) - w^*(z) \right |$, therefore by \eqref{equation: A kappa event} the first term in \eqref{equation: new TNR bound (ii)} will be of the order $O(d^{-\beta} + \left ( \log (n) / n  \right )^{\frac{\beta}{2 \beta + 1}})$. Examining the proof of Theorem 2.1 in \cite{chen2015optimal} it can be seen that for any $\kappa_3 > 0$ on may choose $\kappa_1$ so that $\mathbb{P} (\neg  A(\kappa_1)) < n^{-\kappa_3}$. Therefore, we can choose $\kappa_1$ appropriately so that the second term in \eqref{equation: new TNR bound (ii)} will be of the order $o(\left ( \log (n) / n  \right )^{\frac{\beta}{2 \beta + 1}})$. Finally letting $d = \Theta ( (n / \log(n))^{\frac{1}{2\beta+1}} )$ yields the desired result. 

As mentioned in the main text, when the optimal witness function is believed to be smooth but the order of the smoothness is not known the number of spline bases can be chosen in a data driven way via Lepski's method \citep{lepski1997optimal}. 

\section{Additional numerical results}

\subsection{Results on additional open-source models}

\begin{table}[H]
\small
\centering
\caption{Performance on three open-source LLMs (Qwen2.5, Mistral, LLaMA3) across five datasets.}\label{tab:white-box-advanced}
\begin{tabular}{llccccc}
\toprule
\textbf{Model} & \textbf{Method} & \textbf{XSum} & \textbf{Writing} & \textbf{Essay} & \textbf{SQuAD} & \textbf{Yelp} \\
\midrule
\multirow{10}{*}{Qwen2.5}
& Likelihood & 0.6175	& 0.7041	& 0.6755 & 0.5183	& 0.6793	\\
& Entropy & 0.5403	& 0.5043	& 0.5073 & 0.5232	& 0.5236	\\
& LogRank & 0.6325	& 0.7150	& 0.6958 & 0.5166	& 0.6943	\\
& Binoculars & 0.6297	& 0.7578	& 0.8018 & 0.6164	& 0.7199	\\
& TextFluoroscopy & 0.5778	& 0.5110	& 0.5638 & 0.5383	& 0.5060	\\
& RADAR & 0.6469	& 0.6190	& 0.6061 & \textbf{0.6262}	& 0.6276 	\\
& ImBD & 0.6653	& 0.6584	& 0.7874 & 0.5168	& 0.7392 	\\
& BiScope & 0.6320	& 0.6610	& 0.6625 & 0.6250	& 0.7050 	\\
\cdashline{2-7}
& Fast-DetectGPT & 0.7523	& 0.8513	& 0.8347 & 0.5016	& 0.8465	\\
& AdaDetectGPT & \textbf{0.7963}	& \textbf{0.8965}	& \textbf{0.8799} & 0.6044	& \textbf{0.8915}	\\
\cdashline{2-7}
& Relative & 17.7682	& 30.3912	& 27.3167 & 0.6431	& 29.3165	\\
\midrule
\multirow{10}{*}{Mistral}
& Likelihood & 0.7409 & 0.8643 & 0.8667 & 0.7068 & 0.7598 \\
& Entropy & 0.5290 & 0.5420 & 0.6052 & 0.5070 & 0.5103 \\
& LogRank & 0.7270 & 0.8446 & 0.8467 & 0.7041 & 0.7499 \\
& Binoculars & 0.7218 & 0.8440 & 0.8314  & 0.7258 & 0.7502 \\
& TextFluoroscopy & 0.6210 & 0.5555 & 0.5127 & 0.5772 & 0.5109 \\
& RADAR & 0.6518 & 0.6537 & 0.6292 & 0.6055 & 0.6018 \\
& ImBD & 0.7683 & 0.8391 & 0.8631 & 0.8073 & 0.7440 \\
& BiScope & 0.7320 & \textbf{0.8740} & 0.9000 & 0.7283 & 0.7840 \\
\cdashline{2-7}
& Fast-DetectGPT & 0.8922 & 0.8151 & 0.9052 & 0.8812 & 0.8902 \\
& AdaDetectGPT & \textbf{0.8944} & 0.8275 & \textbf{0.9069} & \textbf{0.8851} & \textbf{0.9026} \\
\cdashline{2-7}
& Relative & 2.0423 & 6.6718 & 1.8332 & 3.3051 & 11.2763 \\
\midrule
\multirow{10}{*}{LLaMA3}
& Likelihood & 0.8236 & 0.8929 & 0.9115 & 0.7071 & 0.8915	\\
& Entropy & 0.5545 & 0.5732 & 0.5626 & 0.5047 & 0.5010	\\
& LogRank & 0.8634 & 0.9122 & 0.9351 & 0.7422 & 0.9146	\\
& Binoculars & 0.9546 & 0.9845 & \textbf{0.9949}  & 0.9469 & 0.9854	\\
& TextFluoroscopy & 0.5479 & 0.5274 & 0.5478 & 0.5535 & 0.5362	\\
& RADAR & 0.7154 & 0.7285 & 0.7835 & 0.6619 & 0.7875	\\
& ImBD & 0.8643 & 0.8837 & 0.8928 & 0.7596 & 0.8677	\\
& BiScope & 0.9450 & 0.9830 & 0.9900 & 0.8783 & 0.9860	\\
\cdashline{2-7}
& Fast-DetectGPT & 0.9734 & 0.9879 & 0.9901 & 0.9488 & 0.9882	\\
& AdaDetectGPT & \textbf{0.9782} & \textbf{0.9893} & 0.9924 & \textbf{0.9553} & \textbf{0.9900}	\\
\cdashline{2-7}
& Relative & 18.0119 & 11.6202 & 22.7215 & 12.6288 & 15.7610	\\
\bottomrule
\end{tabular}
\end{table}

\subsection{Additional results on black-box setting}

\begin{table}[H]
\centering
\caption{AUC scores of various detectors to detect text generated by Gemini-2.5 across datasets.}\label{tab:advanced-gemini}
\begin{adjustbox}{width=0.68\textwidth}
\begin{tabular}{l|ccccc}
\toprule
\textbf{Method} & XSum & Writing & Yelp & Essay & Avg. \\
\midrule
RoBERTaBase   & 0.5311 & 0.5202 & 0.5624 & 0.7279 & 0.5854 \\
RoBERTaLarge  & 0.6583 & 0.5888 & 0.6029 & 0.8180 & 0.6845 \\
Likelihood    & 0.7127 & 0.7547 & 0.6566 & 0.7565 & 0.7201 \\
Entropy       & 0.5754 & 0.5088 & 0.6023 & 0.6038 & 0.5726 \\
LogRank       & 0.6084 & 0.5743 & 0.6896 & 0.7504 & 0.6557 \\
LRR           & 0.5960 & 0.5382 & 0.5580 & 0.6703 & 0.5906 \\
Binoculars    & \textbf{0.8500} & 0.9453 & \textbf{0.9698} & 0.9908 & \textbf{0.9390} \\
RADAR         & 0.8184 & 0.5152 & 0.6300 & 0.5891 & 0.6382 \\
BiScope       & 0.7633 & 0.6800 & 0.7097 & 0.8167 & 0.7642 \\
\cdashline{1-6}
Fast-DetectGPT & 0.8404 & 0.9443 & 0.9695 & 0.9914 & 0.9364	\\
AdaDetectGPT  & 0.8432 & \textbf{0.9484} & 0.9644 & \textbf{0.9947} & 0.9377 \\
\cdashline{1-6}
Relative  (\parbox[c]{1em}{\tikz{\hspace*{3.9pt}\draw[blue, -latex] (0,0) -- (0,0.3);}})    & 1.7544 & 7.4163 & --- & 37.8238 & 1.9916 \\
\bottomrule
\end{tabular}
\end{adjustbox}
\end{table}

\subsection{Computational analysis}\label{sec:computational-analysis}

In this part, we study the runtime for learning the witness function. From Table~\ref{tab:runtime-d}, the runtime of AdaDetectGPT is around 44 seconds and changes marginally with respect to $d$. This is because we can use a closed-form expression to learn a witness function. This time is nearly negligible compared with the time required to compute logits, which involves feeding tokens from multiple passages into LLMs. Furthermore, the training time when $n$ increases is shown in Table~\ref{tab:runtime-imbd}, and we can see that the runtime for training is typically no more than one minute.

\begin{table}[H]
\centering
\caption{Runtime scale with $d$.}\label{tab:runtime-d}
\begin{tabular}{lccccc}
\toprule
$d$ & 4 & 8 & 12 & 16 & 20 \\
\midrule
Runtime & 44.48 & 44.62 & 44.73 & 44.92 & 45.00 \\
\bottomrule
\end{tabular}
\end{table}
\begin{table}[H]
\centering
\caption{Runtime (memory in parenthesis) scale with $n$. The runtime is measured in seconds and the memory is measured in GB. }\label{tab:runtime-imbd}
\setlength{\tabcolsep}{3pt}
\begin{tabular}{lcccccc}
\toprule
$n$ & 100 & 150 & 200 & 250 & 300 & 350 \\
\midrule
Runtime (Memory) & 9.28(0.36) & 23.45(0.37) & 40.19(0.37) & 44.25(0.37) & 59.57(0.60) & 69.56(0.37) \\
\bottomrule
\end{tabular}
\end{table}

\subsection{Sensitivity analysis}\label{sec:tuning}

Since AdaDetectGPT requires training a witness function, we examine three factors influencing its performance: (1) the size of the training set; (ii) tuning parameters for generating B-spline basis and (iii) distribution shift between training and test data. 

\textbf{Robust to training data sizes}. We evaluate AdaDetectGPT across varying dataset sizes by setting $n_1 = n_2 \in \{100, 200, 300, 400, 500, 600\}$ for both human- and machine-generated texts. 
Figure~\ref{fig:sample} demonstrates that AdaDetectGPT clearly outperforms FastDetectGPT when sample size is large. This is expected because a larger sample size leads to a more accurate estimation of $w$. Notably, even with limited data $n_1=n_2=100$, AdaDetectGPT maintains superior accuracy compared to baseline methods, though the performance gap decreases. These results highlight our method's effectiveness on learning the witness function. 

\textbf{Insensitivity to tuning parameters}. B-spline relies on two critical tuning parameters: (i) the number of basis functions (\texttt{n\_base}) and (ii) the maximum polynomial order. Our experiments fix one parameter while varying the other (with \texttt{n\_base}=16 or order=2 as defaults). As shown in Figure~\ref{fig:tuning} in Appendix, AdaDetectGPT achieves the highest AUC scores so long as \texttt{n\_base} $\geq 4$. Besides, enlarging \texttt{n\_base} improves the AUC of AdaDetectGPT although the improvement becomes marginal when \texttt{n\_base} $\geq 16$. Figure~\ref{fig:tuning} also shows that increasing the polynomial order from linear to quadratic visibly improve the performance; while increasing order from quadratic to cubic/quartic has a limited gain. Finally, even when the B-spline basis is set to a piecewise linear function, our method still outperform all baselines. 

\textbf{Robust against distribution shift}. We create training datasets with different distributions from the test data by varying the number of human prompt tokens in machine-generated text. In contrast, for the test data, the number of human prompt tokens are fixed. As shown in Figure~\ref{fig:dist-shift}, AdaDetectGPT demonstrates high robustness to the distributional discrepancy between training and test data. It achieves the highest AUC across all experimental setup.

\begin{figure}[htbp]
    \centering
    \includegraphics[width=1.0\linewidth]{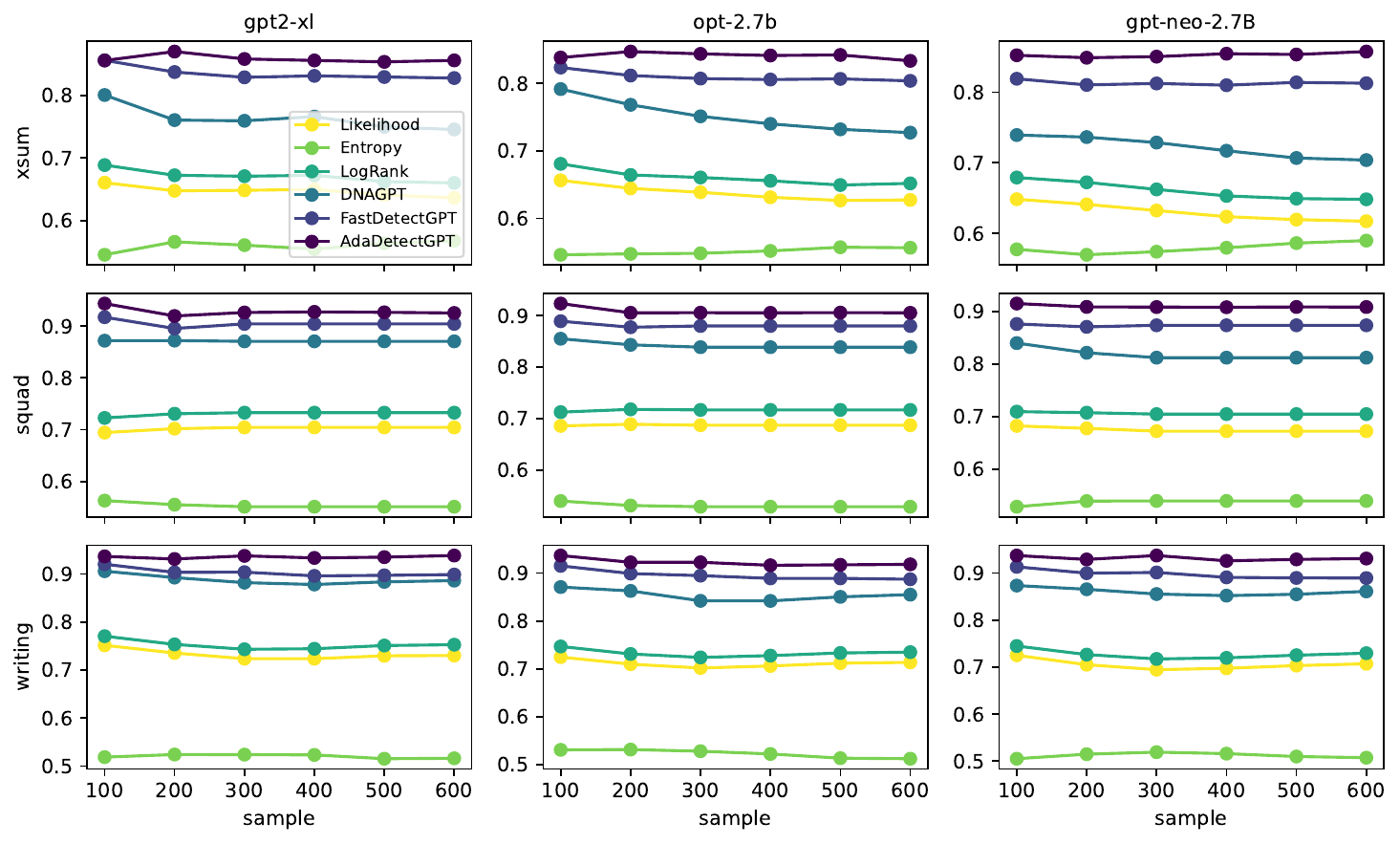}
    \vspace*{-20pt}
    \caption{Classification accuracy versus the sample size for training $w$. We omit DetectGPT, NPR, and DNA in this experiments as they are time-consuming.}
    \label{fig:sample}
\end{figure}

\begin{figure}
    \centering
    \includegraphics[width=1.0\linewidth]{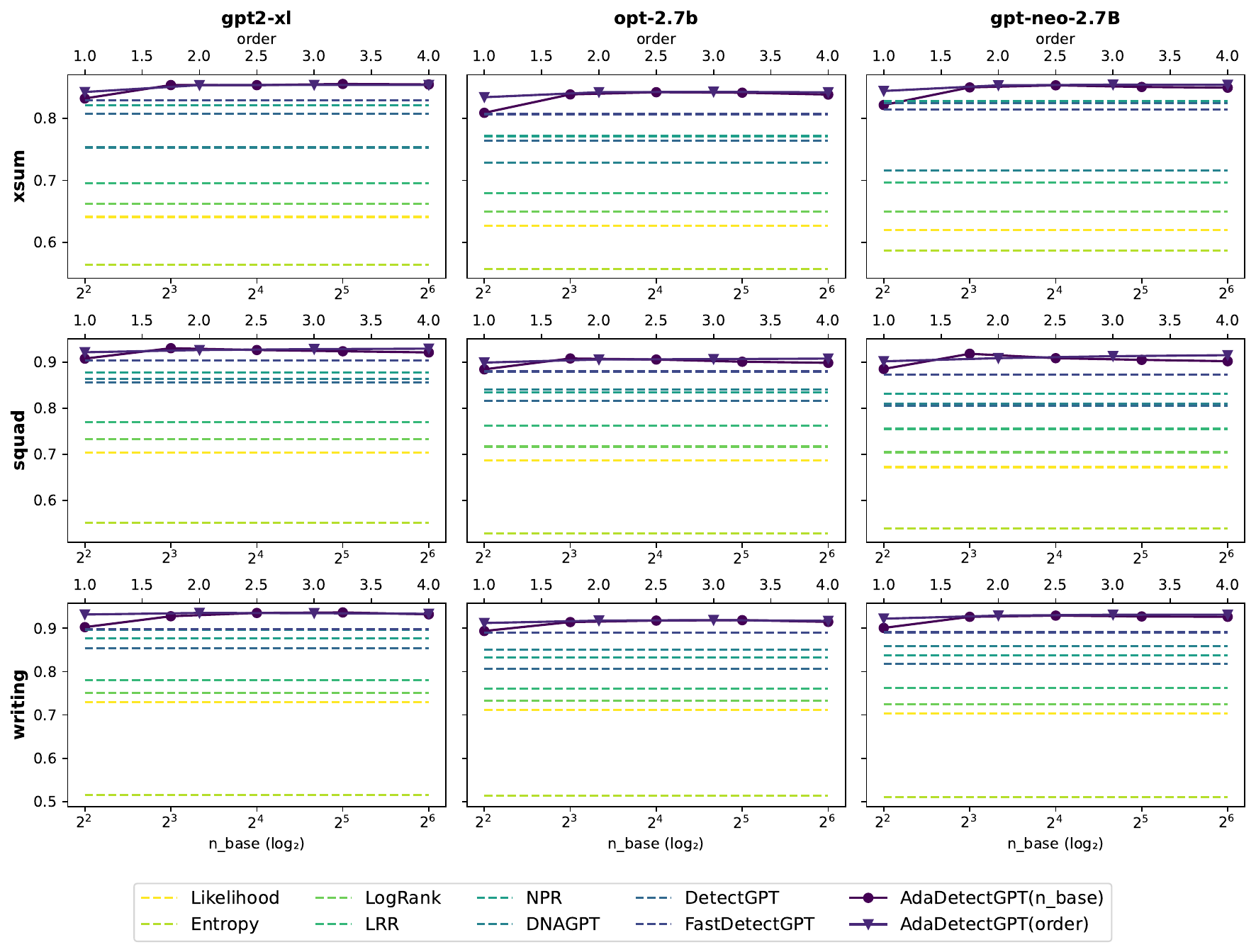}
    \vspace*{-20pt}
    \caption{The classification accuracy of AdaDetectGPT and baseline methods. AdaDetectGPT(\texttt{n\_base}) present the AUC when the number of basis in B-spline increases as 4, 8, 16, 32, 64 (bottom $x$-axis); while AdaDetectGPT(\texttt{order}) shows the AUC when the maximum order of basis in B-spline increases from 1 to 4 (top $x$-axis). The AUC of baseline methods are presented by dash lines. }
    \label{fig:tuning}
\end{figure}

\begin{figure}
    \centering
    \includegraphics[width=1.0\linewidth]{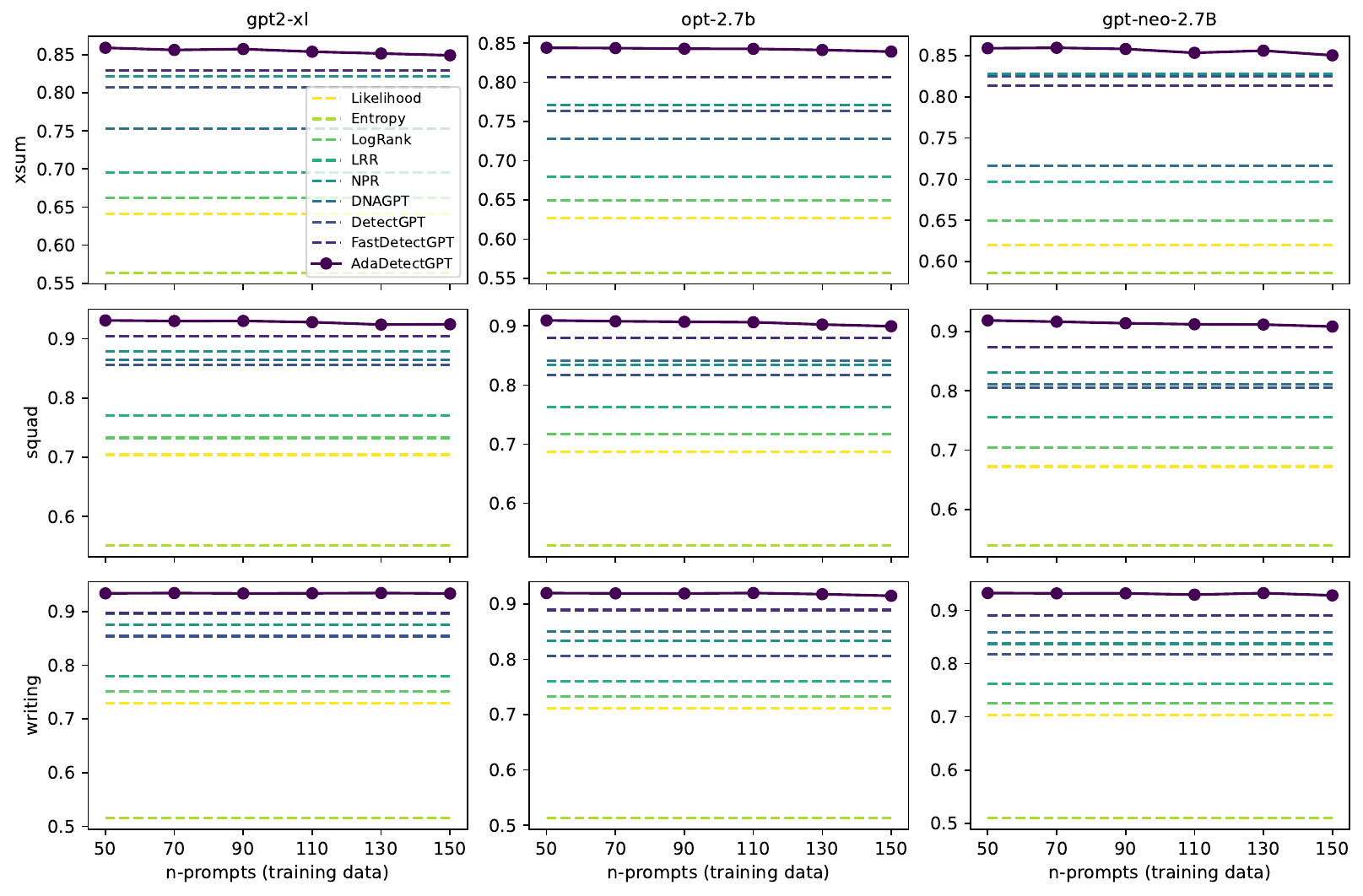}
    \vspace*{-20pt}
    \caption{The classification accuracy of AdaDetectGPT when the number of human prompts changes. The AUC of baseline methods are presented by dash lines. }
    \label{fig:dist-shift}
\end{figure}

\subsection{Detecting open-source models in black-box setting}\label{sec:addblackbox}

\begin{table}[H]
    \caption{Zero-shot detection accuracy on five source models under the black-box setting. $^\dagger$: use two surrogate models for sampling and scoring, where the sampling model is GPT-J while the scoring model is GPT-Neo.}\label{tab:main_results_blackbox_details}
    \centering\small
    \begin{tabular}{llccccc}
        \toprule
        \multirow{2}{*}{\bf Dataset} & \multirow{2}{*}{\bf Method} & \multicolumn{5}{c}{\bf Source Model} \\
         &  & GPT-2 & OPT-2.7 & GPT-Neo & GPT-NeoX & \bf Avg. \\   
        \midrule
        \multirow{6}{*}{SQuAD} 
        & FastDetectGPT & 0.6181 & 0.6495 & 0.6230 & 0.6910 & 0.6813 \\
        & AdaDetectGPT & 0.6920 & 0.7195 & 0.7382 & 0.7338 & 0.7460 \\
        \cdashline{2-7}
        & Relative & 19.3570 & 19.9651 & 30.5609 & 13.8495 & 20.2957 \\
        \cline{2-7}
        & FastDetectGPT$^\dagger$ & 0.8145 & 0.8166 & 0.9220 & 0.7519 & 0.8188 \\
        & AdaDetectGPT$^\dagger$ & \textbf{0.8249} & \textbf{0.8308} & \textbf{0.9273} & \textbf{0.7609} & \textbf{0.8300} \\
        \cdashline{2-7}
        & Relative & 5.6301 & 7.7245 & 6.7968 & 3.6121 & 6.2106 \\
        \midrule
        \multirow{6}{*}{Writing} 
        & FastDetectGPT & 0.7662 & 0.7918 & 0.7685 & 0.8022 & 0.8028 \\
        & AdaDetectGPT & 0.8306 & 0.8529 & 0.8555 & \textbf{0.8587} & 0.8636 \\
        \cdashline{2-7}
        & Relative & 27.5699 & 29.3365 & 37.6112 & 28.5350 & 30.8124 \\
        \cline{2-7}
        & FastDetectGPT$^\dagger$ & 0.8565 & 0.8497 & 0.9215 & 0.8182 & 0.8582 \\
        & AdaDetectGPT$^\dagger$ & \textbf{0.8780} & \textbf{0.8737} & \textbf{0.9386} & 0.8567 & \textbf{0.8849} \\
        \cdashline{2-7}
        & Relative & 14.9666 & 15.9741 & 21.7742 & 21.1856 & 18.8023 \\        
        \midrule
        \multirow{6}{*}{XSum} 
        & FastDetectGPT & 0.5919 & 0.6445 & 0.5718 & 0.6389 & 0.6468 \\
        & AdaDetectGPT & 0.6795 & 0.7238 & 0.6879 & 0.7045 & 0.7261 \\
        \cdashline{2-7}
        & Relative & 21.4569 & 22.2991 & 27.1129 & 18.1580 & 22.4439 \\
        \cline{2-7}
        & FastDetectGPT$^\dagger$ & 0.8145 & 0.8166 & 0.9220 & 0.7519 & 0.8188 \\
        & AdaDetectGPT$^\dagger$ & \textbf{0.8249} & \textbf{0.8308} & \textbf{0.9273} & \textbf{0.7609} & \textbf{0.8300} \\
        \cdashline{2-7}
        & Relative & 9.8060 & 10.5637 & 10.2543 & 8.1057 & 11.1574 \\
        \bottomrule
    \end{tabular}
\end{table}

\section{Broader impact and limitation}\label{sec:limitation}
AdaDetectGPT is a computationally and statistically efficient detector for machine-generated text, thus safeguarding AI systems against fake news, disinformation, and academic plagiarism. 

Despite AdaDetectGPT's strong empirical performance in the black-box setting, its theoretical guarantees are mainly established in the white-box setting. Even when restricting to the white-box setting, LLM text generation often involves sampling parameters (e.g., \texttt{temperature} and \texttt{top\_k}). Using different parameter values can cause the sampling distribution to deviate from that of the target model we aim to detect. This mismatch invalidates MCLT in practice. Fortunately, we observe that the shape of our statistic remains similar, but shifts toward a positive mean (see Figure~\ref{fig:normal-under-h1}), implying that FNR control under MCLT remains valid, although being more conservative.

\begin{figure}[H]
    \centering
    \includegraphics[width=1.0\linewidth]{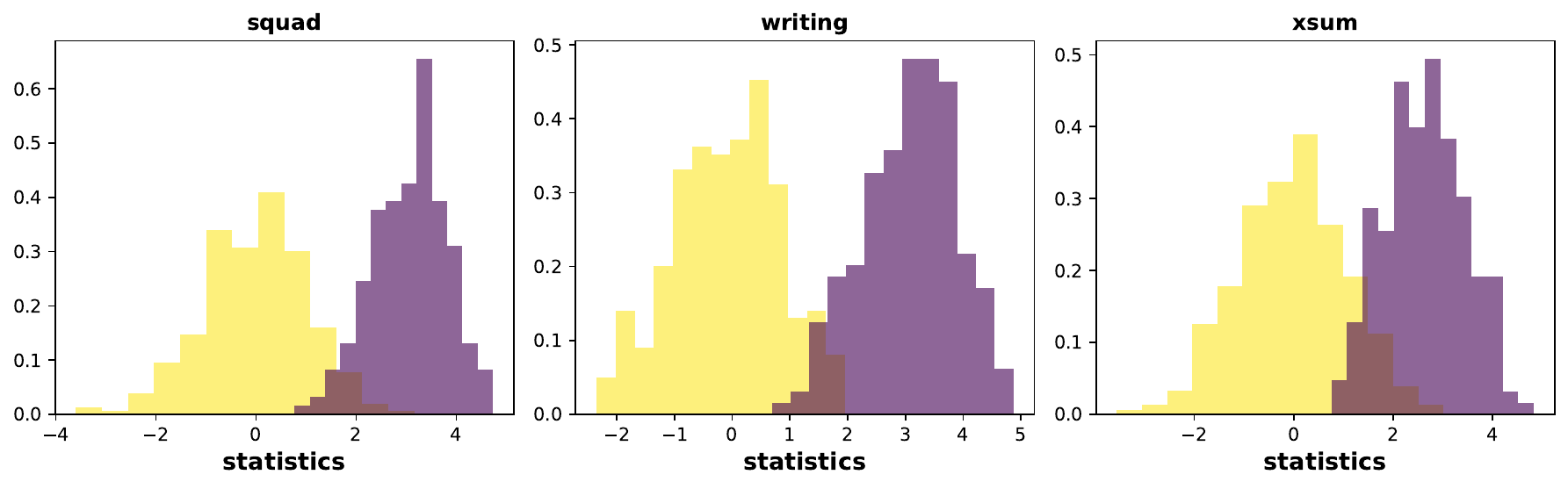}
    \vspace*{-18pt}
    \caption{Histogram of our statistics in three datasets. Each panel visualizes the histogram in one dataset. The yellow histogram corresponds to the case when the sampled texts exactly follow the conditional probability of the source model, while purple histogram corresponds to text drawn with from a distribution with different sampling temperatures.}
    \label{fig:normal-under-h1}
\end{figure}


\end{document}